\theoremstyle{plain}
\newtheorem{theorem}{Theorem}[section]
\newtheorem{lemma}[theorem]{Lemma}
\theoremstyle{definition}
\theoremstyle{remark}
\def\eqref#1{equation~\ref{#1}}
\def\1{\bm{1}}
\def\ru{{\textnormal{u}}}
\def\rv{{\textnormal{v}}}
\def\rw{{\textnormal{w}}}
\def\rvu{{\mathbf{i}}}
\def\rvu{{\mathbf{u}}}
\def\rvw{{\mathbf{w}}}
\def\vtheta{{\bm{\theta}}}
\def\vlambda{{\bm{\lambda}}}
\def\vbeta{{\bm{\beta}}}
\def\vh{{\bm{h}}}
\def\vx{{\bm{x}}}
\DeclareMathAlphabet{\mathsfit}{\encodingdefault}{\sfdefault}{m}{sl}
\SetMathAlphabet{\mathsfit}{bold}{\encodingdefault}{\sfdefault}{bx}{n}
\def\gB{{\mathcal{B}}}
\def\gC{{\mathcal{C}}}
\def\gE{{\mathcal{E}}}
\def\gG{{\mathcal{G}}}
\def\gV{{\mathcal{V}}}
\def\gHG{{\mathcal{HG}}}
\def\bp{{\gB}}
\def\pg{{\gC}}
\newcommand{\ZP}{\mathbb{Z}_+}
\newcommand{\softmax}{\mathrm{softmax}}
\newcommand{\parents}{Pa} %
\renewcommand{\captionsize}{\footnotesize}
\icmltitlerunning{On Hierarchical Multi-Resolution Graph Generative Models}
\begin{document}

\twocolumn[
\icmltitle{On Hierarchical Multi-Resolution Graph Generative Models}

\icmlsetsymbol{equal}{*}

\begin{icmlauthorlist}
\icmlauthor{Mahdi Karami}{MHD}
\icmlauthor{Jun Luo}{Ark}
\end{icmlauthorlist}

\icmlaffiliation{MHD}{Correspondence to: Mahdi Karami $<$mahdi.karami@ualberta.ca$>$.}
\icmlaffiliation{Ark}{Noah's Ark Lab, Toronto, Canada}

\icmlcorrespondingauthor{}{mahdi.karami@ualberta.ca}

\icmlkeywords{Machine Learning, Graph Neural Network, Generative Models}

\vskip 0.3in
]

\printAffiliationsAndNotice{}  %

\begin{abstract}
In real world domains, most graphs naturally exhibit a  hierarchical structure.
However, data-driven graph generation is yet to effectively capture such structures.
To address this, we propose a novel approach that recursively generates community structures at multiple resolutions, with the generated structures conforming to training data distribution at each level of the hierarchy.
The graphs generation is designed as a sequence of coarse-to-fine generative models allowing
for parallel generation of all sub-structures, resulting in a high degree of scalability.
 Our method demonstrates generative performance improvement on multiple graph datasets.
\end{abstract}

\section{Introduction} \label{sec:Intro}

Graphs are ubiquitously relevant and useful for representing relations.
Data-driven approaches to graph generation is both challenging and and highly valuable for various applications \citep{dai2020scalableGraphGen}.
These include document generation \citep{blei2003LDA}, discovering new molecular and chemical structures,
generation and analysis of realistic data networks, knowledge graph generation for recommendation systems \citep{he2021pgg},
synthesizing scene graphs in computer vision and virtual reality \citep{habitat19iccv, ramakrishnan2021hm3d},
and generation of interactive scenarios for autonomous driving simulation platforms such as CARLA \citep{dosovitskiy17CARLA} and SMARTS \citep{zhou2020smarts}.

There are natural hierarchical community structures in all the domains mentioned above, such as paragraphs, sentences, and words of a document, communities in user-item graphs, room of an apartment on a floor, columns of cars and groups of pedestrians on a city block.
On the one hand, higher level relations, such as how close two groups of pedestrians are, reflect high-level interactions between communities.
On the other hand, low-level relations and distributions, such as how dense a group of pedestrians is, reflect the local  structures.
Realistic graph generation models must learn both of these interactions and be able to capture the cross-level relations.
While hierarchical, multi-resolution generative models were developed for specific data types such as voice \cite{oord2016wavenet}, image \cite{reed2017parallel-multiscale-DE, karami2019invertible} and molecular motifs \cite{jin2020hierarchicalMolec}, these methods rely on domain-specific priors that are not be suitable  for general graphs.
To the best of our knowledge, there exists no generation models suitable for generic graphs that are both learned from data and handle interacting semantic hierarchies.

Graph generative models has been studied extensively.
Classical methods from \cite{erdos1960evolution} and \cite{barabasi1999emergence} are based on random graph theory that can only capture a set of hand engineered graph statistics.
\citet{leskovec2010kronecker} proposed a scalable generative model based on the Kronecker product of matrices that can learn some graph properties such as degree distribution, but is very limited in modeling the underlying distributions.
These models fail to capture important graph properties such as community structure in large family of graphs.
Motivated by recent advances in recurrent neural networks (RNN) and graph neural networks (GNN),
various neural network based generative models has been proposed \cite{you2018graphrnn, li2018learning, liao2019GRAN}.
These methods, which belong to the family of autoregressive algorithms, generate graphs as a sequence of edges or nodes so they highly rely on a appropriate node ordering and they don't take into account the community structures present within graphs.
Moreover, due to their recursive nature they are  computationally expensive for moderately large graphs.

Herein, we propose an efficient hierarchical Multi-Resolution Generative (MRG) model
to address the limitations of existing generative models
by capturing community structures and cross-level interactions.
The proposed model captures the hierarchical relations by allowing the representation of a node at each level to depend not only on its community but also on its corresponding super-node at the higher level.
This approach allows the generation process at the lower level to be independent of the specific ordering of the nodes at the higher levels, reducing the overall sensitivity to initial random permutations.
The graphs generation is designed as a sequence of  coarse-to-fine generative models where given a higher level (lower resolution) graph, the generation of the communities in the lower level can be performed in parallel, resulting in a high degree of scalability through parallelism.
The output distribution of edges are parameterized by a multinomial distribution and a recursive factorization is derived for this distribution which enable the use of existing autoregressive methods for generating communities.
This results an expressive distribution that can additionally models the graphs with integer-valued edge weights.

\begin{figure*}[t!]
\centering
	\subfloat[\label{fig:HG}]{\includegraphics[width=0.3\linewidth]{./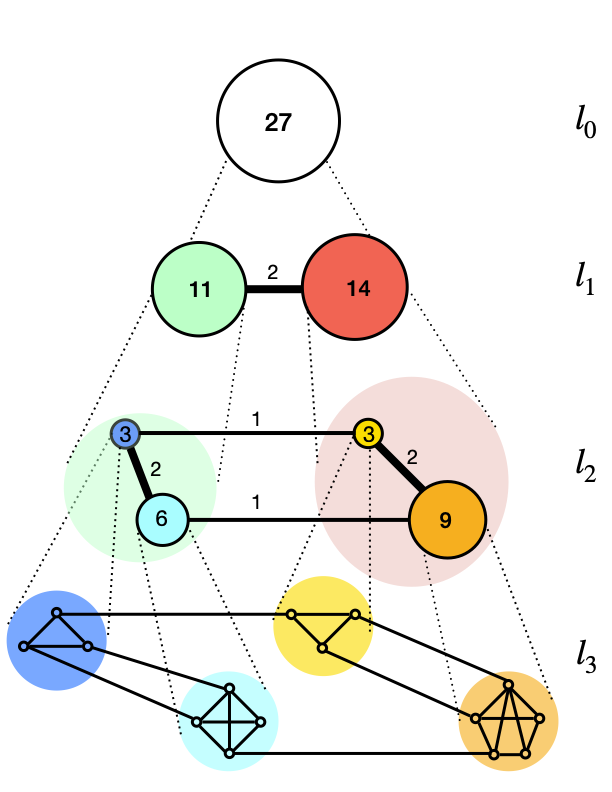}} \hfill
 	\subfloat[\label{fig:ADJ}]{\includegraphics[width=0.3\linewidth]{./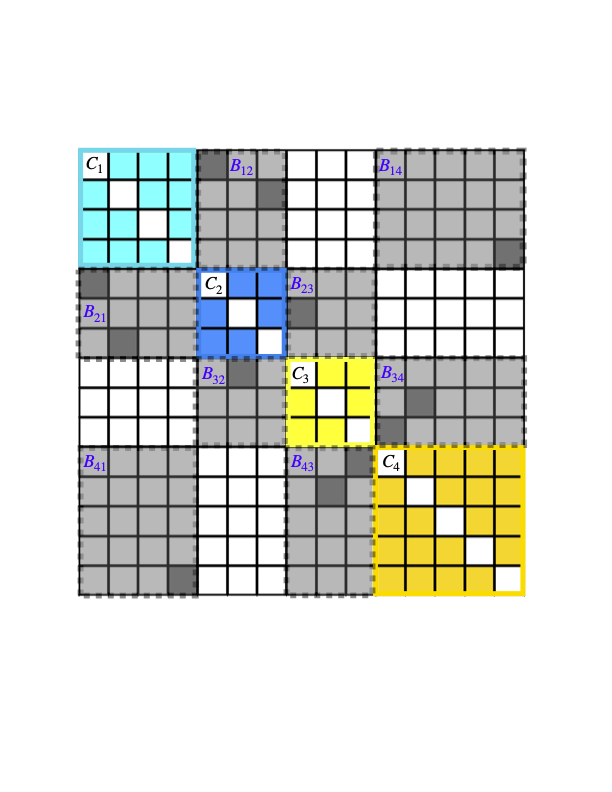}} \hfill
   	\subfloat[\label{fig:MN_BN}]{\includegraphics[width=0.27\linewidth]{./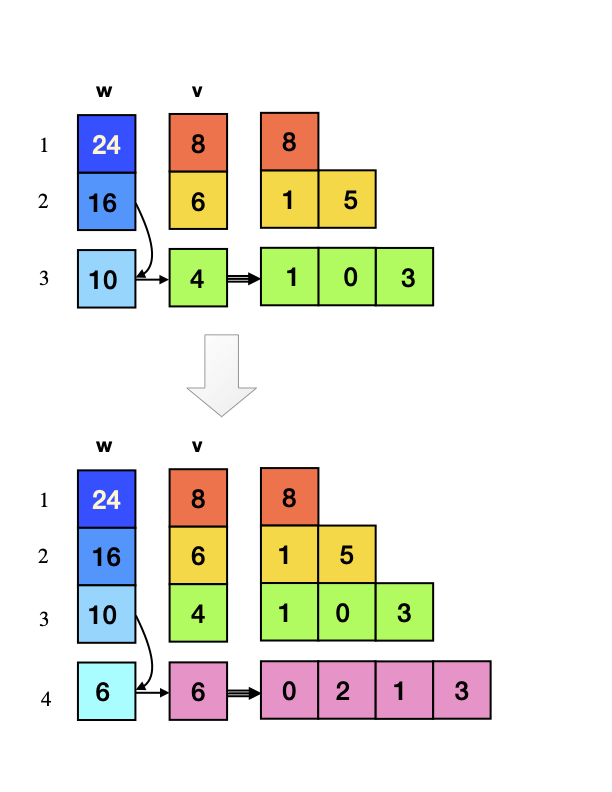}} %
        \caption{
        \captionsize
        (a) A sample hierarchical graph with 3 levels is shown. Communities are shown in different colors and the weight of a node and the weight of an edge in a higher level, represent the sum of the in-community  connections in the corresponding community and the total weight of corresponding bipartite, respectively. Node size and edge width indicate their weights.
        (b)	The matrix shows corresponding adjacency  of the graph $\gG^2$ matrix where each of its sub-graphs corresponds to a block in the adjacency matrix, partition graphs are in shown different colors and bipartites are colored in gray.
        (c) Decomposition of multinomial distribution as a recursive \textit{stick-breaking} process where at each iteration, first a fraction of the remaining weights $\rw_m$ is allocated to the $m$-th row (the $m$-th node in the sub-graph) and then this fraction $\rv_m$ is distributed among that row of lower triangular adjacency matrix, $\hat{A}$.
	}
	\vskip -.1in
\end{figure*}

\section{Problem Formulation}

A graph $ \gG = \left(\gV,~ \gE \right) $ is defined as a set of nodes (vertices) $\gV$ and edges $\gE$ with sizes $n = |\gV|$ and $m=|\gE|$ and adjacency matrix $\Amat^{\pi}$ for the node ordering $\pi$.
A graph can be decomposed to
\textit{partition graphs} (a.k.a. community or cluster), denoted by $ \pg_i = \left(\gV(\pg_i),~ \gE(\pg_i)\right) $
with adjacency matrix $\Amat_{i}$,
and %
\textit{bipartite graphs}, denoted by  $ \bp_{ij} = \left(\gV(\pg_i),~\gV(\pg_j),~ \gE(\bp_{ij})\right) $.
A bipartite is composed of the cross links of two neighboring partition graphs and its adjacency matrix is denoted by $\Amat_{ij}$.
Each partition graph can be aggregated to a super-node and for each bipartite a super-edge is added to link neighboring communities. This results in a coarsened graph at the higher level.
Herein, the levels are indexed by superscripts.
Formally, each partition graph at level $l$, $\pg_{i}^l$, is mapped to a node the higher level graph,
also called its parent node, $v^{l-1}_{i} = \parents(\pg_{i}^l) $ and
each bipartite at level $l$ %
is represented by an edge in the higher level, also called its parent edge, $ e^{l-1}_{i} = \parents(\bp_{ij}^l) = <v^{l-1}_{i}, v^{l-1}_{j}> $.
The weights of the self edges of these parent nodes and edges are determined by the sum of the weights of the edges within the partition graph and bipartite, \ie $ w^{l-1}_{ii} = \sum_{e\in \gE(\pg_i^l)} w_e$ and $w^{l-1}_{ij} = \sum_{e\in \gE(\bp_{ij}^l)} w_e$,  respectively.
This is how an coarsened graph $\gG^{l-1}$ with integer-valued weighted edges is created in the higher level.

This process continues recursively in a bottom-up manner until a single node graph $\gG^0$ is obtained
which results in a hierarchical graph (hyper-graph) $\gHG$, defined by the set of graphs in all levels of abstractions
, $\gHG := \{ \gG^0,  ...., \gG^{L-1}, \gG^L \} $,
where leaf level $\gG^L$ is the final graph that is being generated and $\gG^0$is the root graph. %
An $\gHG$ is visualized in figure \ref{fig:HG}.
This hierarchical tree structure allows modeling of both short and long-range interactions among nodes, as well as controlling the flow of information between them, across multiple levels of abstraction
which is a key aspect of our proposed generative model.

\paragraph{Community detection}
Different community detection algorithms have been proposed that try to identify communities based on specific metrics.
or cluster nodes with similar features.
The Louvain algorithm \citep{blondel2008Louvain} is a popular method for graph coarsening, which is a process of reducing the resolution of a graph by grouping similar nodes together. It is a community detection algorithm that iteratively detects communities by maximizing a modularity function. The algorithm starts with each node as its own community and then repeatedly merges communities based on the highest increase in modularity until no further improvement can be made. The resulting communities form a coarser graph with fewer nodes, where each node represents a group of nodes from the original graph.
This heuristic algorithm is computationally efficient and scalable to large graphs for community detection based on graph topology, making it a suitable choice for our graph coarsening step.

\section{Hierarchical Multi-Resolution Graph Generation} \label{sec:HG_gen}
This work aims to establish a multi-resolution framework that generates graph
in a coarse-to-fine approach.
Given a higher level graph, the graph at its subsequent (child) level can be specified by a conditional probability,
and this process can be repeated until the lowest level, or leaf level, is attained.
This concept is formally captured in the following theorem.

\begin{theorem} \label{thm:multi_level}
Given a graph $\gG$  and an ordering $\pi$, assuming there is a deterministic function that provides the corresponding high-level graphs in a hierarchical order as $\{ \gG^L, \gG^{L-1}, ... , \gG^0 \}$, then:

\begin{align}
    & p(\gG = \gG^L , \pi) = p( \{ \gG^L, \gG^{L-1}, ... , \gG^0 \} , \pi)  \nonumber\\
    &= p(\gG^L , \pi ~|~ \{\gG^{L-1}, ... , \gG^0 \})~
     ... ~
     p(\gG^{1} , \pi ~|~ \gG^0) ~ p(\gG^{0}) \nonumber\\
    &= \prod_{l=0}^{L} p(\gG^l , \pi ~|~ \gG^{l-1}) \times p(\gG^{0})
\end{align}
\end{theorem}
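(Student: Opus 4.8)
The plan is to read off the three successive equalities in the display from three ingredients: (i) the assumed determinism of the coarsening map, (ii) the elementary chain rule of probability, and (iii) a Markov-style contraction of the conditioning sets that (i) makes available. I will work in the discrete setting that is natural here --- finite vertex and edge sets, integer edge weights, and finitely many orderings --- so that every conditional probability below is a ratio of point masses and no measure-theoretic care is needed. Write $\Phi$ for the deterministic procedure (community detection followed by node/edge aggregation, introduced earlier) that, from $(\gG^L,\pi)$, produces the whole tower $(\gG^{L-1},\dots,\gG^0)$ together with a \emph{nested} family of induced orderings $\pi^L=\pi,\pi^{L-1},\dots,\pi^0$, where each $\pi^{l-1}$ is obtained from $(\gG^{l-1},\pi^{l})$ by a fixed rule that refines consistently across levels (for example, ordering super-nodes by the order of their children); this nestedness is what makes step (iii) legitimate. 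For the first equality, note that since $(\gG^{L-1},\dots,\gG^0)$ is a deterministic image of $(\gG^L,\pi)$, the events $\{\gG=\gG^L,\pi\}$ and $\{(\gG^L,\gG^{L-1},\dots,\gG^0),\pi\}$ coincide --- the latter only appends coordinates already pinned down --- so $p(\gG=\gG^L,\pi)=p(\{\gG^L,\dots,\gG^0\},\pi)$.

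For the second equality, apply the chain rule to the joint law of $\gG^L,\gG^{L-1},\dots,\gG^0$, expanding from the finest to the coarsest level and absorbing the induced ordering $\pi^l$ into the $l$-th factor:
\[
p(\gG^L,\dots,\gG^0,\pi)=\Big(\prod_{l=1}^{L} p\big(\gG^l,\pi^{l}\,\big|\,\gG^{l-1},\dots,\gG^0\big)\Big)\,p(\gG^0),
\]
which is the telescoped product in the statement, with the root factor $p(\gG^0)$ carrying no ordering because $\gG^0$ is a single node. Here I am reading the symbol $\pi$ that appears in every factor of the statement as the level-specific ordering $\pi^l$; making this reading precise --- defining $\pi^l$ and checking that it is a deterministic function of $(\gG^l,\pi)$ --- is the one bookkeeping point that has to be written out carefully.

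For the third equality, observe that because $\Phi$ factors through the individual one-step coarsenings, each of $\gG^{l-2},\dots,\gG^0$, and by nestedness each of $\pi^{l-2},\dots,\pi^0$, is a deterministic function of $(\gG^{l-1},\pi^{l-1})$. Hence conditioning on $\{\gG^{l-1},\dots,\gG^0\}$ carries exactly the same information as conditioning on $\gG^{l-1}$ alone, so $p(\gG^l,\pi^l\mid\gG^{l-1},\dots,\gG^0)=p(\gG^l,\pi^l\mid\gG^{l-1})$; substituting into the previous display yields $\prod_l p(\gG^l,\pi\mid\gG^{l-1})\times p(\gG^0)$, as claimed (the product range in the statement should be read as $l=1,\dots,L$, or equivalently with the convention $p(\gG^0,\pi\mid\gG^{-1}):=p(\gG^0)$). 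The only genuine obstacle is this ordering formalism: one must fix how $\pi$ induces a consistent, nested ordering at every level and verify that the coarser orderings are determined by the level immediately below them. Once that is in place, both determinism-based steps are immediate and the middle step is just the chain rule, so the theorem follows.
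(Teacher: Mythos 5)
Your proposal is correct and follows essentially the same route as the paper: the paper's entire proof is the one-line observation that the last equality holds because the coarser graphs are deterministic functions of the finer ones, which is exactly your step (iii), with your steps (i) and (ii) being the determinism and chain-rule points the paper leaves implicit. Your additional care about the induced per-level orderings $\pi^l$ and the off-by-one in the product range $\prod_{l=0}^{L}$ are reasonable bookkeeping refinements of the same argument, not a different approach.
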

\begin{proof}
Last equality holds as the graphs at the coarser levels are derived from the finer level graphs.
\end{proof}

\subsection{Community-based Graph Generation} \label{sec:graph_gen}
Based on the community structure of a hyper-graph, conditional probability of a graph at level $l$, $p(\gG^l ~|~ \gG^{l-1})$, can be decomposed according to its partition graphs and bipartite graphs:
\begin{align} \label{eq:decomp}
    & p(\gG^l ~|~ \gG^{l-1}) \nonumber \\
    & = p(\{\pg_{i}^l ~~\forall i \in \gV_{\gG^{l-1}}\} \cup
    \{\bp_{ij}^l ~~\forall <i,j> \in \gE_{\gG^{l-1}}  \} ~|~ \gG^{l-1}) \nonumber \\
    & \approxeq \prod_{i ~ \in ~ \gV_{\gG^{l-1}}} p(\pg_{i}^l  ~|~ \gG^{l-1})
    \prod_{\underset{\in ~ \gE_{\gG^{l-1}}}{<i,j>}  } p(\bp_{ij}^l  ~|~ \gG^{l-1}, \pg_{i}^l, \pg_{j}^l)
\end{align}

Accordingly, the log-likelihood of $\gG^l$ can be decomposed as the log-likelihood of its sub-structures:
\begin{align} \label{eq:logliklehood_decomp}
\log p_{\phi^l}(\gG^l ~|~ \gG^{l-1}) =
\sum_{i \in \gV_{\gG^{l-1}}} \log p_{\phi^l} (\pg_{i}^l  ~|~ \gG^{l-1}) \nonumber\\
+
\sum_{<i,j> \in \gE_{\gG^{l-1}}} \log p_{\phi^l} (\bp_{ij}^l  ~|~ \gG^{l-1}, \pg_{i}^l, \pg_{j}^l)
\end{align}

Here, we assume that
the partition graph $\pg_{i}^l$ is independent of all other components in its level given the parent graph $\gG^{l-1}$.
Additionally, the bipartite graphs $\bp_{ij}^l$ are assumed to be independent of the rest of components given the parent graph $\gG^{l-1}$ and their corresponding pairs of parts $(\pg_{i}^l, \pg_{j}^l)$.\footnote{
Indeed, this assumption implies that the cross dependency between partition graphs are primarily encoded by their  parent abstract graph which is reasonable where the nodes' dependencies are mostly local and are within community rather than being global.}
Therefore, given the graph at a higher level, the generation of graph at its following level is reduced to generation of its partition and bipartite sub-graphs.
As illustrated in figure \ref{fig:ADJ}, each of these sub-graphs corresponds to a block in the adjacency matrix, so the proposed hierarchical model generates adjacency matrix in a blocks-wise fashion and constructs the final graph topology.
As a result, the generation of the partitions in each level can be performed in parallel
and subsequently, the generation decisions of all bipartites in each level may occur at one pass.

For each partition graph, the Graph Recurrent Attention Network (GRAN) \cite{liao2019GRAN} is adopted
where generation of each partition graph is performed sequentially in a node-by-node manner
and the generative probability is factorized by the probability of rows of the lower triangle adjacency matrix block $\hat{\Amat}_{i}^l$.
The generation decisions in this model are formulated as a function of the node representations obtained by an attention based graph neural network (GNN).
In short, it first computes the initial feature of the nodes as a linear mapping of their corresponding row in the lower triangle adjacency matrix and then learns the node representation $h_i$ using \textit{GNN with attentive messages} model.
We simply denote it as $h_i = \mathrm{GNN}^l(\gG_{in}; {\gamma^l} )$ that is a GNN parameterized by a set of parameters $\gamma^l$. Please refer to \cite{liao2019GRAN} for detailed formulation.
In the following, we modify the final generative probabilities, $p(\pg_{i}^l  ~|~ \gG^{l-1})$, to include the state of the parent graph and also to model the non-negative integer valued weights of the edges.

\subsection{Probability Distribution of Candidate Edges\label{sec:dist}}
In a hierarchical graph, the edges has non-negative integer valued weights while the sum of all the edges in partition graph $ \pg_i^l $ and bipartite graph $ \bp_{ij}^l $ are determined by their corresponding edges in the parent graph, \ie $w^{l-1}_{ii}$ and $w^{l-1}_{ij}$ respectively.
Therefore, the edge weights in each subgraph can be modeled as a multinomial distribution.
So, let's denote the set of all candidate edges of the bipartite $ \bp_{ij}^l $ by a random vector
$\rvw := [w_e]_{e ~\in~ \gE({\bp_{ij}^l})}$ , its probability can be described as
\begin{align} \label{eq:mn_bp}
\rvw & \sim \text{Mu}(\rvw~|~w^{l-1}_{ij}, \vtheta^{l}_{ij} )  \nonumber \\
& = \frac{w^{l-1}_{ij} !}{\prod_{e=1}^{|\gE(\bp_{ij}^l)|} \rvw[e] ! } \prod_{e=1}^{|\gE(\bp_{ij}^l)|} {(\vtheta^{l}_{ij} [e])}^{\rvw[e] }
\end{align}
where
$\{\vtheta^{l}_{ij}[e] ~|~ \vtheta^{l}_{ij}[e]\ge 0, ~ \sum \vtheta^{l}_{ij}[e] = 1 \}$ are the parameter of the distribution,
and
the multinomial coefficient $\frac{n!}{\prod \rvw[e]  ! }$ is the number of ways to distribute the total weight $w^{l-1}_{ij} = \sum_{e=1}^{|\gE(\bp_{ij}^l)|}\rvw[e]  $ into all candidate edges of $\bp_{ij}^l$.\footnote{
It is analogous to the random trial of putting $n$ balls into $k$ boxes, where the joint probability of the number of balls in all the boxes follows the multinomial distribution.
}

Likewise, the probability distribution of the set of candidate edges for each partition graph can
be modeled by a multinomial distribution
but since the generation decisions happens as a sequential process, we are interested to decomposed this probability distribution accordingly.

\begin{lemma} \label{thm:mn2bn}
A random vector $\rvw \in \ZP^E$ with multinomial distribution %
can be recursively decomposed to a sequence of binomial distributions:
\begin{align}
    \text{Mu}(\rw_1, ~ ..., \rw_E ~ &| ~ w,~ [\theta_1, ~ ..., \theta_E ]) \nonumber \\
    &= \prod_{e=1}^{E} \text{Bi}(\rw_{e} ~ | ~ w - \sum\nolimits_{i<e}\rw_i, \hat{\theta}_e), \\
    \text{where: } \hat{\theta}_e &= \frac{\theta_{e}}{1-\sum_{i<e}\theta_i} \nonumber
\end{align}
This decomposition is a \textit{stick-breaking} process where $\hat{\theta}_e$ is the fraction of the remaining probabilities we take away every time and allocate to the $e$-th component \citep{linderman2015dependentMultinomial}.
\end{lemma}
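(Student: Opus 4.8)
The plan is to prove the identity by induction on $E$, peeling off one coordinate at a time and recognizing that the remaining coordinates, conditioned on the first, again form a multinomial. First I would recall the standard fact that if $\rvw=(\rw_1,\dots,\rw_E)\sim\text{Mu}(w,[\theta_1,\dots,\theta_E])$, then the marginal of $\rw_1$ is $\text{Bi}(w,\theta_1)$ — this follows by grouping the last $E-1$ boxes into a single ``rest'' category and observing that merging categories of a multinomial yields a multinomial (here binomial). Then, conditionally on $\rw_1=r_1$, the remaining weight $w-r_1$ is distributed among the boxes $2,\dots,E$ according to $\text{Mu}\bigl(w-r_1,[\theta_2,\dots,\theta_E]/(1-\theta_1)\bigr)$; this is the classical ``conditioning a multinomial on one coordinate'' identity, which I would verify directly from the pmf in \eqref{eq:mn_bp} by writing $\text{Mu}(\rvw\mid w,\theta) = \text{Bi}(\rw_1\mid w,\theta_1)\cdot\text{Mu}(\rw_2,\dots,\rw_E\mid w-\rw_1,[\theta_2,\dots,\theta_E]/(1-\theta_1))$ and checking that the multinomial coefficients and the powers of $\theta$ factor correctly.

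With those two facts in hand, the argument is: apply the split once to get $\text{Mu}(\rw_1,\dots,\rw_E\mid w,\theta) = \text{Bi}(\rw_1\mid w,\hat\theta_1)\,\text{Mu}(\rw_2,\dots,\rw_E\mid w-\rw_1,\theta')$ where $\hat\theta_1=\theta_1$ (since $\sum_{i<1}\theta_i=0$) and $\theta'_e = \theta_{e+1}/(1-\theta_1)$. Then apply the induction hypothesis to the $(E-1)$-dimensional multinomial $\text{Mu}(\rw_2,\dots,\rw_E\mid w-\rw_1,\theta')$, which expands it as $\prod_{e=2}^{E}\text{Bi}(\rw_e\mid (w-\rw_1)-\sum_{2\le i<e}\rw_i,\hat\theta'_{e-1})$ with $\hat\theta'_{e-1} = \theta'_{e-1}/(1-\sum_{j<e-1}\theta'_j)$. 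The bookkeeping step is to check that $(w-\rw_1)-\sum_{2\le i<e}\rw_i = w-\sum_{i<e}\rw_i$, which is immediate, and that $\hat\theta'_{e-1}$ simplifies to $\hat\theta_e = \theta_e/(1-\sum_{i<e}\theta_i)$. The latter uses $1-\sum_{j<e-1}\theta'_j = 1 - \sum_{j=1}^{e-2}\theta_{j+1}/(1-\theta_1) = (1-\theta_1-\sum_{j=2}^{e-1}\theta_j)/(1-\theta_1) = (1-\sum_{i<e}\theta_i)/(1-\theta_1)$, so $\theta'_{e-1}/(1-\sum_{j<e-1}\theta'_j) = \bigl(\theta_e/(1-\theta_1)\bigr)\big/\bigl((1-\sum_{i<e}\theta_i)/(1-\theta_1)\bigr) = \theta_e/(1-\sum_{i<e}\theta_i)$, as desired.

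The base case $E=1$ is trivial: $\text{Mu}(\rw_1\mid w,[\theta_1])$ forces $\rw_1=w$ with probability $\theta_1^w=1$, and $\text{Bi}(\rw_1\mid w,\hat\theta_1)$ with $\hat\theta_1=\theta_1/(1-0)=\theta_1=1$ agrees. One should also note the degenerate-denominator caveat: if $\sum_{i<e}\theta_i=1$ for some $e$, then necessarily $\rw_i$ for all $i\ge e$ are zero almost surely, and the convention $\text{Bi}(0\mid 0,\hat\theta_e)=1$ keeps the formula valid; I would mention this in one sentence rather than belabor it. The main obstacle is purely the algebraic simplification of the nested fractions $\hat\theta'_{e-1}\mapsto\hat\theta_e$ — once that telescoping of partial sums of $\theta$ is set up cleanly, the rest is a direct factorization of the multinomial pmf. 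The stick-breaking interpretation then follows by reading the recursion top-down: each $\hat\theta_e$ is the fraction of the probability mass not yet allocated to components $1,\dots,e-1$ that is assigned to component $e$.
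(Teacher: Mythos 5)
Your proof is correct. Note that the paper itself does not actually prove this lemma --- it states it and defers to the citation of Linderman et al.; the only worked proofs in the appendix are for the group-wise versions (Lemmas A.2/A.3 and Theorem A.4, which in turn invoke this lemma). Your induction --- peel off $\rw_1$ as $\text{Bi}(w,\theta_1)$, recognize the conditional law of $(\rw_2,\dots,\rw_E)$ given $\rw_1$ as $\text{Mu}\bigl(w-\rw_1,[\theta_2,\dots,\theta_E]/(1-\theta_1)\bigr)$, and telescope the renormalized parameters to get $\hat{\theta}_e=\theta_e/(1-\sum_{i<e}\theta_i)$ --- is the standard argument and fills that gap; it is equivalent to directly multiplying out the product of binomial pmfs and watching the factorials and the powers of the partial sums $1-\sum_{i<e}\theta_i$ cancel, which is the same direct pmf-ratio style the paper uses to prove its Lemma A.3. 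Your handling of the base case and of the degenerate denominator when $\sum_{i<e}\theta_i=1$ (all remaining counts forced to zero, with the convention $\text{Bi}(0\mid 0,\cdot)=1$) is appropriate; one might also remark that the final factor $e=E$ always has $\hat{\theta}_E=1$ and is degenerate by construction, which is consistent with the constraint $\sum_e\rw_e=w$.
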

This lemma offers modeling the generation of a partition graph as an edge-by-edge recursive generation process hence is analogous to autoregressive algorithms such as GraphRNN \citep{you2018graphrnn} with $\mathcal{O}(|\gV_{\pg}|^2)$ generation steps.
As a more efficient alternative, we are interested in generating a partition graph one node at a time which entails decomposing the edges probability in a group-wise form where the candidate edges between the $t$-th node and the already generated graph are grouped together.
In the following theorem we formally derive such decomposition for multinomial distributions.

\begin{theorem} \label{thm:mn2bnmn}
For a random counting vector $\rvw \in \ZP^E$ with multinomial distribution $\text{Mu}(\rvw~ | ~ w, \vtheta)$, %
let's split it into $M$ disjoint groups $\rvw=[\rvu_1, ~ ..., \rvu_M ]$ where $\rvu_m \in \ZP^{E_m} ~,~ \sum_{m=1}^M {E_m} = E $,
and also split the probability vector as $\vtheta=[\vtheta_1, ~ ..., \vtheta_M ]$.
Additionally, let's define sum of all variables in the $m$-th group by a random count variable $\rv_m := \sum_{e=1}^{E_m} \ru_{m,e}$.
Then the multinomial distribution can be modeled as a chain of binomials and multinomials:
\begin{align} \label{eq:mn2bnmn}
    &\text{Mu}(\rvw=[\rvu_1, ~ ..., \rvu_M ]| ~ w, \vtheta=[\vtheta_1, ..., \vtheta_M ]) \nonumber \\
    &= \prod_{m=1}^{M} \text{Bi}(\rv_{m} ~ | ~ w - \sum_{i<m}\rv_i, {\eta}_{\rv_{m}}) ~
    \text{Mu}(\rvu_{m}~ | ~ \rv_{m}, \vlambda_{{m}}), \nonumber \\
    &\text{where: } \eta_{\rv_m } = \frac{\1^{T}~ \vtheta_m}{1-\sum_{i<m} \1^{T}~ \vtheta_i}, ~~
    \vlambda_{{m}} = \frac{\vtheta_m}{\1^{T}~ \vtheta_m}.
\end{align}
Here, the probability of binomial, $\eta_{\rv_m}$, is the fraction of the remaining probability mass that is allocated to $\rv_m$, \ie the sum of all weights in the $m$-th group.
The probability vector (parameter) $\vlambda_{{m}}$ is the normalized multinomial probabilities of all count variables in the $m$-th group.
Intuitively, this decomposition of multinomial distribution can be viewed as a recursive \textit{stick-breaking} process where at each step, first a fraction of the remaining probability mass is allocated to a group by a binomial distribution
and then this fraction is distributed among that group's members by a multinomial distribution.
\end{theorem}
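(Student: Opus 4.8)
The plan is to reduce the multi-group decomposition in Theorem~\ref{thm:mn2bnmn} to the two-group case and then iterate. First I would prove a \emph{splitting lemma}: if $\rvw \sim \text{Mu}(w, \vtheta)$ and we split $\rvw = [\rvu_1, \rvz]$ with $\rvz = [\rvu_2, \dots, \rvu_M]$, $\vtheta = [\vtheta_1, \vtheta_{\ge 2}]$, and set $\rv_1 = \1^T \rvu_1$, then
\begin{align}
\text{Mu}(\rvw \mid w, \vtheta) = \text{Bi}(\rv_1 \mid w, \1^T\vtheta_1)\, \text{Mu}(\rvu_1 \mid \rv_1, \vtheta_1/\1^T\vtheta_1)\, \text{Mu}(\rvz \mid w - \rv_1, \vtheta_{\ge 2}/\1^T\vtheta_{\ge 2}). \nonumber
\end{align}
This is the standard fact that aggregating the last $E_2 + \cdots + E_M$ categories of a multinomial into one ``lumped'' category leaves a multinomial on the coarsened categories, and that conditionally on a category count the within-category counts are multinomial with renormalized probabilities; I would verify it by a direct multinomial-coefficient manipulation, writing $w! = \rv_1!\,(w-\rv_1)! \binom{w}{\rv_1}$, splitting the product $\prod \theta_e^{\rw_e}$ over the two blocks, and using $\sum_{e \in \text{block}} \rw_e = \rv_1$ resp.\ $w - \rv_1$ to introduce the normalizing factors $(\1^T\vtheta_1)^{\rv_1}$ and $(\1^T\vtheta_{\ge 2})^{w-\rv_1}$.

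Next I would induct on $M$. The base case $M=1$ is trivial (an empty binomial chain, $\rv_1 = w$). For the inductive step, apply the splitting lemma once to peel off group $1$: this yields $\text{Bi}(\rv_1 \mid w, \1^T\vtheta_1)\,\text{Mu}(\rvu_1 \mid \rv_1, \vlambda_1)$ times a multinomial $\text{Mu}(\rvz \mid w - \rv_1, \vtheta_{\ge 2}/\1^T\vtheta_{\ge 2})$ over the remaining $M-1$ groups. The key bookkeeping point is that $\1^T\vtheta_1 = \eta_{\rv_1}$ since the denominator $1 - \sum_{i<1}\1^T\vtheta_i = 1$, so the first factor already matches the claimed form. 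Applying the inductive hypothesis to the residual multinomial gives a chain of binomials and multinomials over groups $2, \dots, M$ with total count $w - \rv_1$ and probability vector $\vtheta_{\ge 2}/\1^T\vtheta_{\ge 2}$; it remains to check that when this is composed with the first factor, the residual count after $m-1$ groups is $w - \sum_{i<m}\rv_i$ and the effective binomial probability is exactly $\eta_{\rv_m} = \1^T\vtheta_m / (1 - \sum_{i<m}\1^T\vtheta_i)$ and the effective multinomial parameter is $\vlambda_m = \vtheta_m/\1^T\vtheta_m$. The normalization-constant chasing here is the one genuinely fiddly part: the inductive hypothesis produces probabilities normalized relative to $\1^T\vtheta_{\ge 2}$, and one has to confirm the telescoping $\frac{\1^T\vtheta_m/\1^T\vtheta_{\ge 2}}{1 - \sum_{2 \le i < m}\1^T\vtheta_i/\1^T\vtheta_{\ge 2}} = \frac{\1^T\vtheta_m}{1 - \sum_{i < m}\1^T\vtheta_i}$, which follows by clearing the common denominator $\1^T\vtheta_{\ge 2} = 1 - \1^T\vtheta_1$; similarly $\vlambda_m$ is invariant under this two-step renormalization because both numerator and the factor cancel.

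An alternative and perhaps cleaner route, which I would mention, is to avoid induction entirely: instead invoke Lemma~\ref{thm:mn2bn} (the binomial stick-breaking decomposition) as a black box at the level of individual coordinates, group consecutive binomial factors, and show each consecutive block $\prod_{e \in \text{group } m}\text{Bi}(\rw_e \mid w - \sum_{i<e}\rw_i, \hat\theta_e)$ collapses to $\text{Bi}(\rv_m \mid w - \sum_{i<m}\rv_i, \eta_{\rv_m})\,\text{Mu}(\rvu_m \mid \rv_m, \vlambda_m)$ — which is again just the two-group splitting lemma applied inside the block. Either way the main obstacle is purely the normalization-constant algebra (tracking how the ``remaining stick'' renormalizes across levels); there is no probabilistic subtlety, since the conditional-independence structure is exactly what the multinomial coefficient factorization encodes. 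I would therefore present the splitting lemma carefully and then state that the general case follows ``by a routine induction on $M$,'' spelling out only the telescoping identity for $\eta_{\rv_m}$.
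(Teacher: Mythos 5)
Your proposal is correct, but it takes a genuinely different route from the paper's. The paper argues globally rather than inductively: since the group sums $(\rv_1,\dots,\rv_M)$ are deterministic functions of $\rvw$, it writes $p(\rvw) = p(\rvw \mid \rv_1,\dots,\rv_M)\,p(\rv_1,\dots,\rv_M)$, establishes that the group sums are jointly multinomial with parameters $\alpha_m = \1^{T}\vtheta_m$ (Lemma~\ref{lemma:groupsMN}), shows by essentially the same multinomial-coefficient manipulation you describe that, conditionally on all group sums at once, the groups are independent multinomials with parameters $\vlambda_m$ (Lemma~\ref{lemma:ConditionalMN}), and then obtains the binomial chain by applying the already-stated Lemma~\ref{thm:mn2bn} to the group-sum multinomial. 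You instead peel off one group at a time via a two-block splitting lemma and induct on $M$, re-deriving the stick-breaking telescoping for $\eta_{\rv_m}$ by hand instead of importing it from Lemma~\ref{thm:mn2bn}; your ``alternative route'' of grouping the coordinate-level binomial factors of Lemma~\ref{thm:mn2bn} is a third valid variant. The underlying computation in both proofs is the same block factorization of $w!/\prod_e \rw_e!$ and of $\prod_e \theta_e^{\rw_e}$, so neither argument is more powerful; the paper's version is shorter given that Lemma~\ref{thm:mn2bn} is already in hand, and it isolates the conditional-independence statement (Lemma~\ref{lemma:ConditionalMN}) that the paper reuses elsewhere to justify generating sub-graphs independently given their parent edge weights, whereas your induction is self-contained and makes the renormalization bookkeeping fully explicit.
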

\begin{proof} Refer to appendix \ref{apdx:proof_mn2bnmn} for the proof. \end{proof}

Now, let's denote the group of candidate edges connecting the node $v_t(\pg_i^l)$ to the already generated graph, $\pg_{i,t}^l$, by $\gE_t({\pg_{i}^l})$ and their weights by the random vector $\rvu_{t} := [w_e]_{e ~\in~ \gE_t({\pg_{i}^l})}$ (the $t$-th row of the lower triangle of adjacency matrix $\hat{\Amat}_{i}^l$). 
Based on theorem \ref{thm:mn2bnmn}, at the $t$-th step of generating a partition graph, the probability of $\rvu_{t}$ can be characterized by the product of a binomial and a multinomial distribution.
This process is illustrated  in figure \ref{fig:MN_BN}.
We further increase the expressiveness of the generative network by extending this probability to a mixture model with $K$ mixtures:
\begin{align}
    p(\rvu_{t} ) &= \sum_{k=1}^{K}  \vbeta_k^l \text{Bi}(\rv_{t}  | w^{l-1}_{ii} - \sum_{i<t}\rv_i , {\eta}_{t, k}) 
    \text{Mu}(\rvu_{t}~ | \rv_{t}, \vlambda_{{t}, k})
    \label{eq:mix_mn_pg}\\
    \vlambda_{t, k} &=
    \softmax \left(  \mathrm{MLP}_{\vtheta}^l \big( \left[ \Delta \vh_{\gE_t({\pg_{i}^l})};~ h_{\parents(\pg_{i}^l)} \right] \big)  \right)[k,:]
     \label{eq:p_mn} \\
    \eta_{t, k} &=
    \mathrm{sigmoid} \left( \mathrm{MLP}_{\eta}^l \big( \left[
    \mathrm{pool}(\vh_{\pg_{i, t}^l});~ h_{\parents(\pg_{i}^l)} \right] \big)  \right)[k]
    \nonumber\\
    \vbeta^{l} &= \softmax \left( \mathrm{MLP}_{\beta}^l \big( \left[ \mathrm{pool}(\vh_{\pg_{i, t}^l} );~ h_{\parents(\pg_{i}^l)} \right]\big)  \right) \nonumber %
\end{align}

Where $\Delta \vh_{\gE_t({\pg_{i}^l})}$  is a $ |\gE_t({\pg_{i}^l})| \times d_h$ dimensional matrix composed of the set of edge representations  $ \{\Delta h_{<t, s>} := h_t - h_s ~|~ \forall ~ <t,s>~ \in \gE_t({\pg_{i}^l}) \}$, and
$\vh_{\pg_{i, t}^l}$  is a $ t \times d_h$ matrix of node representations of already generated nodes in the partition graph.
As explained in section \ref{sec:graph_gen}, the node representations are learned by a GNN network.
The graph level representation is obtained by the $\mathrm{addpool()}$ aggregation function.
Here, $\mathrm{MLP}_{\vtheta}^l()$ acts at the edge level and produce $K \times |\gE_t({\pg_{i}^l})|$ dimensional output,
while both $\mathrm{MLP}_{{\eta}_{\rv}}^l()$ and $ \mathrm{MLP}_{\beta}^l()$ produce $K$ dimensional arrays for $K$ mixture models.
All of the $\mathrm{MLP}$ networks are build by two hidden layers with $\mathrm{ReLU}$ activation functions and the mixture weights are denoted by  $\vbeta^{l}$.

For each partition graph $\pg_{i}^l$, node representation of its parent node $ h_{\parents(\pg_{i}^l)} $, \ie the node that represent $\pg_{i}^l$ at the higher level, is used as the context and concatenated to the representation matrices. %
The operation $\big[ \vx; ~y \big]$ denotes the concatenation of each row of matrix $\vx$ with vector $y$.
While generating a local component, this context enriches the node/edge representations by capturing long range interactions and
encoding the global structure of the graph.

On the other hand, the generation of edges in bipartite graph $ \bp_{ij}^l $ can be simply performed simultaneously, and we similarly use the mixture of multinomial distribution \eq{eq:mn_bp} to model the generative probability:
\begin{align}
    &p(\rvw := \gE({\bp_{ij}^l})) = \sum_{k=1}^{K} \vbeta_k^l  \text{Mu}(\rvw~|~w^{l-1}_{ij}, \vtheta^{l}_{ij, k} )
    \nonumber\\
    &\vtheta^{l}_{ij, k} = \softmax \left(  \mathrm{MLP}_{\vtheta}^l ( \left[ \Delta \vh_{\gE({\bp_{ij}^l})};~ \Delta h_{\parents(\bp_{ij}^l)}\right])   \right) [k,:]
    \label{eq:p_mn_bp}\\
    &\vbeta^{l} = \softmax \left( \mathrm{MLP}_{\beta}^l \big( \left[\mathrm{pool}( \Delta \vh_{\gE({\bp_{ij}^l})}  );~ \Delta h_{\parents(\bp_{ij}^l)} \right]\big)  \right) \nonumber
\end{align}
where $\Delta \vh_{\parents(\bp_{ij}^l)}$
is the edge representation of the parent edge of the bipartite graph, the edge that represent the $\bp_{ij}^l$ at the higher level.

In equations \eq{eq:p_mn} and \eq{eq:p_mn_bp},
the probability of the integer-valued edges are modeled by $\softmax()$ function
but since the final graphs in our experiments have binary edges weights, we instead use multi-hot activation function
$\sigma: \mathbb{R}^K \to (K-1)$-{simplex},
 defined as %
\[ \sigma(\mathbf{z})_i = \frac{\mathrm{sigmoid}({z_i})}{\sum_{j=1}^K \mathrm{sigmoid}({z_j})}.\]
for the leaf level while the upper levels still employ $\softmax$.
In our experiments, this function could better model the edge probability of the leaf level compared to standard $\softmax$ function.
As an alternative, we also modeled the edges at the leaf level by the mixture of Bernoulli using $\mathrm{sigmoid}()$ activation for the output while higher levels use mixture of multinomials.
A possible extension to this work could be using the cardinality potential model \citep{hajimirsadeghi2015visual}, derived to model the distribution over the set of binary random variables, for the last level. %

\textbf{Remark} Training and generation of the proposed hierarchical model is highly parallelizable and can be sped up to $\mathcal{O}( c \log n)$ sequential steps where $c$ the size of largest graph parts. %

\section{Related Work}\label{sec:related}

With recent progress in graph neural networks, several deep neural network models have been introduced \citep{de2018molgan, simonovsky2018graphvae, kipf2016semi, ma2018constrained, liu2019graph} that are base on variational autoencoders \citep{kingma2013auto},
But these methods are weak in capturing the complex dependencies in graph structures and thus quality of graph generation degrades as graphs become moderate or large in size \cite{li2018learning}.

Autoregressive deep architectures, on the other hand, model graph generation as a sequential decision making process.
\citet{li2018learning} proposed generative model based on GNN but it has high complexity of $\mathcal{O}(mn^2)$.
GraphRNN \citep{you2018graphrnn} models graph generation with a two-stage RNN architecture, with the first RNN generating new nodes and the second generating links of the new nodes.
It thus has to traverses all elements of the adjacency matrix in a predefined order, resulting in $\mathcal{O}(n^2)$ thus not scalable to large graphs.
On the other hand, GRAN \citep{liao2019GRAN} uses graph attention networks and improves the complexity by generating the adjacency matrix in a row-by-row fashion, \ie generating all the edges between the new node and already generated graph in one step, resulting in $\mathcal{O}(n)$ recursive steps.
In an attempt to improve the scalability of generative models for graph, \citet{dai2020scalableGraphGen} proposed an algorithm for sparse graphs that reduce the training complexity to $\mathcal{O}(\log n)$ while its generation time is increased to $\mathcal{O}((n+m)\log n)$ complexity and its recursive generation process does not incorporate community structure of the graph.

In explicitly dealing with hierarchical structures, \citet{jin2020hierarchicalMolec} proposed a generation method for molecular graphs that recursively selects motifs, the basic building blocks, from a set and predicts the attachment of that motif to emerging molecule.
This model require prior domain-specific knowledge and relies on molecule-specific graph motifs. Moreover, graphs are abstracted in only two levels and component generation cannot be performed in parallel.
A hierarchical normalizing flow model for molecular graphs is proposed in \citet{de2018molgan} that generates new molecules from a single node by recursively dividing every node into two nodes.
Merging and splitting of pair of nodes in this model is based on the the node's neighborhood so it does not include the diverse community structure of the graphs and hence its hierarchical generation is structurally limited. %

\section{Experiments}\label{sec:exp}
In our empirical studies,  we compare the proposed method against some well-established baselines on two synthetics datasets and three real-world datasets.

\textbf{Datasets:}
First,
we generated \textbf{R}elaxed \textbf{C}aveman \textbf{G}raphs (\textbf{RCG}) which starts with $7\le l<25$ cliques of size $15\le k<25$. Edges are then randomly rewired with probability $p=1/l$ to different cliques.
We also generated \textbf{P}lanted \textbf{P}artition \textbf{G}raphs (\textbf{PPG}).
This model partitions a graph with $n$ nodes in $20\le l<30$ groups with $15\le k<25$ nodes each.
Nodes of the same group are linked with a probability $ p_{in}=.75$, and nodes of different groups are linked with probability $p_{out}=10 / (k l^2) $. Both of these datasets that exhibit strong community structures are generated using \textsc{NetworkX} Python package \citep{NetworkX}.

The real-world datasets are
(1) \textbf{Protein} dataset which contains 918 protein graphs, each of which has 100 to 500 nodes for amino acids and has edges for amino acid pairs closer than 6 Angstroms \citep{dobson2003distinguishing},
(2) \textbf{Ego} dataset which contains 757 3-hop ego networks with 50 to 300 nodes extracted from the CiteSeer dataset, with nodes representing documents and edges representing citation relationships \citep{sen2008collectiveEgo},
and
(3) \textbf{Point Cloud} with 41 simulated 3D point clouds of household objects. This dataset has about 1.4k nodes on average with maximum of over 5k nodes. Each point is mapped to a node and edges connecting the k-nearest neighbors in Euclidean distance in 3D space are added to the graphs \citep{neumann2013graph}.

To partition graphs and obtain hierarchical graph structures, we applied Louvain algorithm on all of these datasets.
This resulted in hierarchical graphs of depth $L=2$ for the synthetic datasets, while for the real world graphs it produced at least 3 levels so we spliced out the intermediate levels so that all have equal depth of $L=3$.\footnote{The proposed architecture can be trained on HGs with uneven heights by adding empty graphs at the root levels of those HGs with lower height so that they are not sampled during the training.}
Before training the models, we follow the protocol in \cite{liao2019GRAN} to randomly create a 80$\%$-20$\%$ training-testing split, with 20$\%$ of the training data reserved as the validation set.

\begin{table*}[t]
\begin{center}
\caption{ \footnotesize
In this table the quality of generated graphs are compared in terms of the MMD of graph degree distributions (\textit{Deg.}),  clustering coefficient (\textit{Clus.}),
 4-node orbits (\textit{Orbit}), and  the spectra of the graph Laplacian (\textit{Spec}.).
For all the metrics, the smaller the better.
Graph sizes, $(|V|_{max}, |V|_{avg}, |E|_{max}, |E|_{avg})$, are listed for each dataset.
}
\label{table:res}
	\begin{minipage}{.99\linewidth}
		\centering
    \begin{adjustbox}{width=1.00\textwidth,}
    \begin{tabular}{
l ||cccc|cccc|cccc|cccc|cccc}
                    & \multicolumn{4}{c}{\textbf{Protein}} & \multicolumn{4}{c}{\textbf{3D Point Cloud}} & \multicolumn{4}{c}{\textbf{Ego}} & \multicolumn{4}{c}{\textbf{PPG}} & \multicolumn{4}{c}{\textbf{RCG}} \\
                    & \multicolumn{4}{c}{(500, 258, 1575, 646)} & \multicolumn{4}{c}{(5.03k, 1.4k, 10.9k, 3k)}  & \multicolumn{4}{c}{(399, 144, 1062, 332)}  & \multicolumn{4}{c}{(696, 477, 7.5k, 4.4k)}  & \multicolumn{4}{c}{(576, 261, 6.6k, 2.2k)} \\
                     & Deg.     & Clus.    & Orbit  & Spec.    & Deg.           & Clus. & Orbit  & Spec.    & Deg.     & Clus. & Orbit    & Spec.    & Deg.     & Clus.  & Orbit   & Spec.   & Deg.   & Clus.    & Orbit  & Spec.   \\ \toprule
\textbf{Erdos-Renyi} & 5.64$e^{-2}$ & 1        & 1.54   & 9.13$e^{-2}$ & 3.1$e^{-1}$           & 1.22  & 1.27   & 4.26$e^{-2}$ & 1.6$e^{-1}$     & 9.4$e^{-1}$  & 8.5$e^{-1}$     & 1.8$e^{-1}$     & 2.83$e^{-1}$    & 1.04   & 1.94$e^{-1}$   & 2.01$e^{-1}$   & 1.7$e^{-1}$   & 8.0$e^{-1}$      & 1.2$e^{-1}$   & 2.0$e^{-1}$     \\
\textbf{GraphVAE}   & 4.8$e^{-1}$     & 7.14$e^{-2}$ & 7.4$e^{-1}$   & 1.1$e^{-1}$     & -              & -     & -      & -        & -          & -       & -         &  -        &  -        &  -      &  -       &   -      &     -   &    -      & -       &  -       \\
\textbf{GraphRNN-S}  & 4.02$e^{-2}$ & \textbf{4.79$e^{-2}$} & 2.3$e^{-1}$   & 2.1$e^{-1}$     &-&-&-&-&   6.51$e^{-3}$ & 2.24$e^{-1}$ & 6.35$e^{-2}$ &  7.30$e^{-2} $ &    4.34$e^{-2}$ &  3.01$e^{-1}$ & 3.32$e^{-2}$ & 1.70$e^{-2}$      &     7.02$e^{-2}$   &    2.47$e^{-2}$      &  3.41$e^{-2}$      &  4.91$e^{-2}$     \\
\textbf{GraphRNN}    & 1.06$e^{-2}$ & 1.4$e^{-1}$     & 8.8$e^{-1}$   & 1.88$e^{-2}$ & -              & -     & -      & -        & 2.44$e^{-2}$ &3.46$e^{-1}$ &1.35$e^{-1}$ &8.91$e^{-2}$       &9.65$e^{-2}$ &3.12$e^{-1}$ &2.97$e^{-2}$ &4.90$e^{-2}$      &   6.74$e^{-2}$ &1.82$e^{-2}$ &3.00$e^{-2}$ & 4.93$e^{-2}$        \\
\textbf{GRAN}        & \textbf{1.98$e^{-3}$} & {4.86$e^{-2}$ } & 1.3$e^{-1}$   & \textbf{5.13$e^{-3}$} & \textbf{1.75$e^{-2}$}       & 5.1$e^{-1}$  & 2.1$e^{-1}$   & \textbf{7.45$e^{-3}$} & 3.2$e^{-2}$  & 1.7$e^{-1}$  & 2.6$e^{-2}$  & 4.6$e^{-2}$  & 5.67$e^{-2}$   & 2.3$e^{-1}$   & 2.82$e^{-1}$   & {1.71$e^{-2}$ } & 7.50$e^{-2}$  & 1.34$e^{-2}$   & 9.95$e^{-2}$ & {5.70$e^{-2}$}   \\
\hline 
\rule{0pt}{3ex}
\textbf{MRG-B}       & 5.1$e^{-3}$  & 6.27$e^{-2}$ & 1.08$e^{-1}$ & 8.0$e^{-3}$   & 1.29$e^{-1}$          & \textbf{3.4$e^{-1}$} & 5.9$e^{-2}$  & 8.9$e^{-3}$   & \textbf{4.1$e^{-3}$}  & \textbf{6.2$e^{-2}$} & 1.8$e^{-2}$  & \textbf{1.42$e^{-2}$} & \textbf{4.79$e^{-3}$}  & \textbf{8.79$e^{-2}$} & 4.8$e^{-2}$   & \textbf{1.85$e^{-3}$} & \textbf{1.45$e^{-2}$} & \textbf{1.29$e^{-2}$} & 2.75$e^{-2}$ &\textbf{4.1$e^{-3}$} \\
\textbf{MRG}         & 6.49$e^{-3}$  & 2.24$e^{-1}$    & \textbf{5.78$e^{-2}$} & 1.31$e^{-2}$   & 2.07$e^{-1}$         & 8.06$e^{-1}$ & \textbf{2.75$e^{-2}$} & 2.24$e^{-2}$   & {1.78$e^{-2}$} & 2.24$e^{-1}$ & \textbf{1.16$e^{-2}$} & 2.07$e^{-2}$ & 1.51$e^{-1}$ & 3.68$e^{-1}$  & \textbf{7.75$e^{-3}$} & 1.93$e^{-2}$  & 4.45$e^{-2}$ & 2.60$e^{-2}$    & \textbf{1.15$e^{-2}$} & 5.76$e^{-2}$ \\
\hline
\end{tabular}

    \end{adjustbox}
	\end{minipage}
\end{center}
\end{table*}

\begin{figure*}[t]
\begin{center}
    \begin{tabular}{c c|c|c}
    \begin{minipage}{.05\linewidth}
    \begin{turn}{90} \hspace{5pt} MRG \hspace{15pt} GRAN \hspace{5pt} GraphRNN \hspace{10pt} Train \hspace{25pt}  \end{turn}
    \end{minipage} &\hfill
    \begin{minipage}{.28\linewidth}
    \begin{center} Ego \end{center}
    \adjincludegraphics[width=.32\linewidth,
		trim={0 {0\height} 0 {.66\height}},clip]{./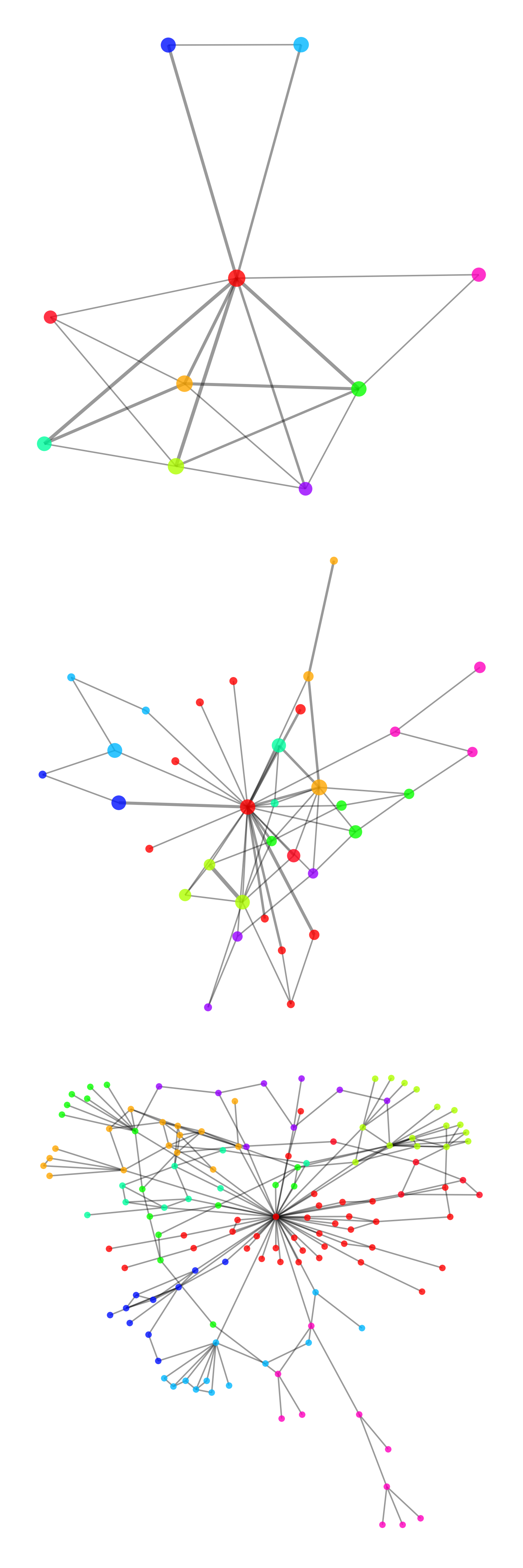}
    \adjincludegraphics[width=.32\linewidth,
		trim={0 {0\height} 0 {.66\height}},clip]{./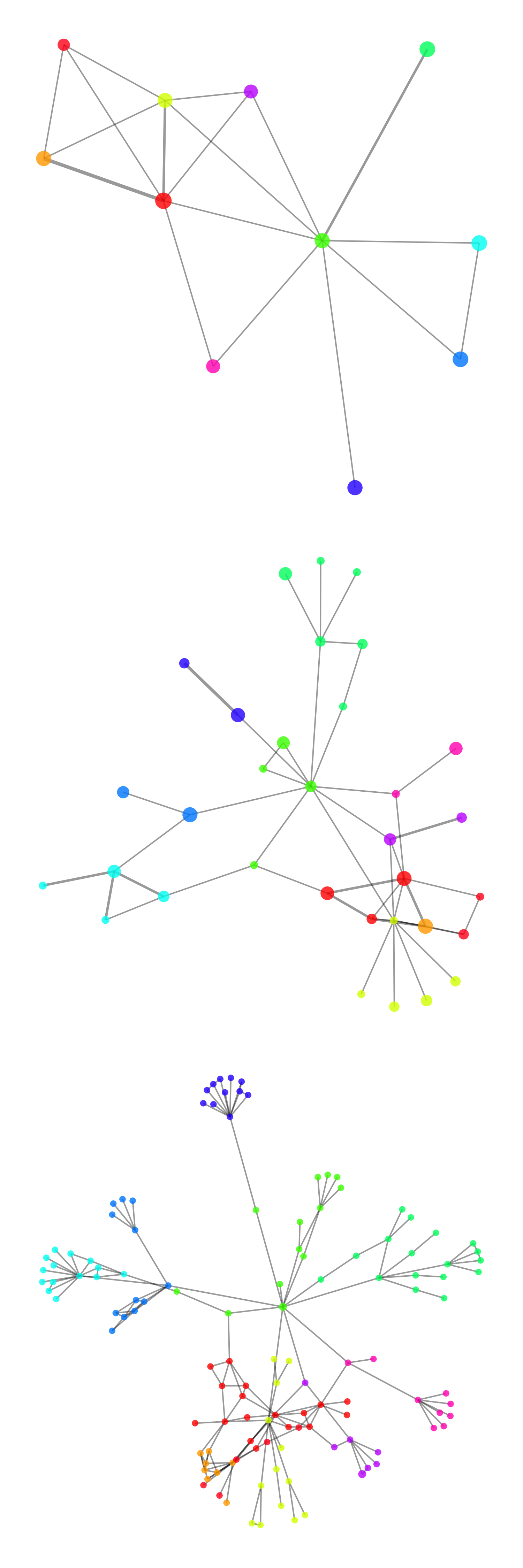}
    \adjincludegraphics[width=.32\linewidth,
		trim={0 {0\height} 0 {.66\height}},clip]{./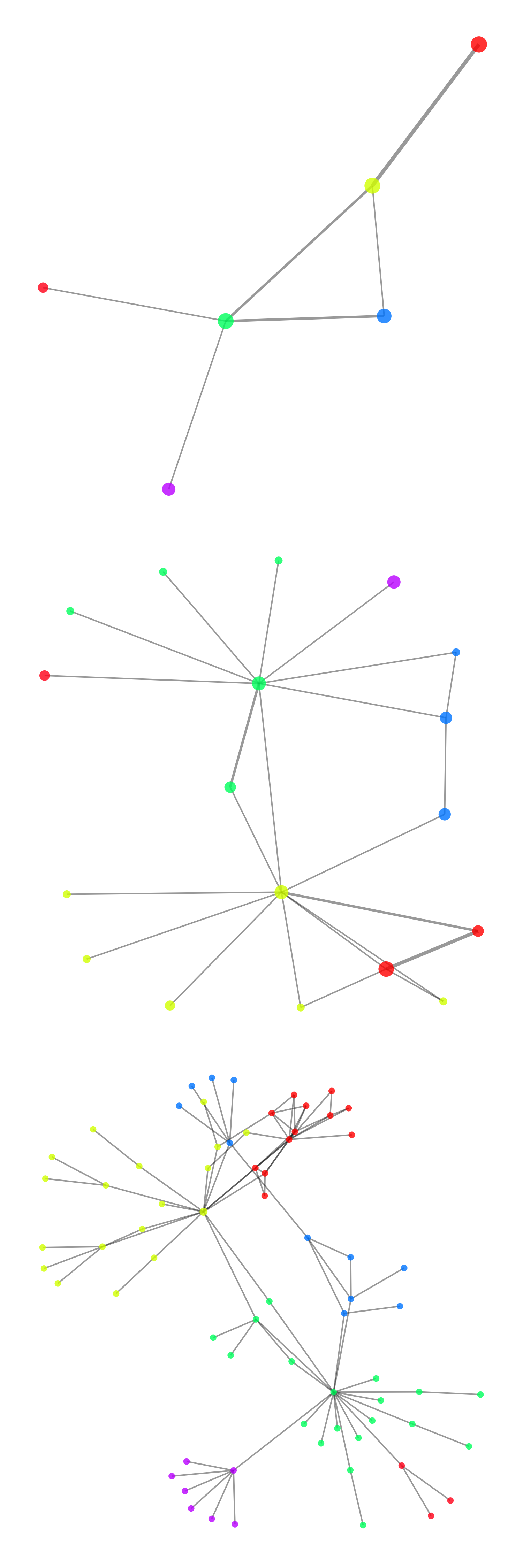}
    \\ \hrule
    \adjincludegraphics[width=.32\textwidth,
		trim={0 {0\height} 0 {0\height}},clip]{./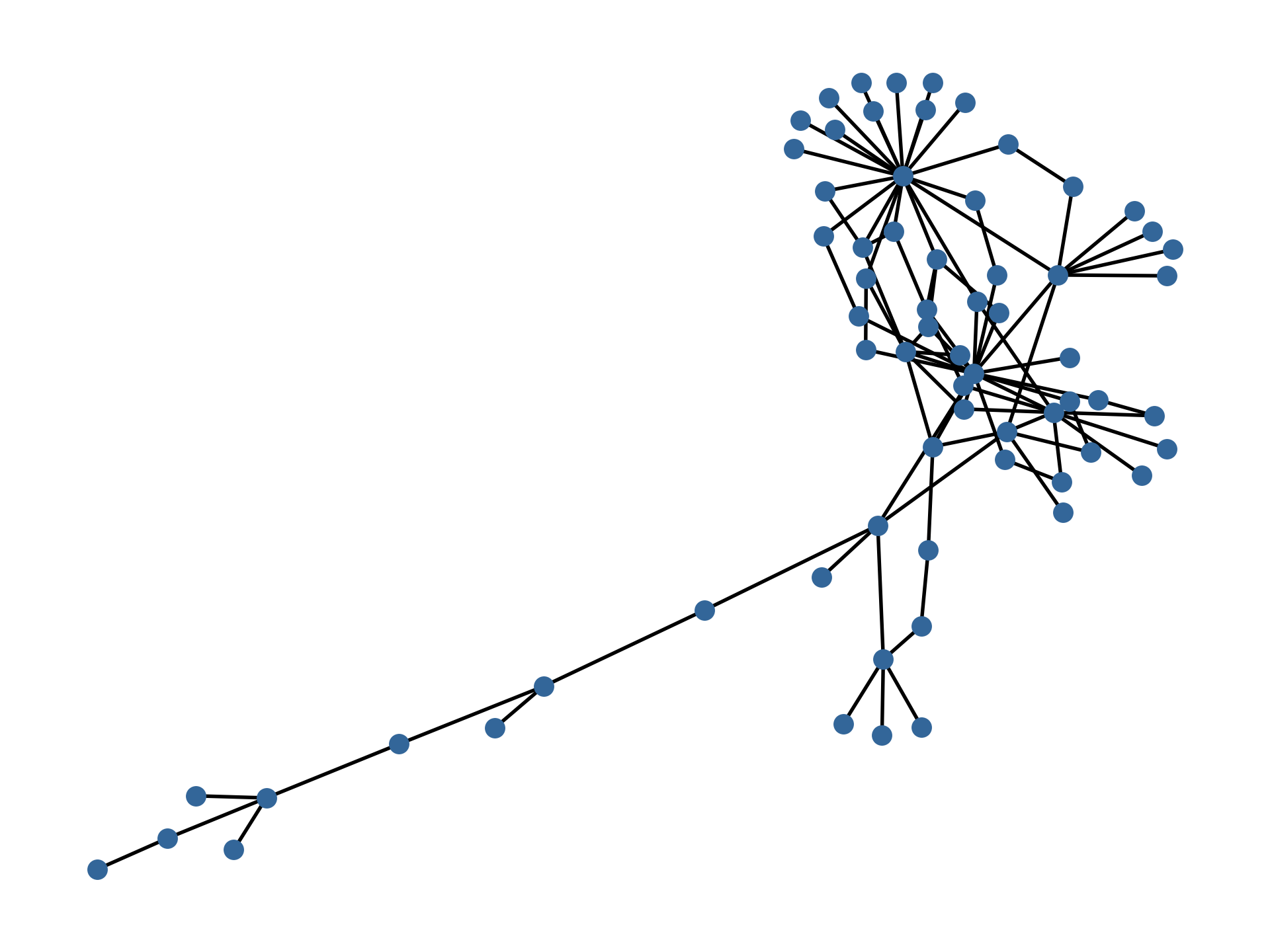}
    \adjincludegraphics[width=.32\textwidth,
		trim={0 {0\height} 0 {0\height}},clip]{./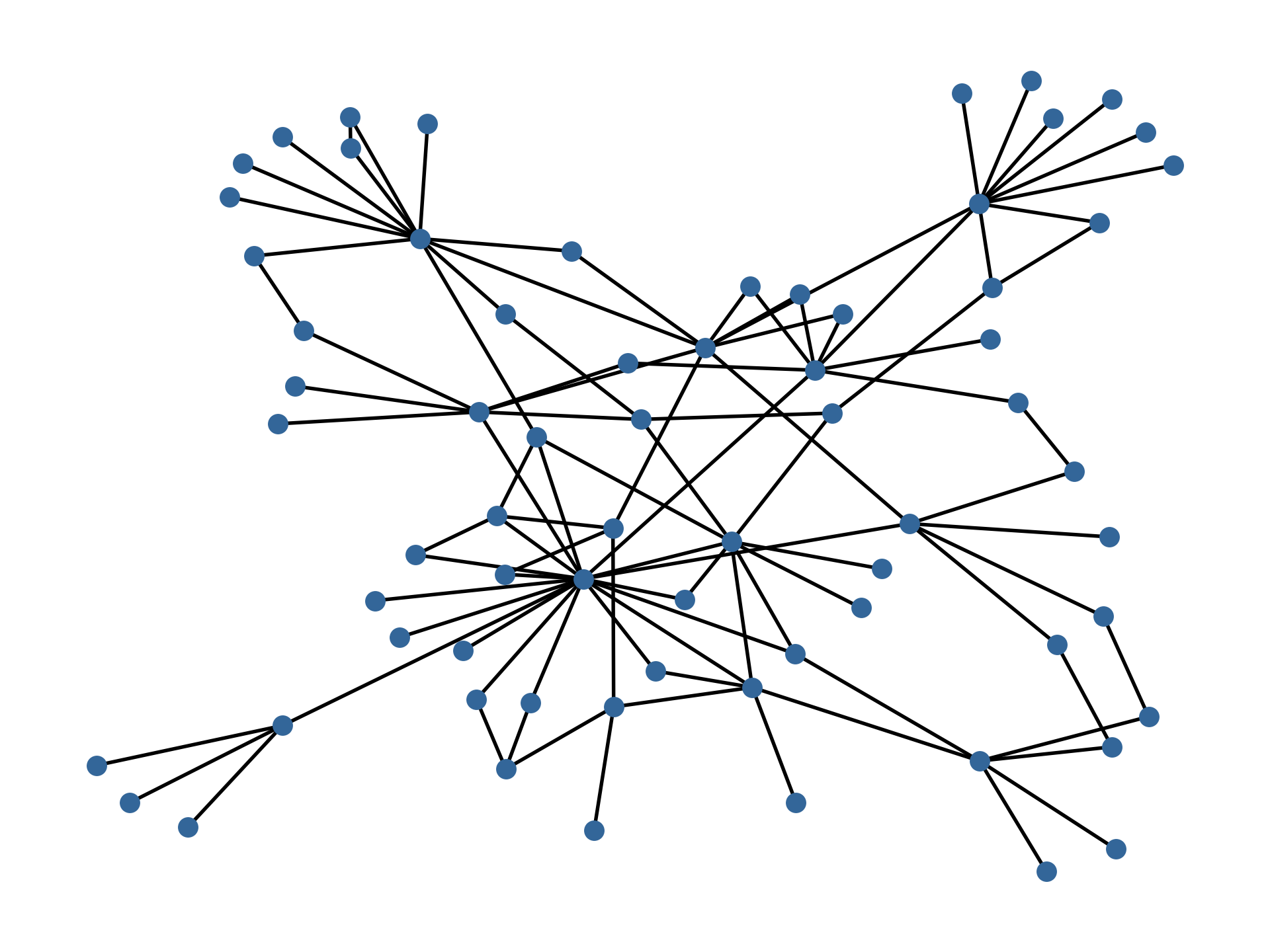}
    \adjincludegraphics[width=.32\textwidth,
		trim={0 {0\height} 0 {0\height}},clip]{./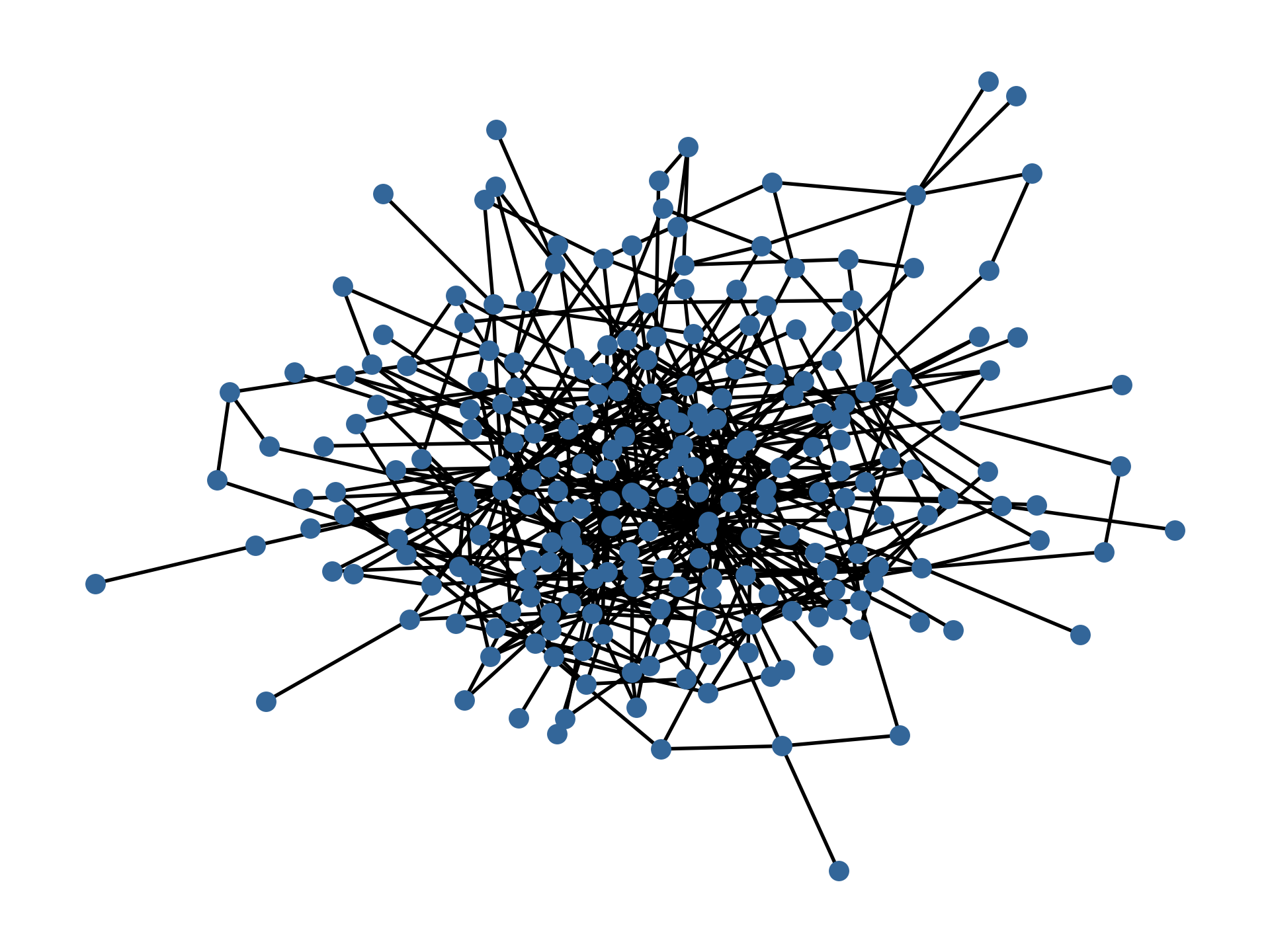}
  \\ \hrule
    \adjincludegraphics[width=.32\textwidth,
		trim={0 {0\height} 0 {0\height}},clip]{./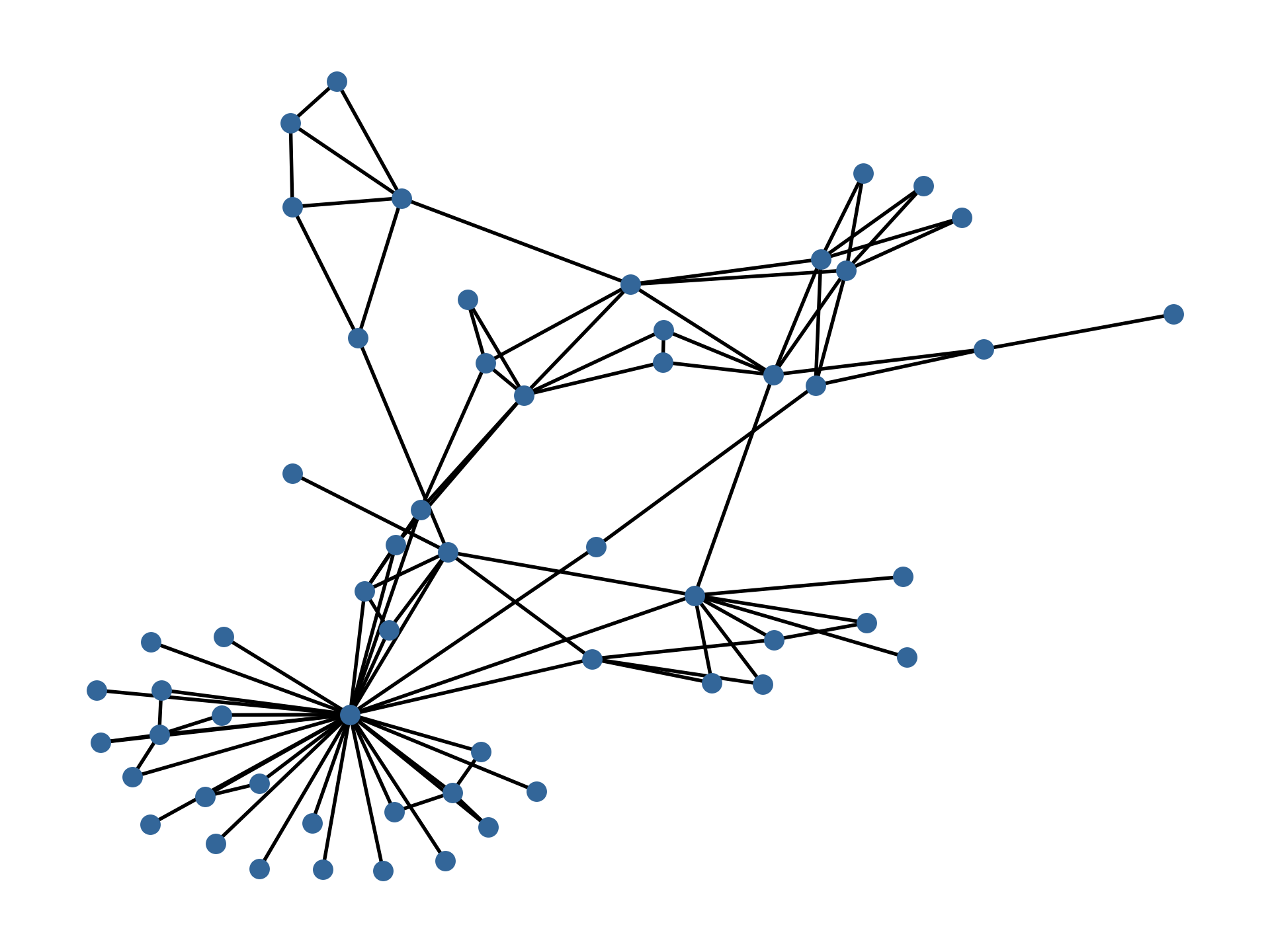}
    \adjincludegraphics[width=.32\textwidth,
		trim={0 {0\height} 0 {0\height}},clip]{./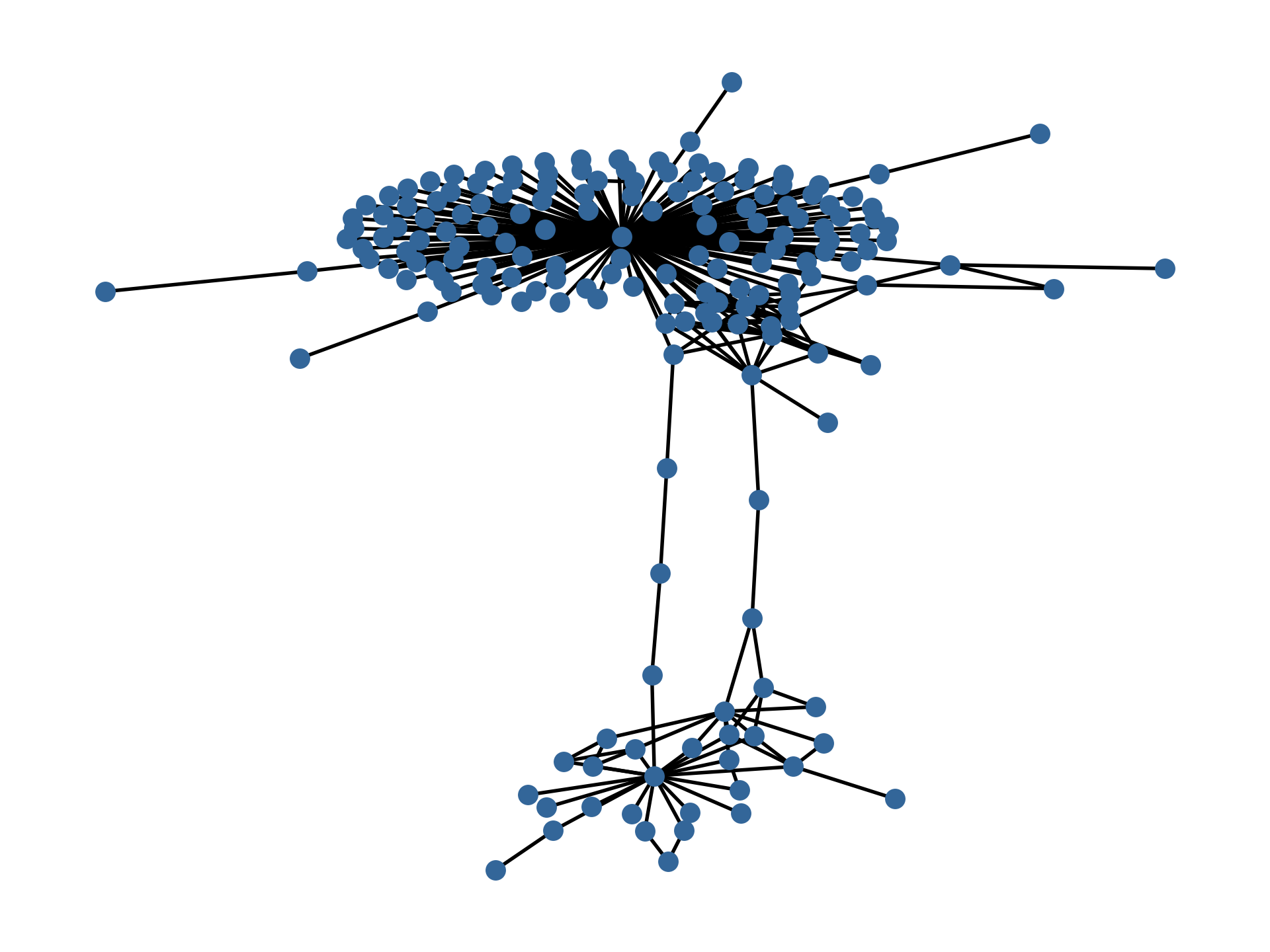}
    \adjincludegraphics[width=.32\textwidth,
		trim={0 {0\height} 0 {0\height}},clip]{./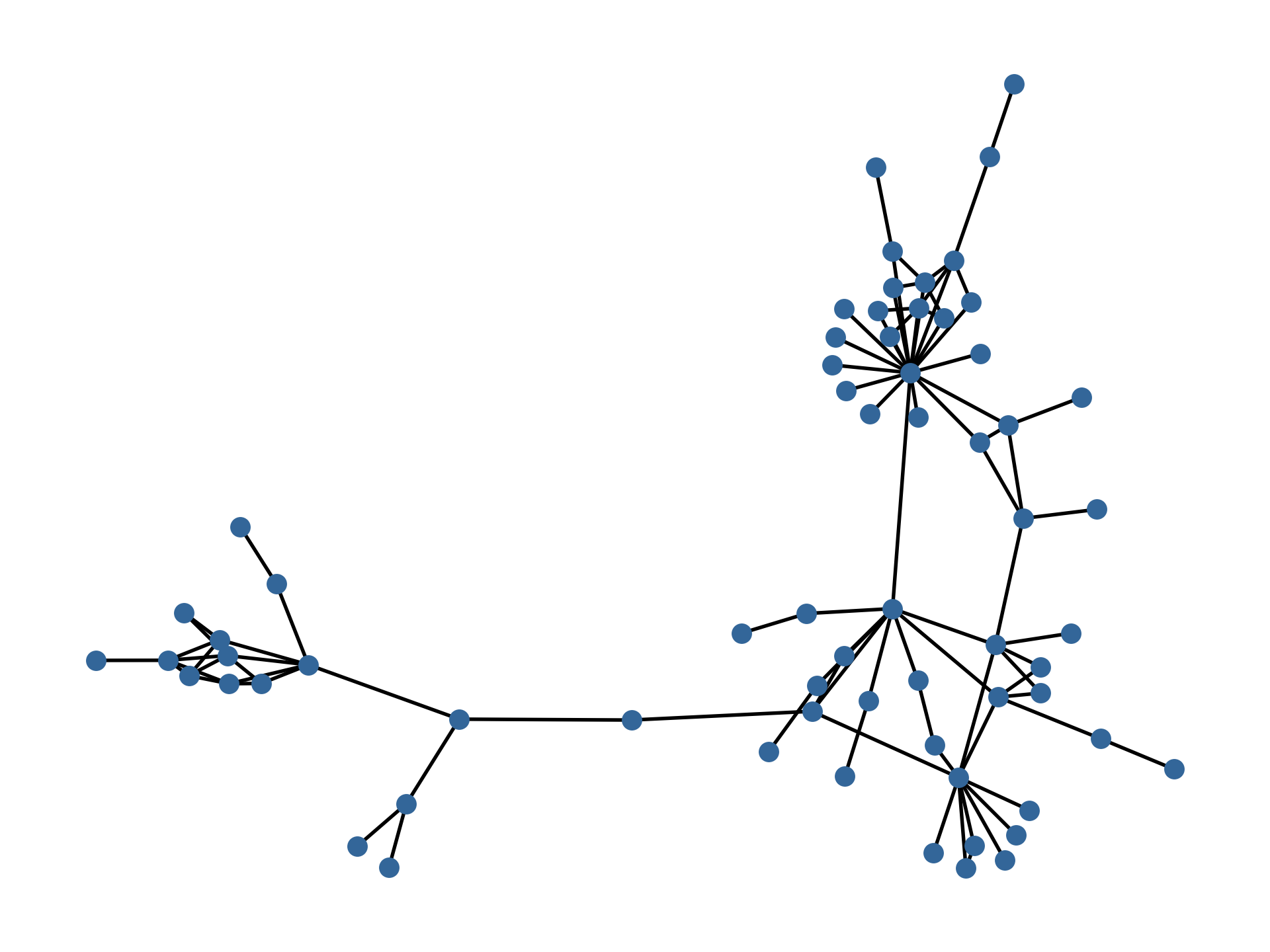}
  \\ \hrule
    \adjincludegraphics[width=.32\textwidth,
		trim={0 {0\height} 0 {.66\height}},clip]{./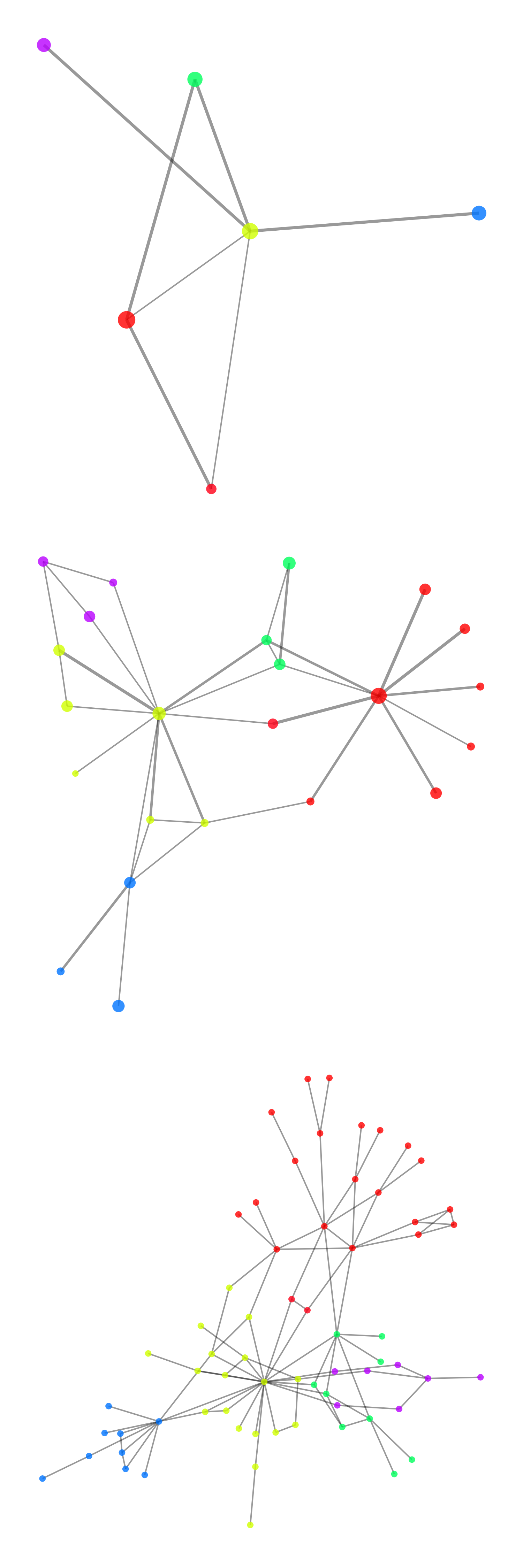}
	\adjincludegraphics[width=.32\textwidth,
		trim={0 {0\height} 0 {.66\height}},clip]{./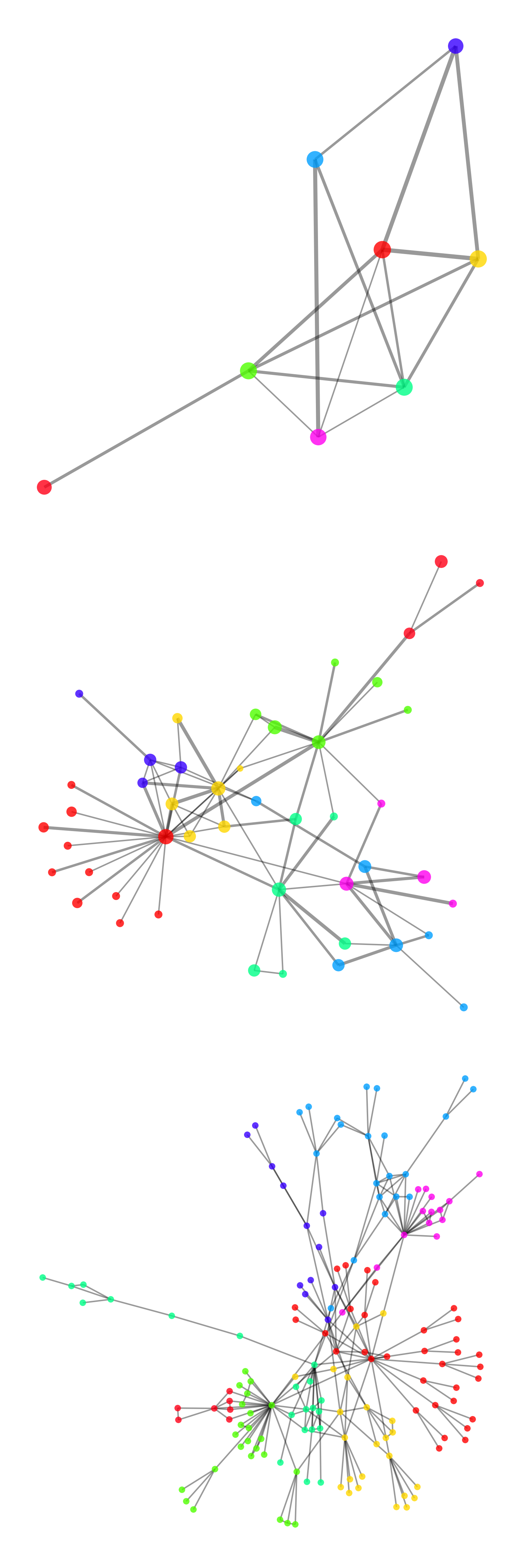}
	\adjincludegraphics[width=.32\textwidth,
		trim={0 {0\height} 0 {.66\height}},clip]{./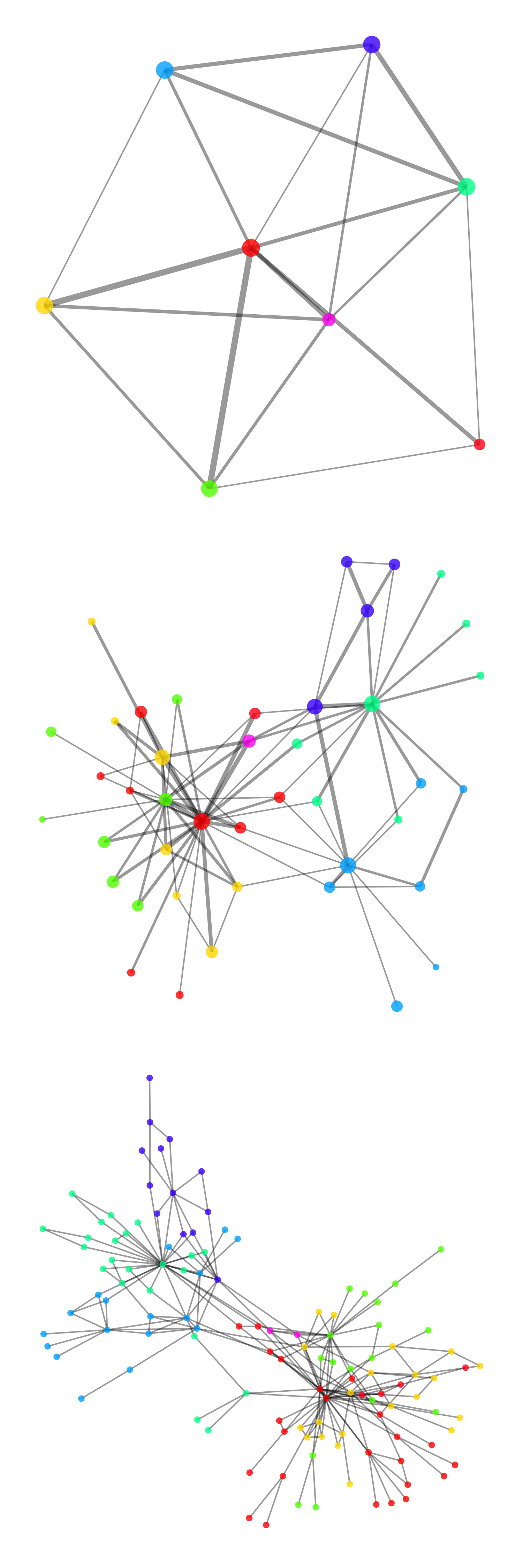}
    \end{minipage} &\hfill
    \begin{minipage}{.28\linewidth}
    \begin{center} Protein \end{center}
    \adjincludegraphics[width=.32\textwidth,
		trim={0 {0\height} 0 {.66\height}},clip]{./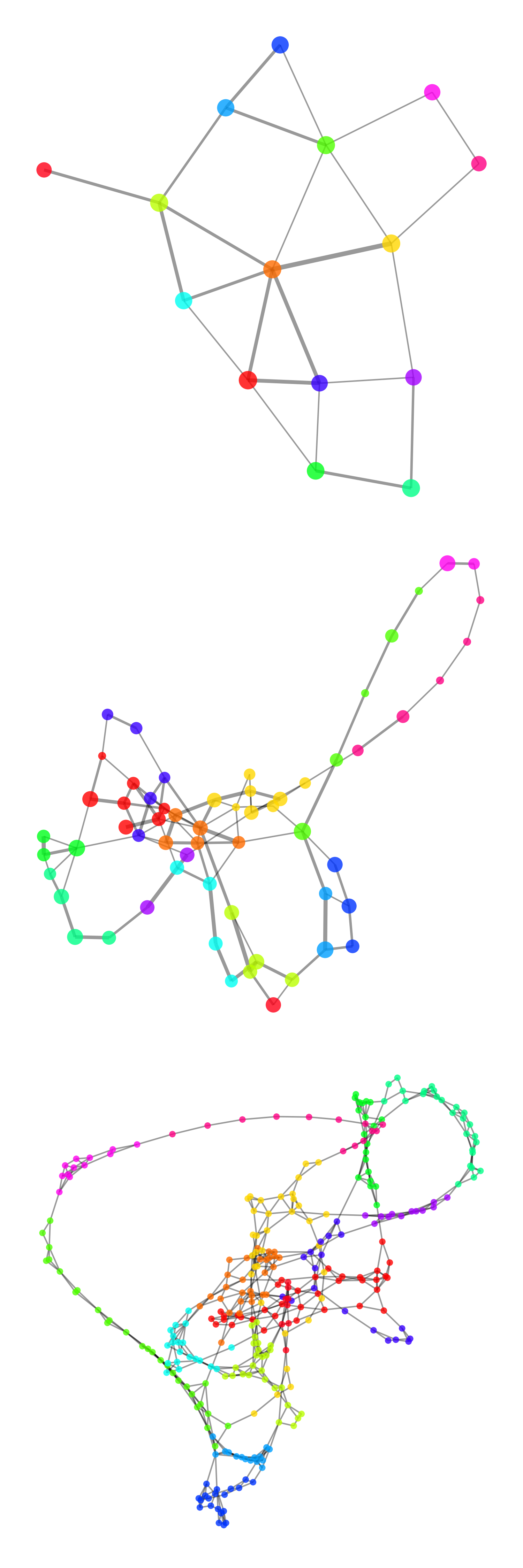}
    \adjincludegraphics[width=.32\textwidth,
		trim={0 {0\height} 0 {.66\height}},clip]{./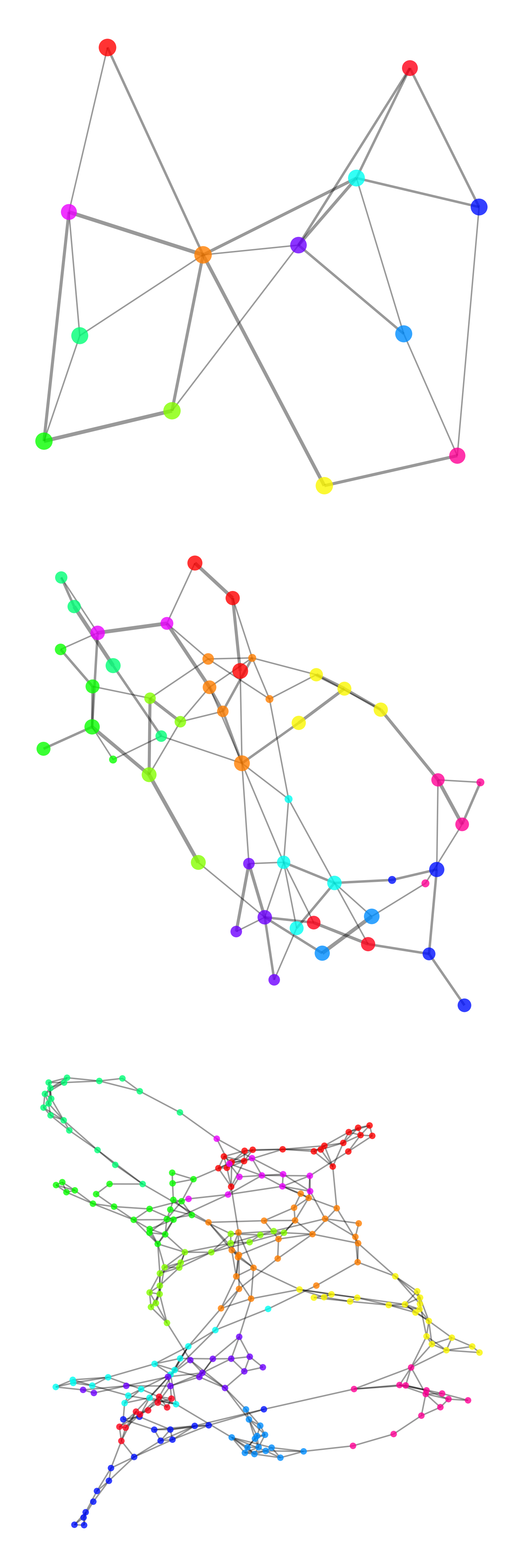}
    \adjincludegraphics[width=.32\textwidth,
		trim={0 {0\height} 0 {.66\height}},clip]{./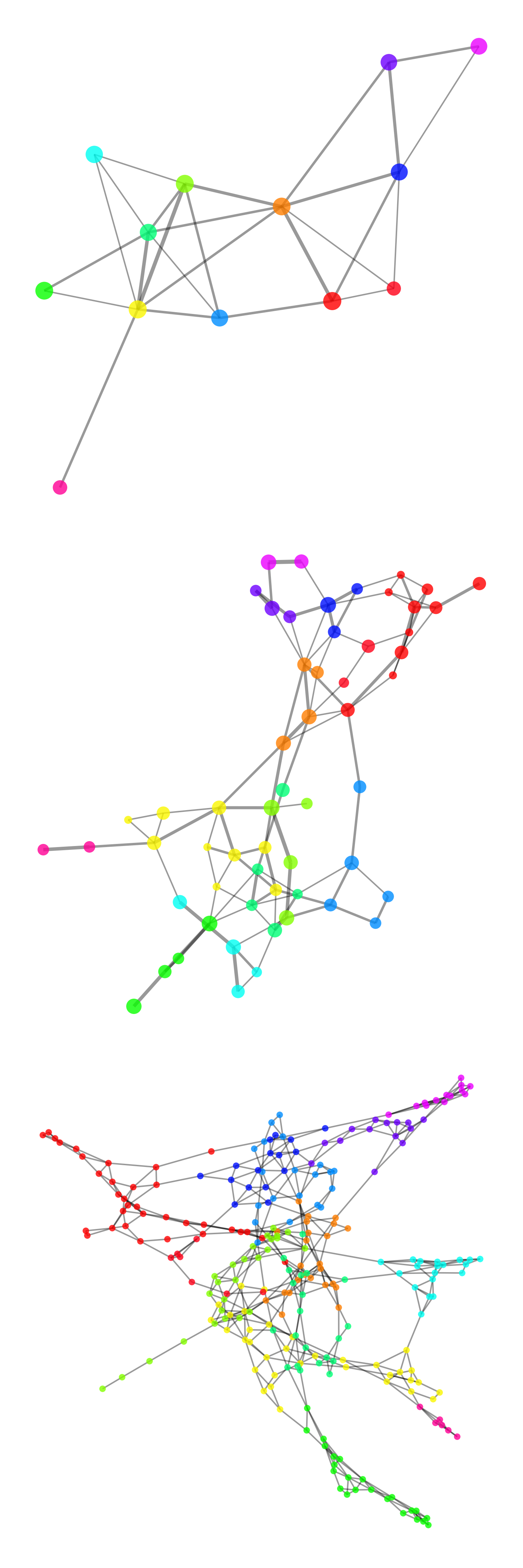}
    \\ \hrule
    \adjincludegraphics[width=.32\textwidth,
		trim={0 {0\height} 0 {0\height}},clip]{./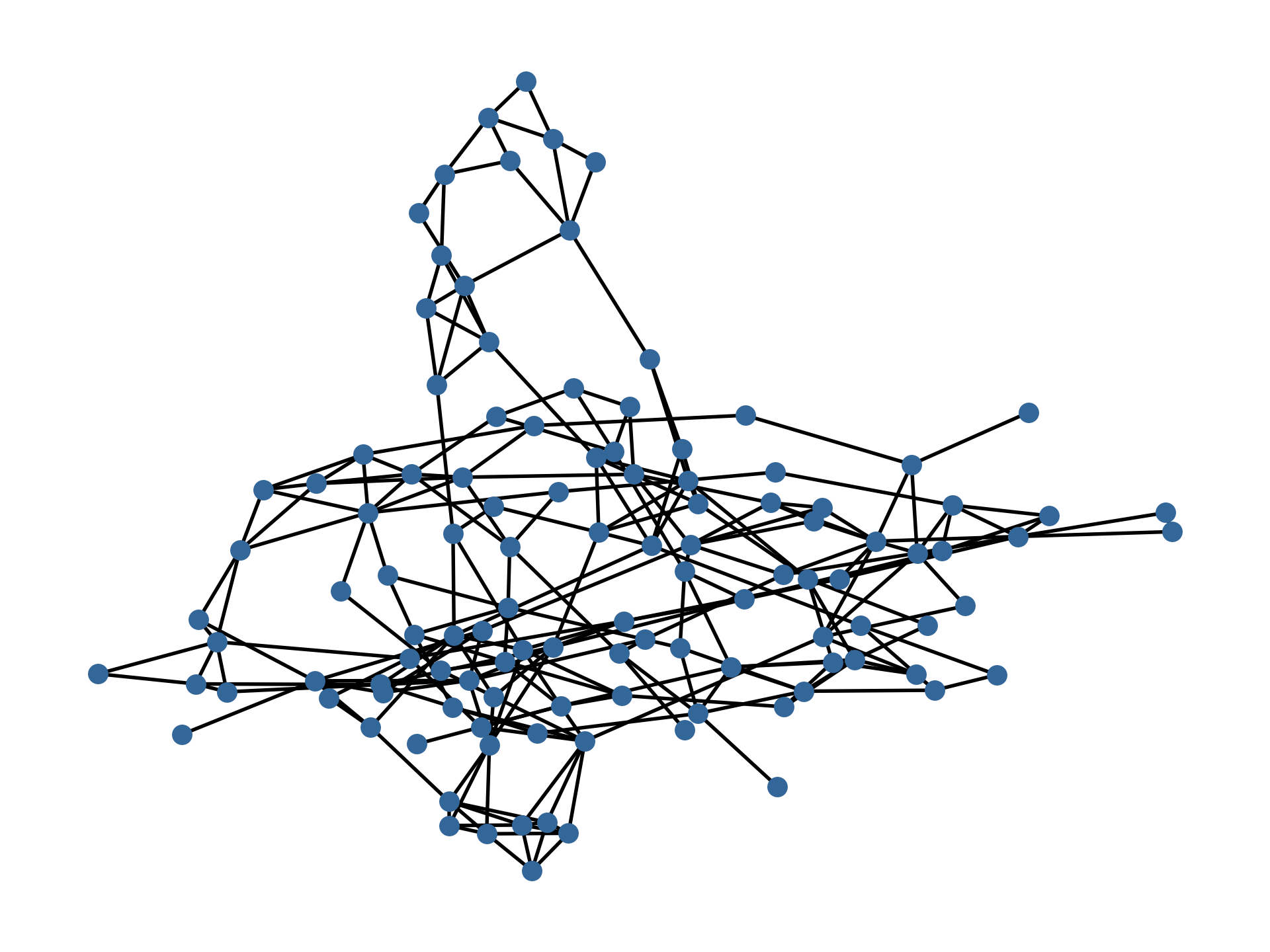}
    \adjincludegraphics[width=.32\textwidth,
		trim={0 {0\height} 0 {0\height}},clip]{./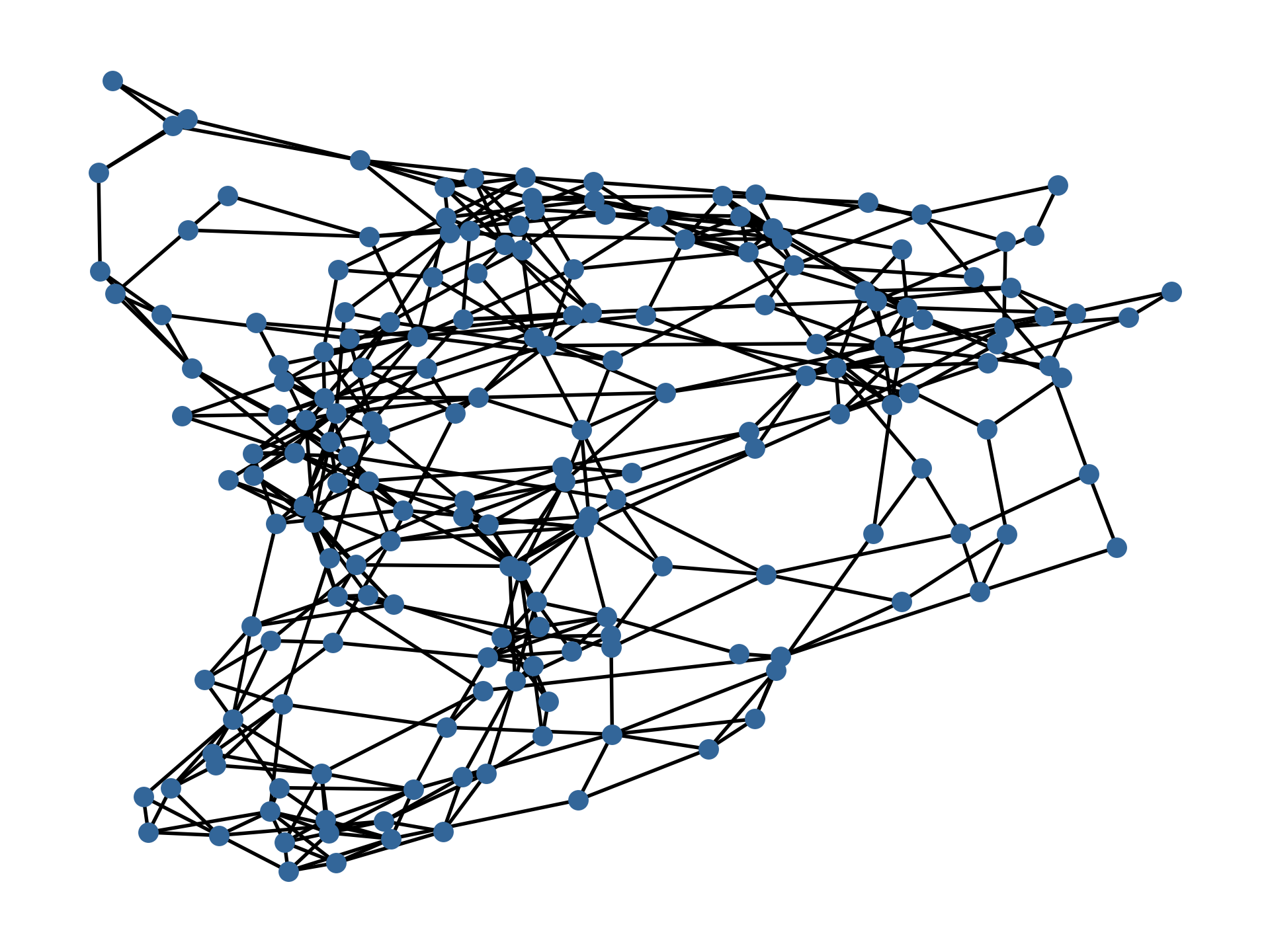}
    \adjincludegraphics[width=.32\textwidth,
		trim={0 {0\height} 0 {0\height}},clip]{./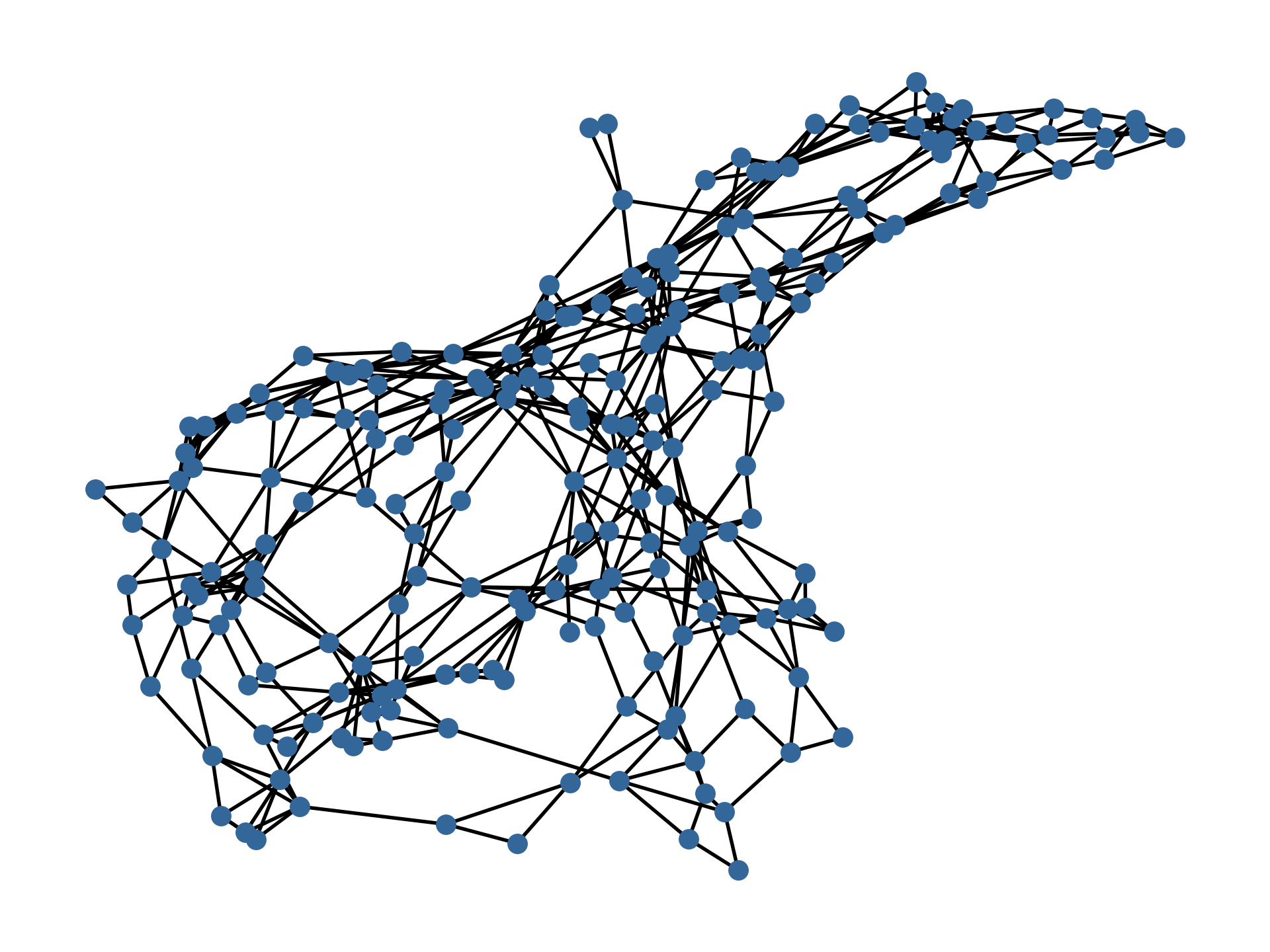}
  \\ \hrule
  \adjincludegraphics[width=.32\textwidth,
		trim={0 {0\height} 0 {0\height}},clip]{./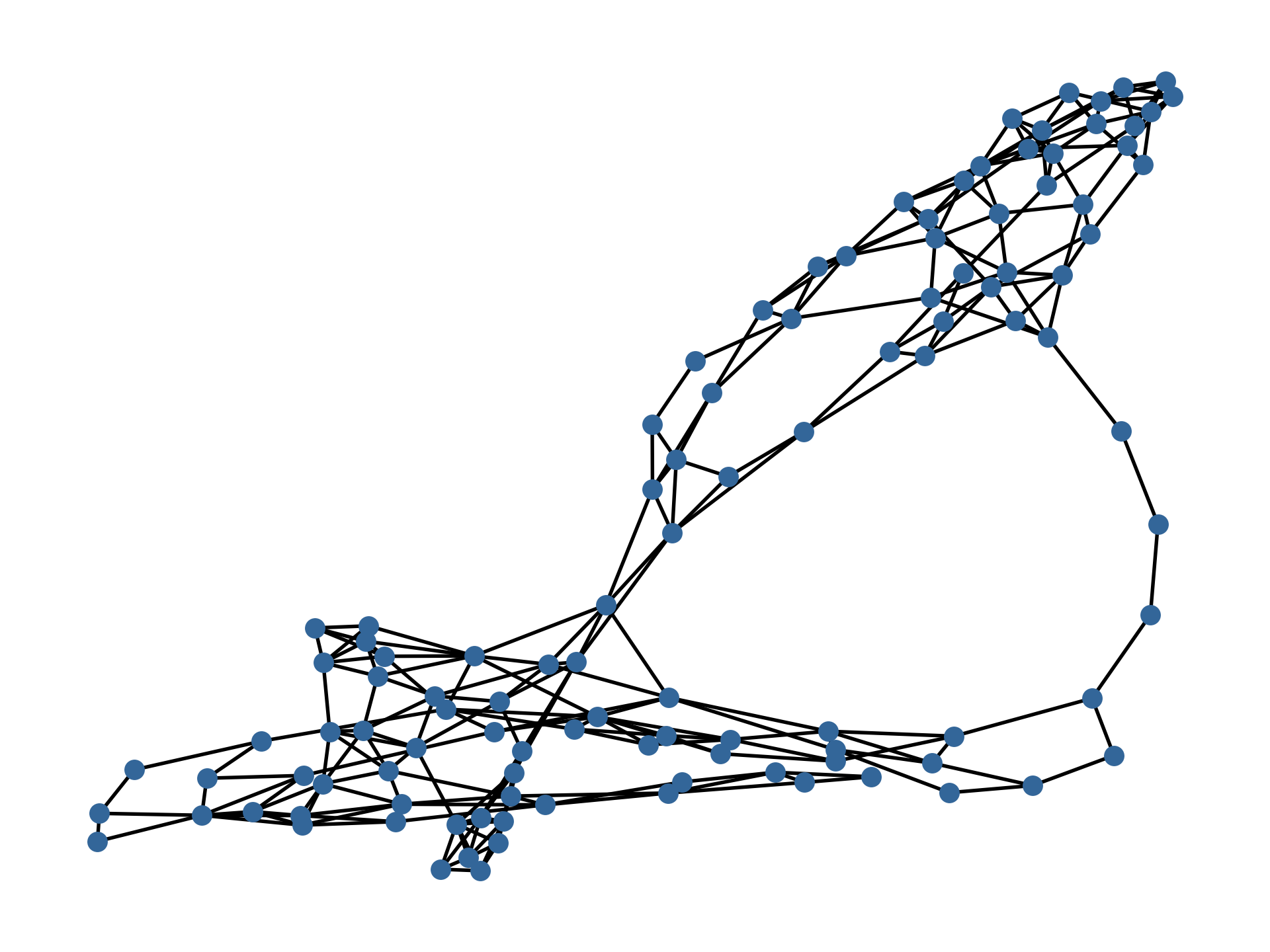}
    \adjincludegraphics[width=.32\textwidth,
		trim={0 {0\height} 0 {0\height}},clip]{./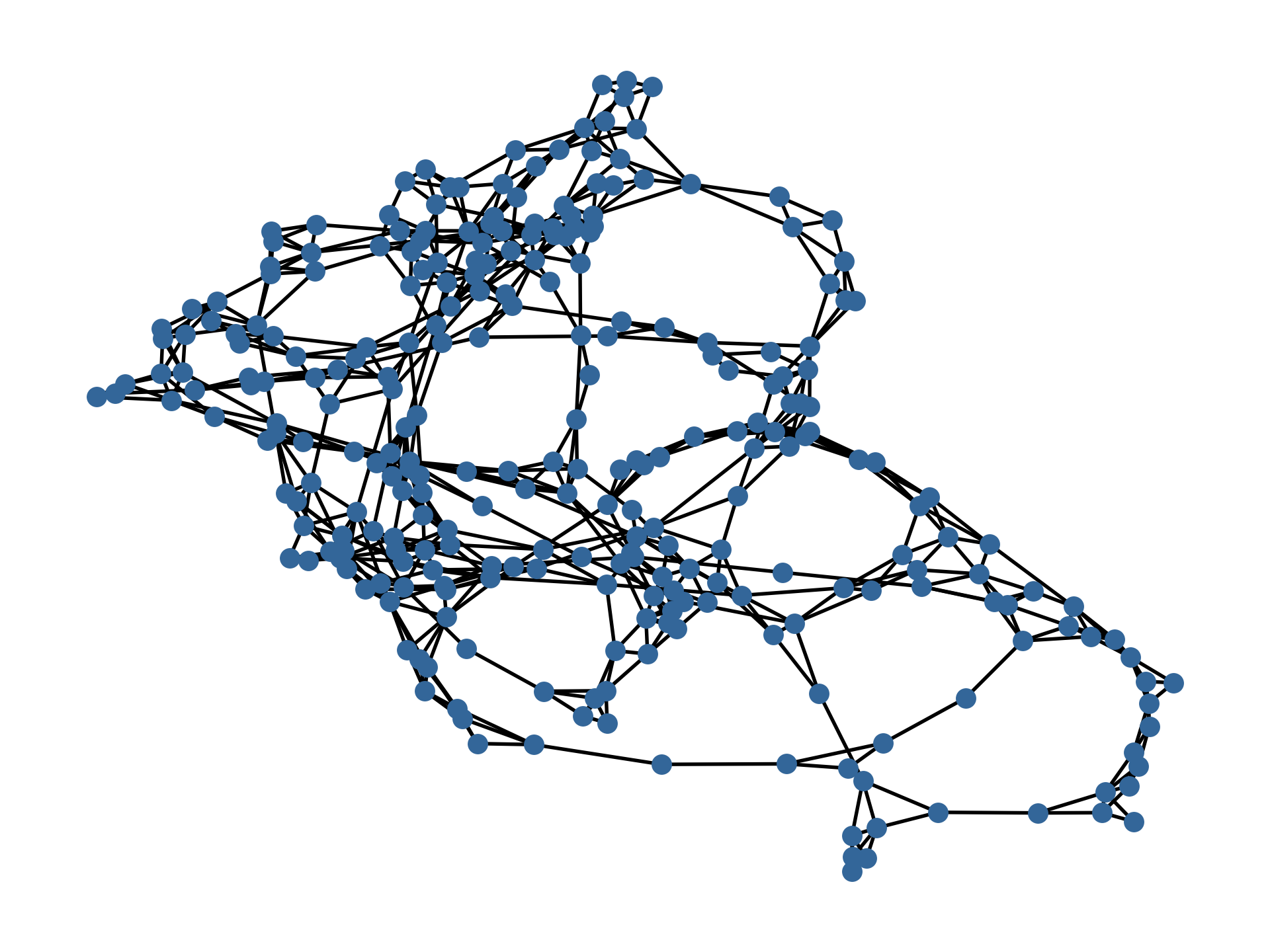}
    \adjincludegraphics[width=.32\textwidth,
		trim={0 {0\height} 0 {0\height}},clip]{./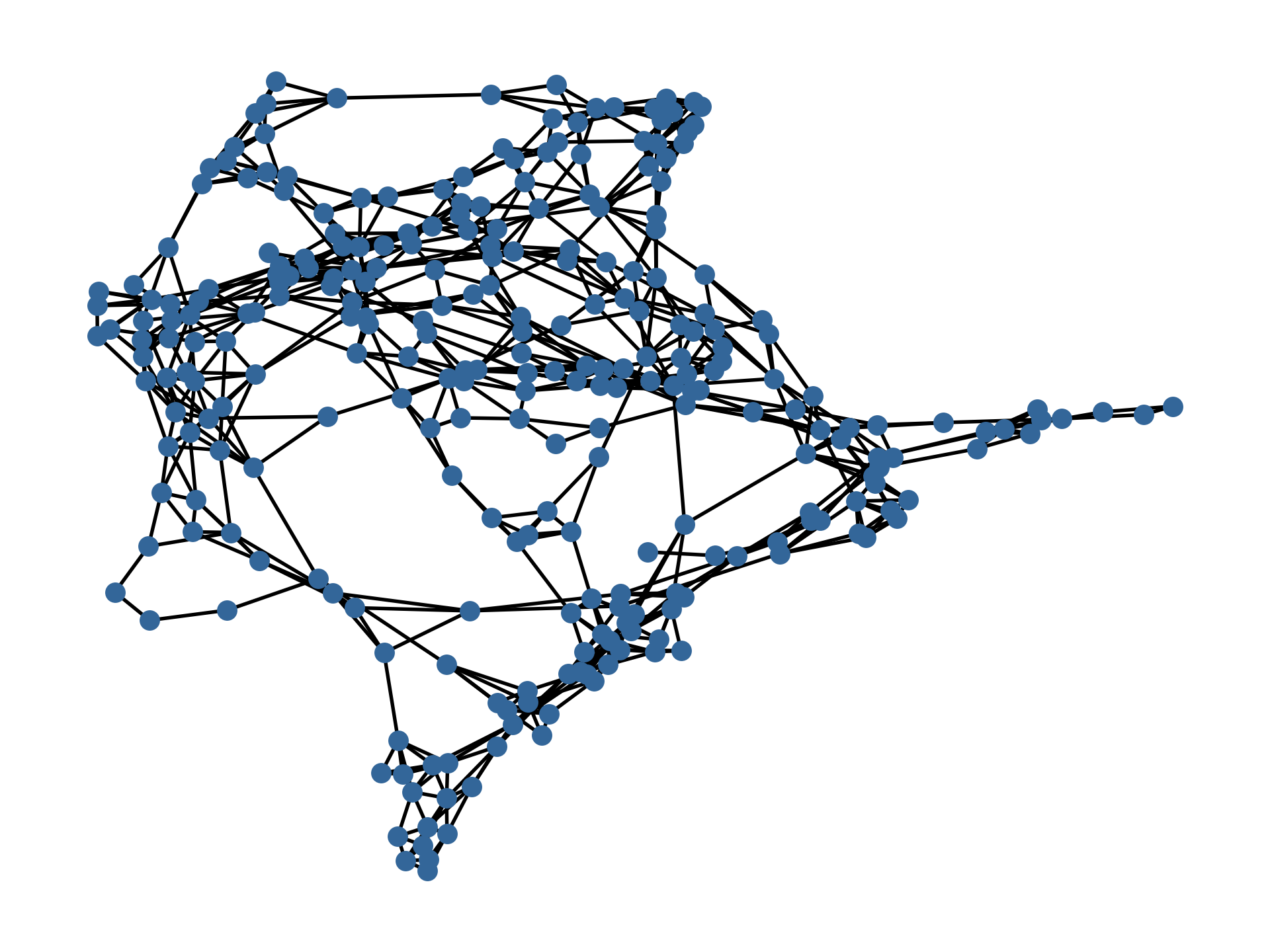}
  \\ \hrule
    \adjincludegraphics[width=.32\textwidth,
		trim={0 {0\height} 0 {.66\height}},clip]{./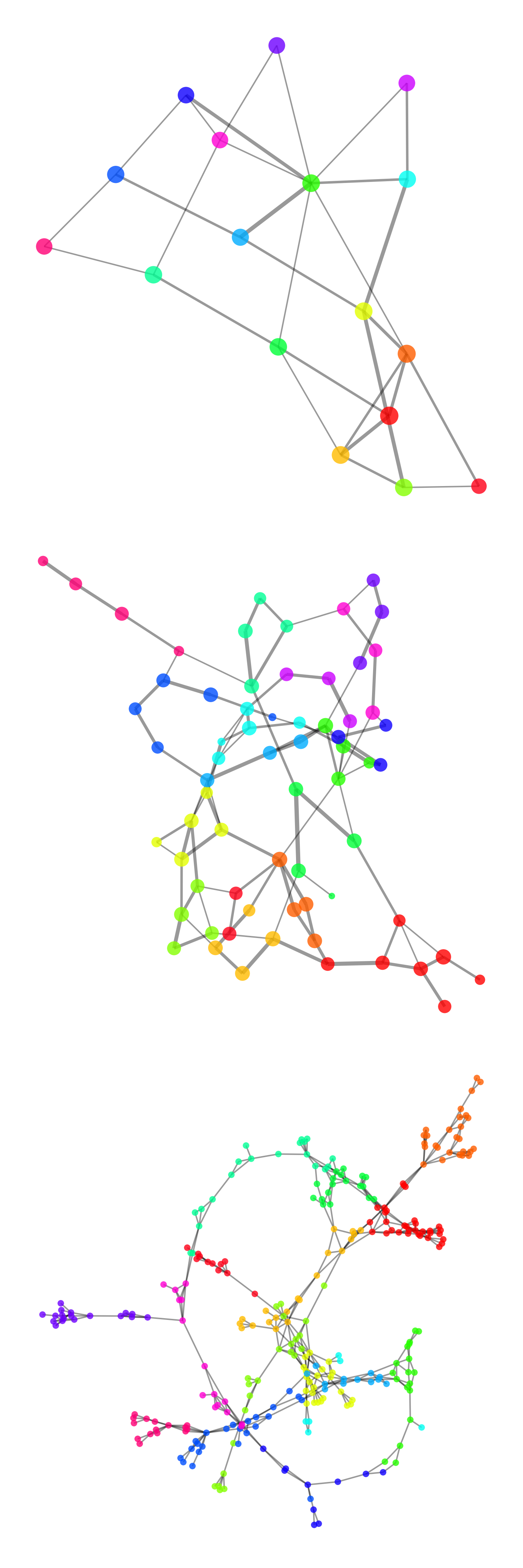}
	\adjincludegraphics[width=.32\textwidth,
		trim={0 {0\height} 0 {.66\height}},clip]{./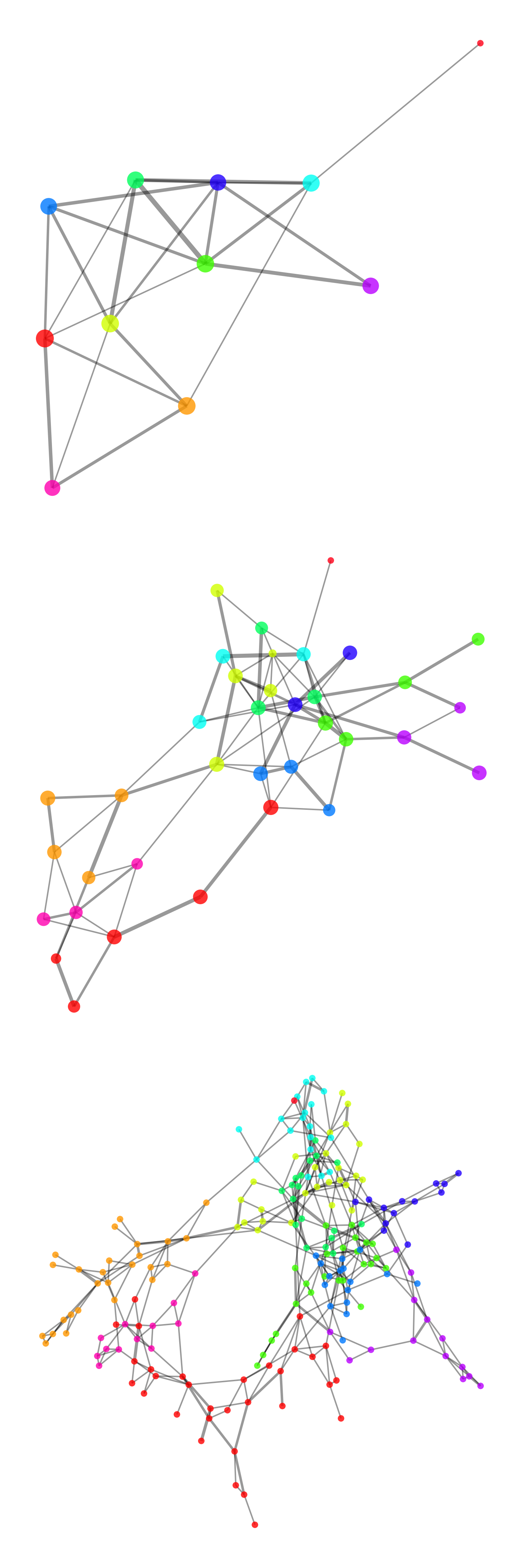}
	\adjincludegraphics[width=.32\textwidth,
		trim={0 {0\height} 0 {.66\height}},clip]{./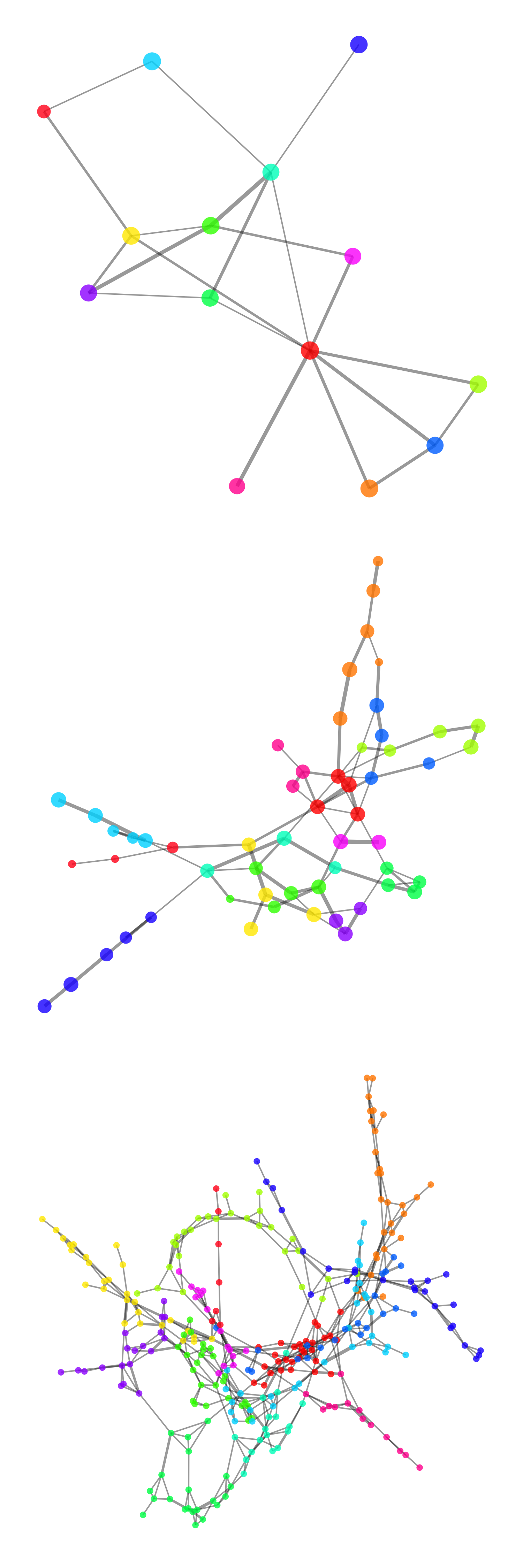}
    \end{minipage} &\hfill
    \begin{minipage}{.28\linewidth}
    \begin{center} PPG \end{center}
    \adjincludegraphics[width=.32\textwidth,
		trim={0 {0\height} 0 {.5\height}},clip]{./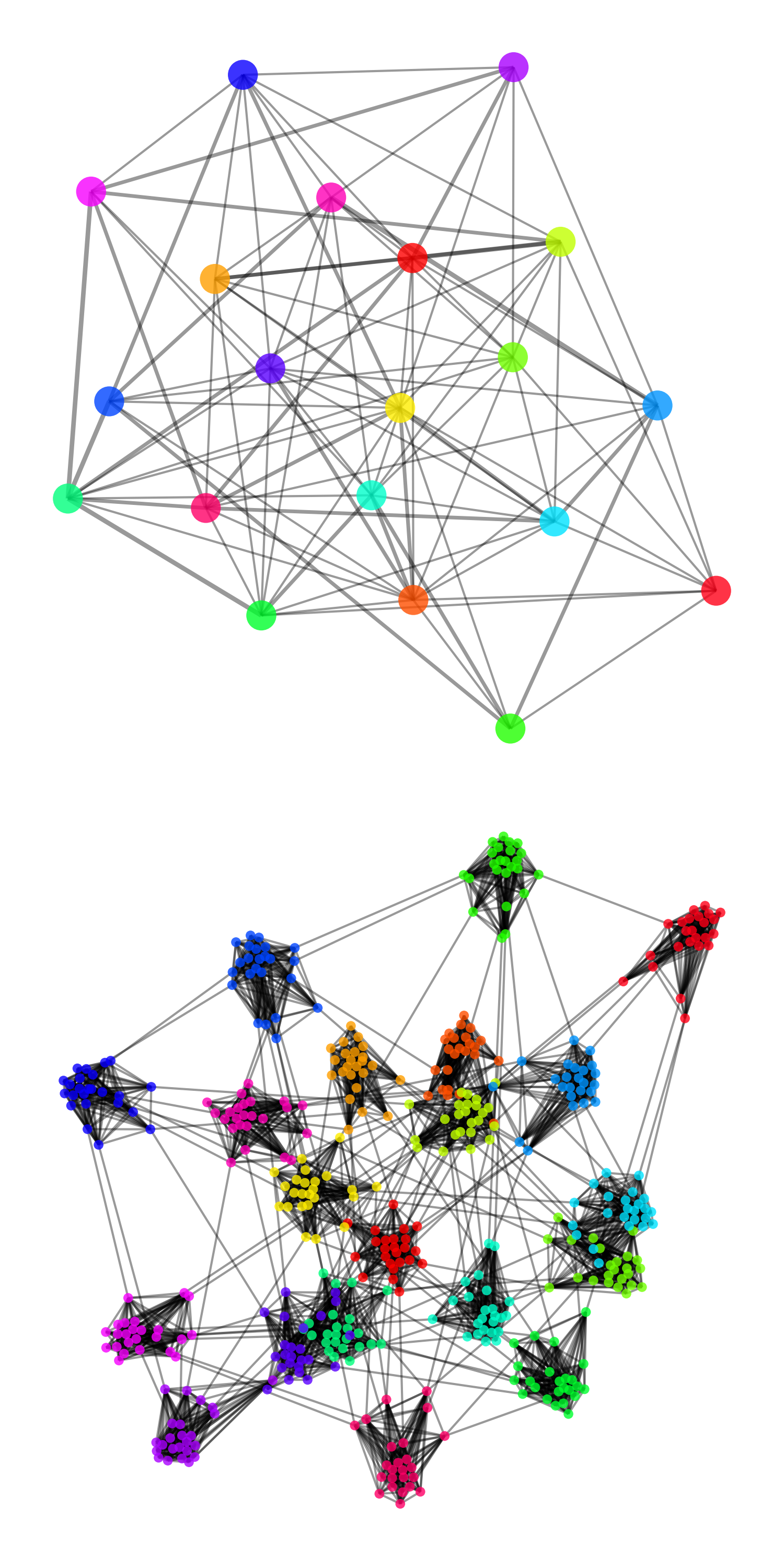}
    \adjincludegraphics[width=.32\textwidth,
		trim={0 {0\height} 0 {.5\height}},clip]{./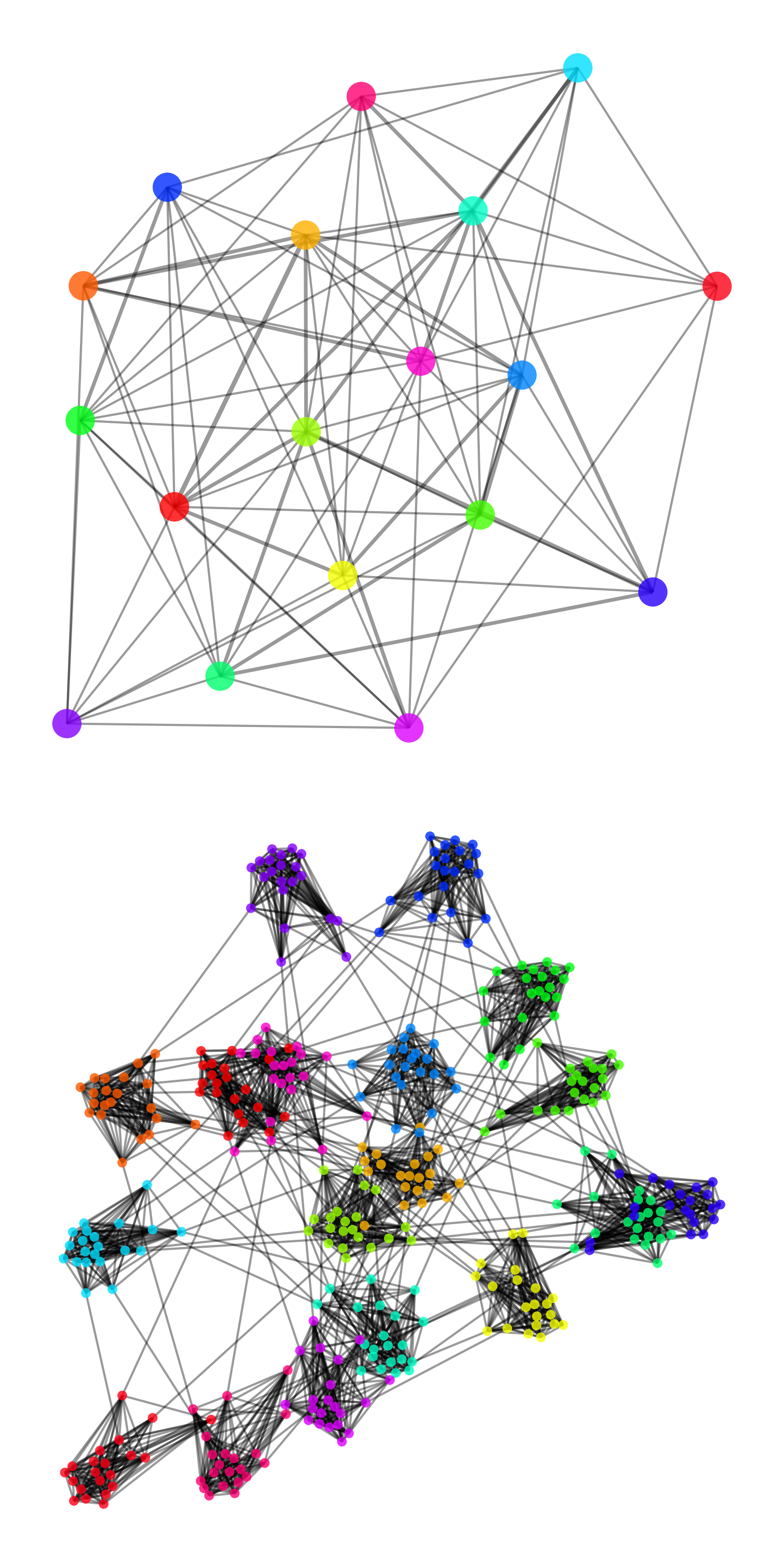}
    \adjincludegraphics[width=.32\textwidth,
		trim={0 {0\height} 0 {.5\height}},clip]{./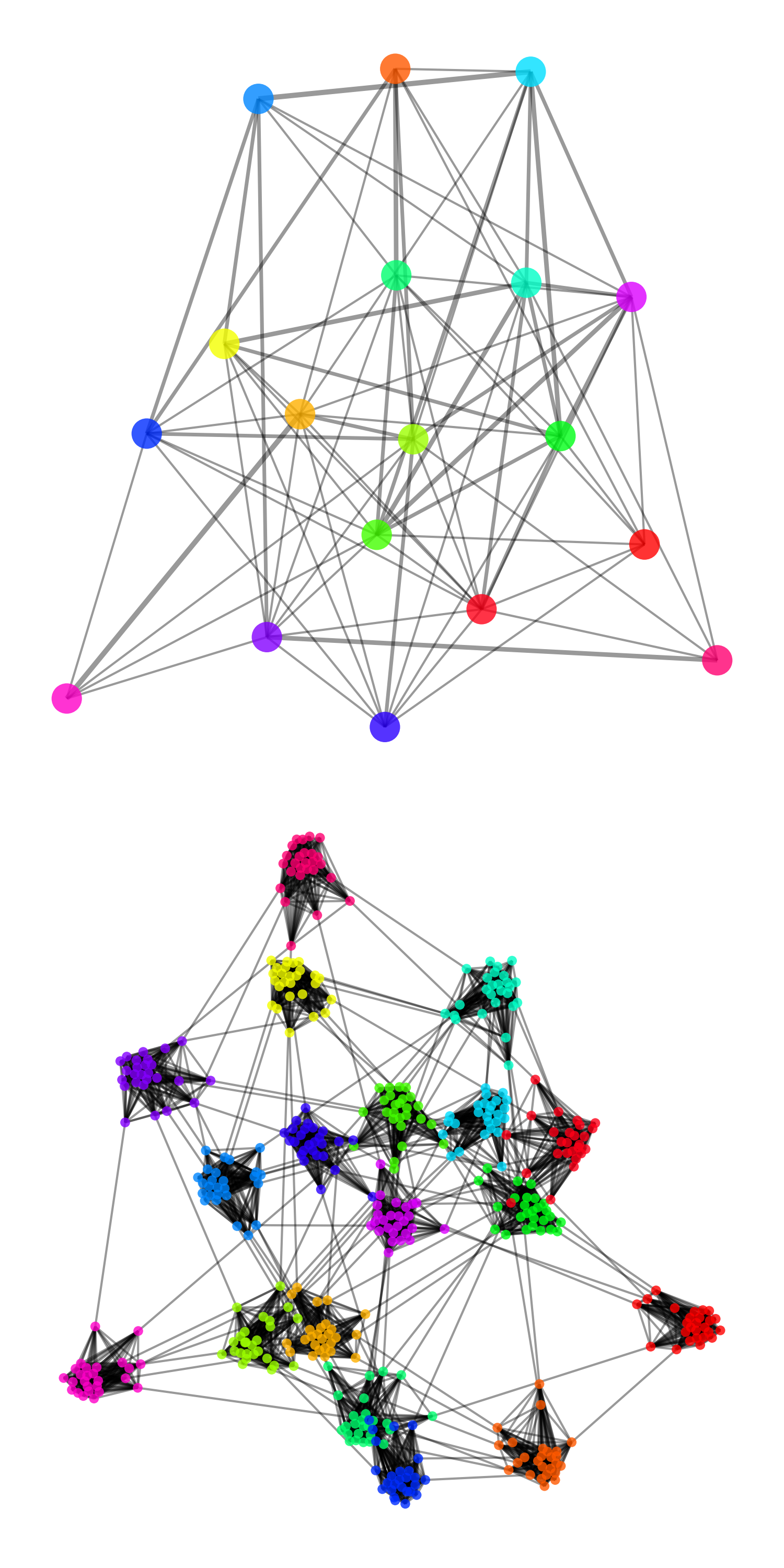}
    \\ \hrule
    \adjincludegraphics[width=.32\textwidth,
		trim={0 {0\height} 0 {0\height}},clip]{./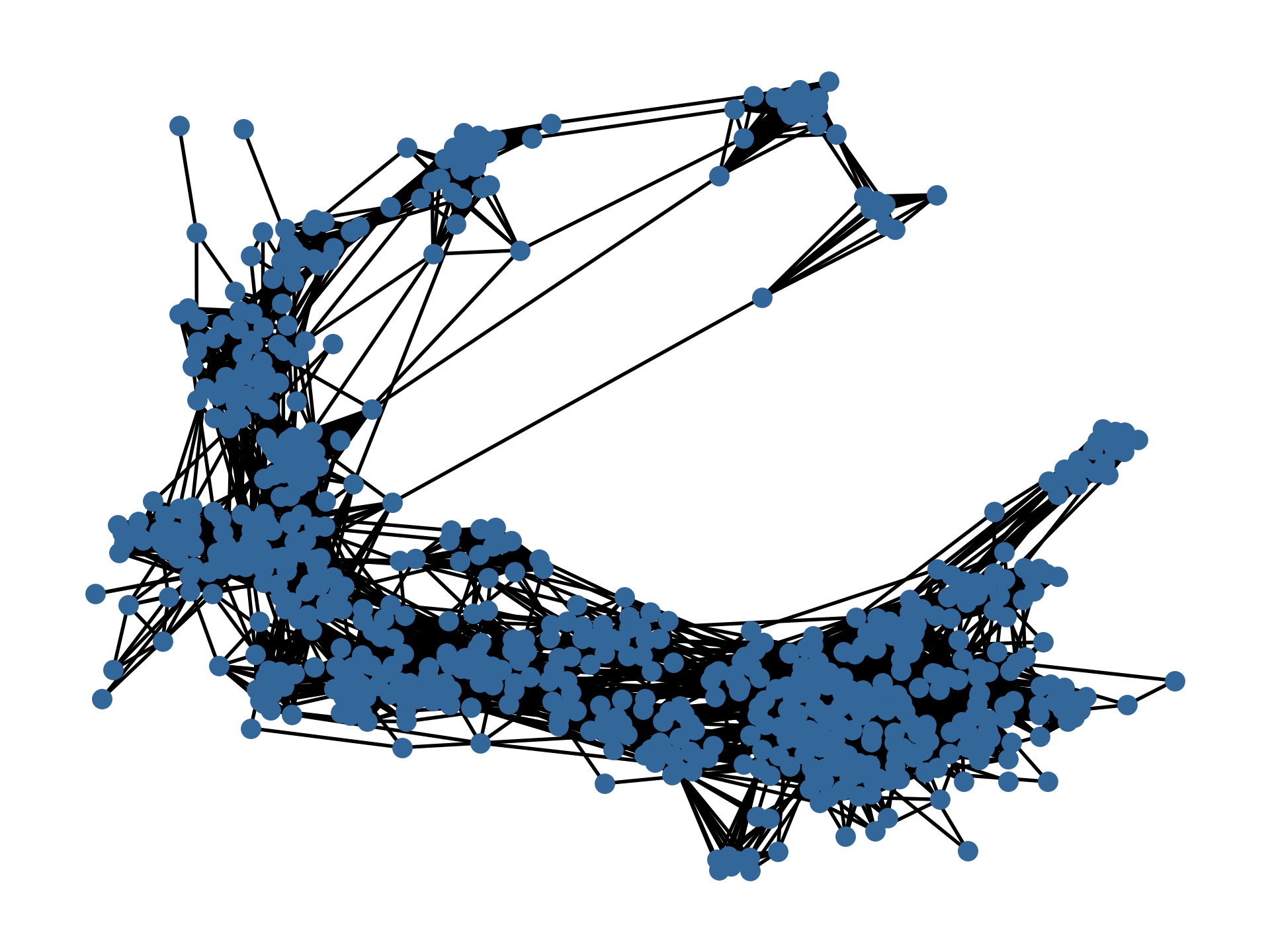}
    \adjincludegraphics[width=.32\textwidth,
		trim={0 {0\height} 0 {0\height}},clip]{./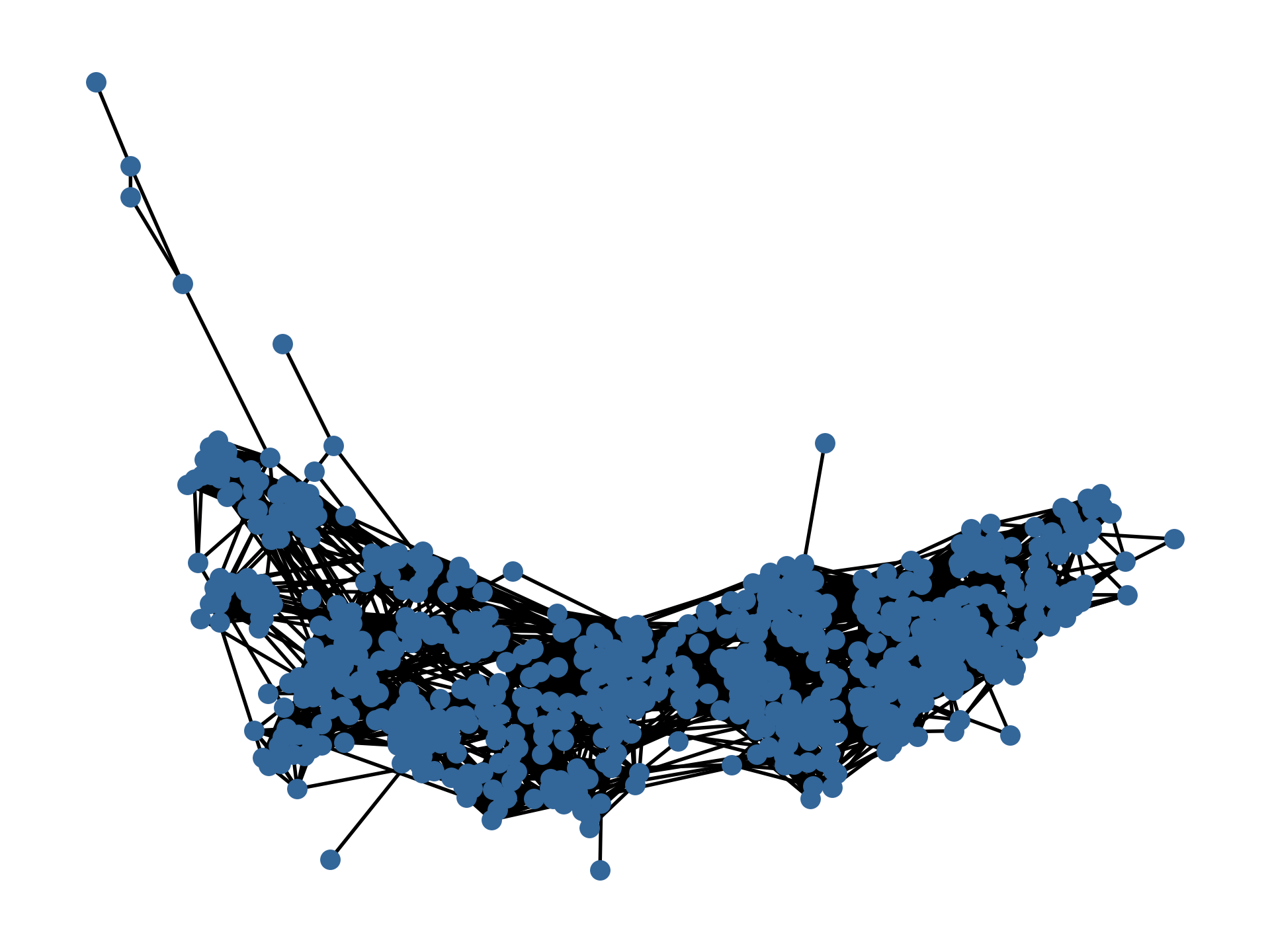}
    \adjincludegraphics[width=.32\textwidth,
		trim={0 {0\height} 0 {0\height}},clip]{./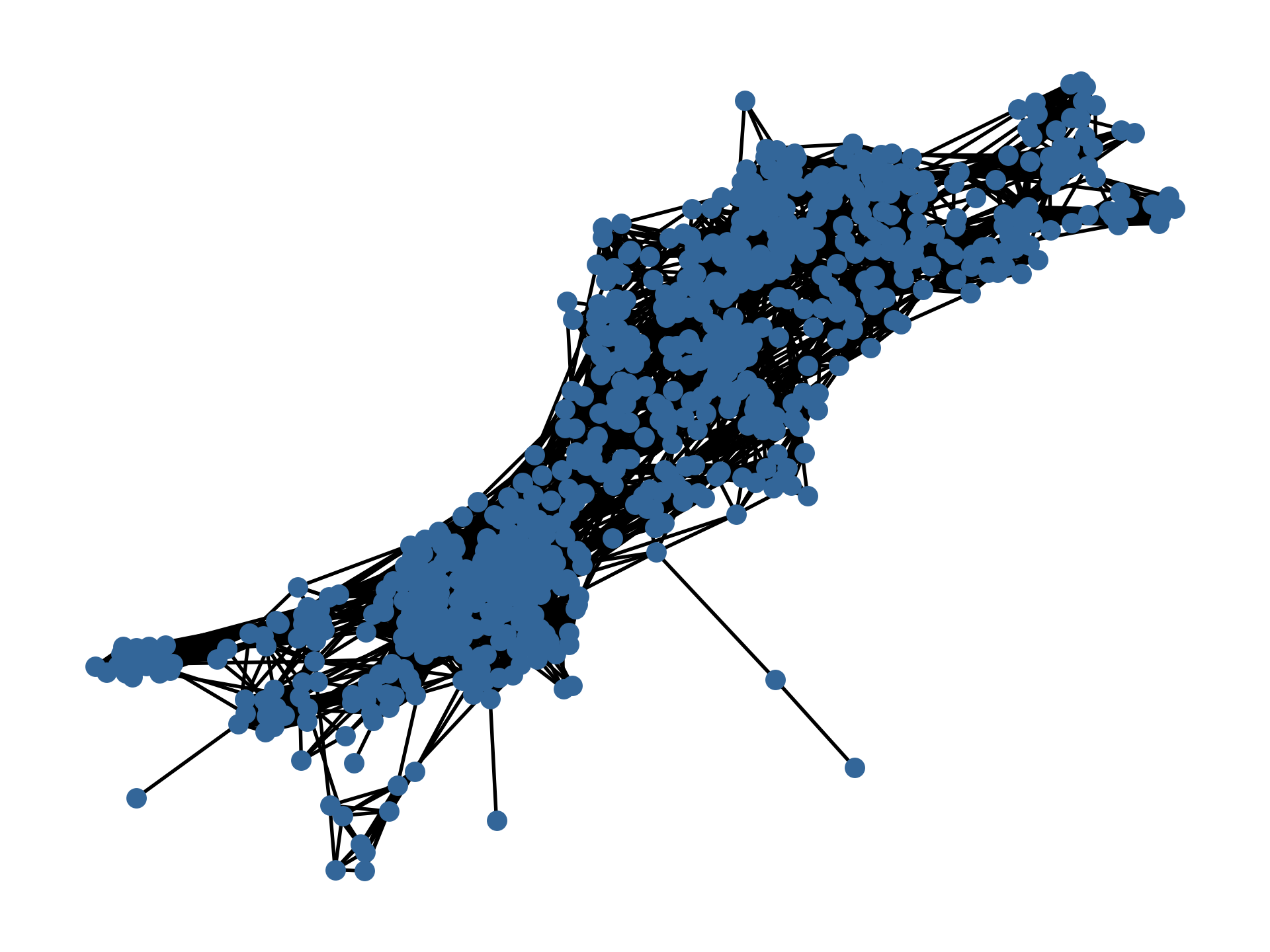}
  \\ \hrule
    \adjincludegraphics[width=.32\textwidth,
		trim={0 {0\height} 0 {0\height}},clip]{./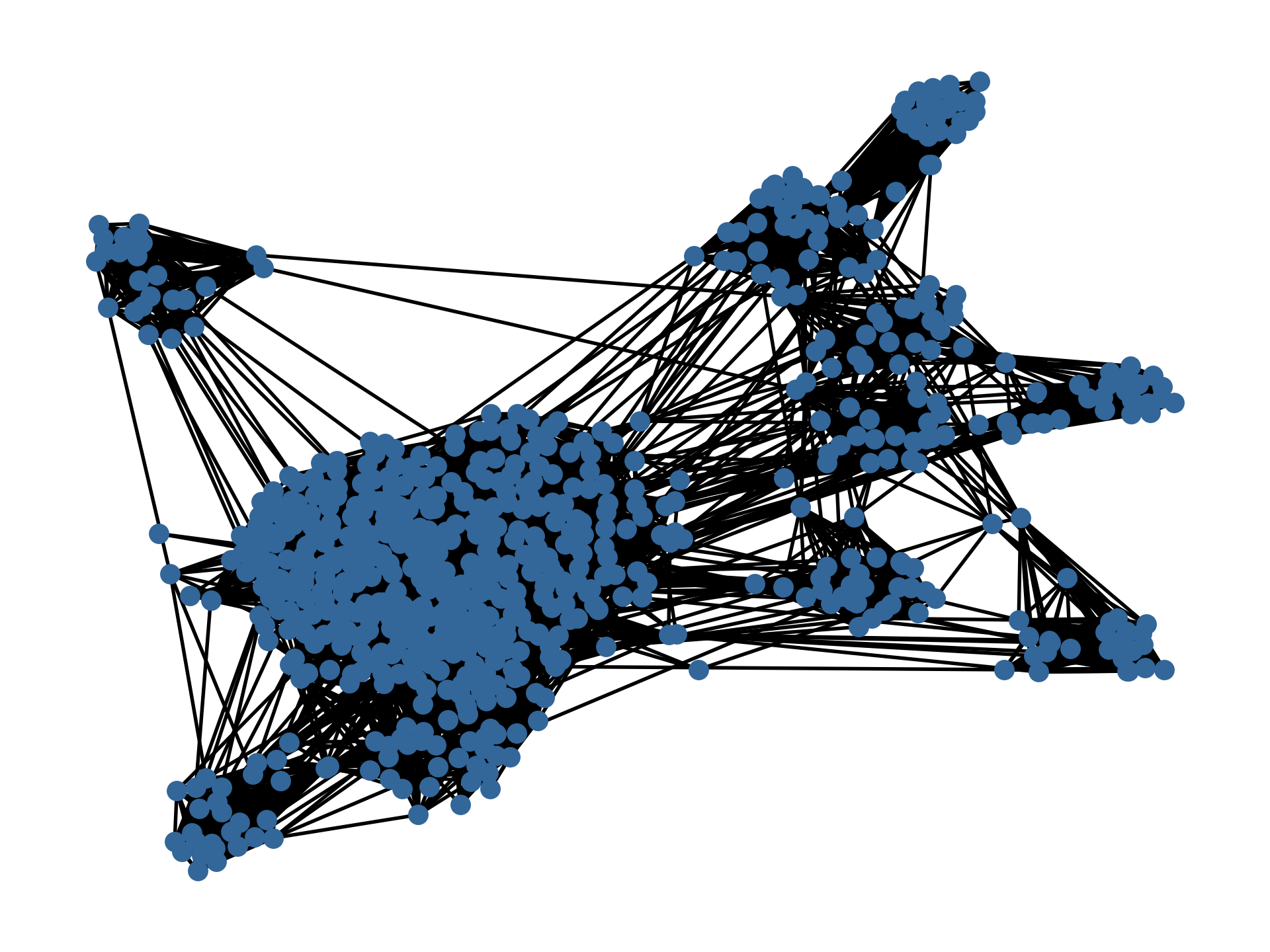}
    \adjincludegraphics[width=.32\textwidth,
		trim={0 {0\height} 0 {0\height}},clip]{./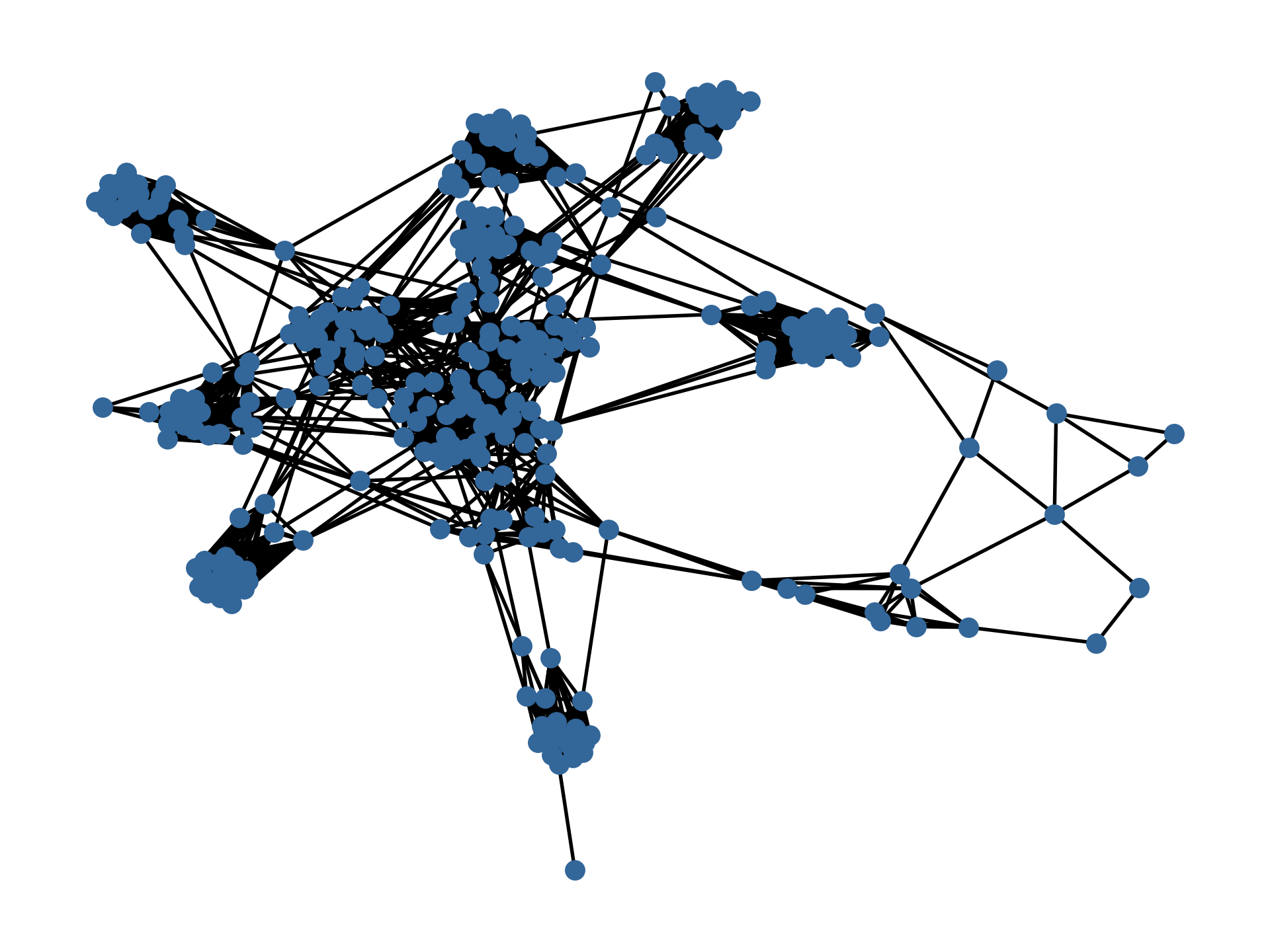}
    \adjincludegraphics[width=.32\textwidth,
		trim={0 {0\height} 0 {0\height}},clip]{./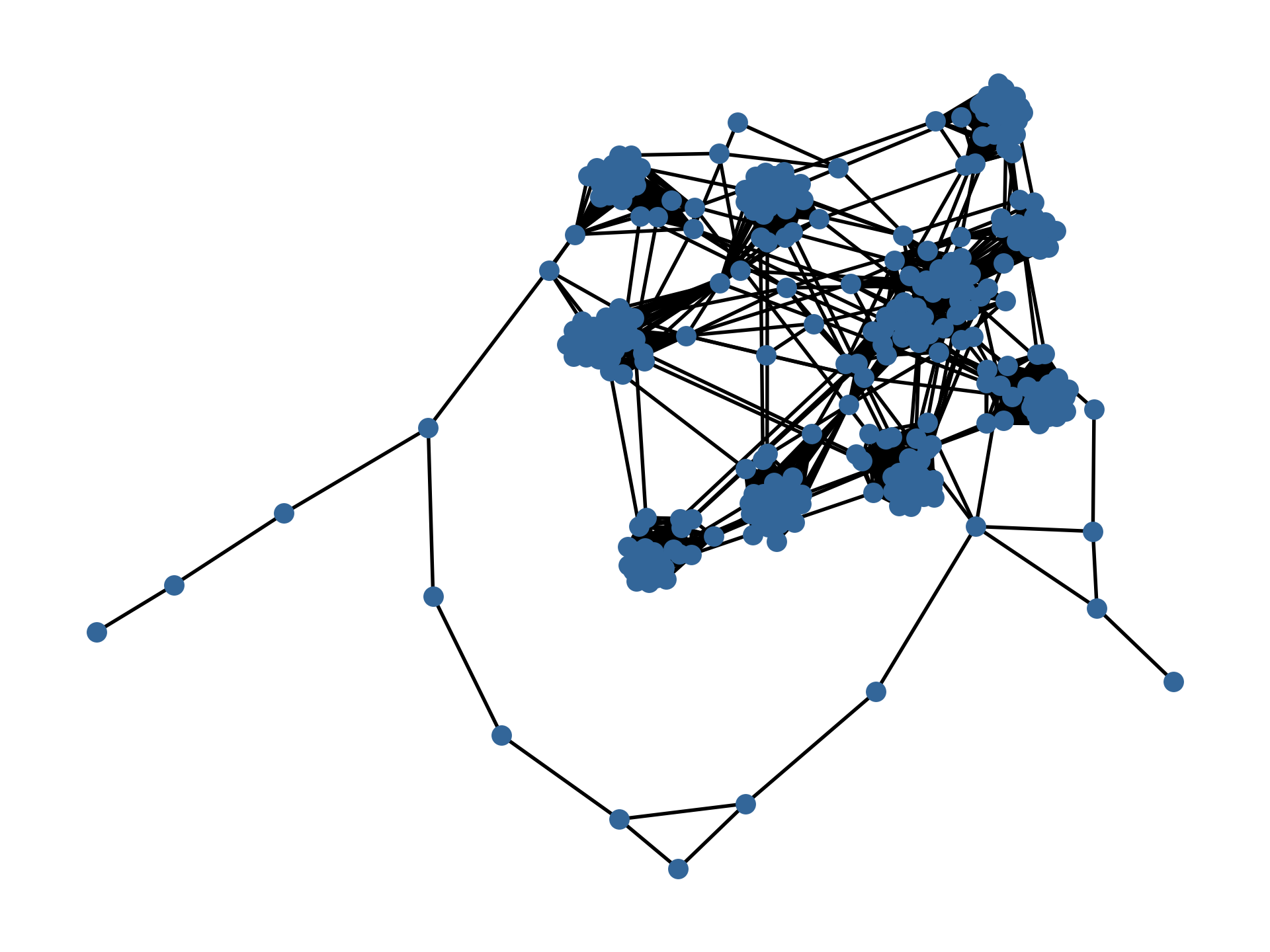}
  \\ \hrule
    \adjincludegraphics[width=.32\textwidth,
		trim={0 {0\height} 0 {.5\height}},clip]{./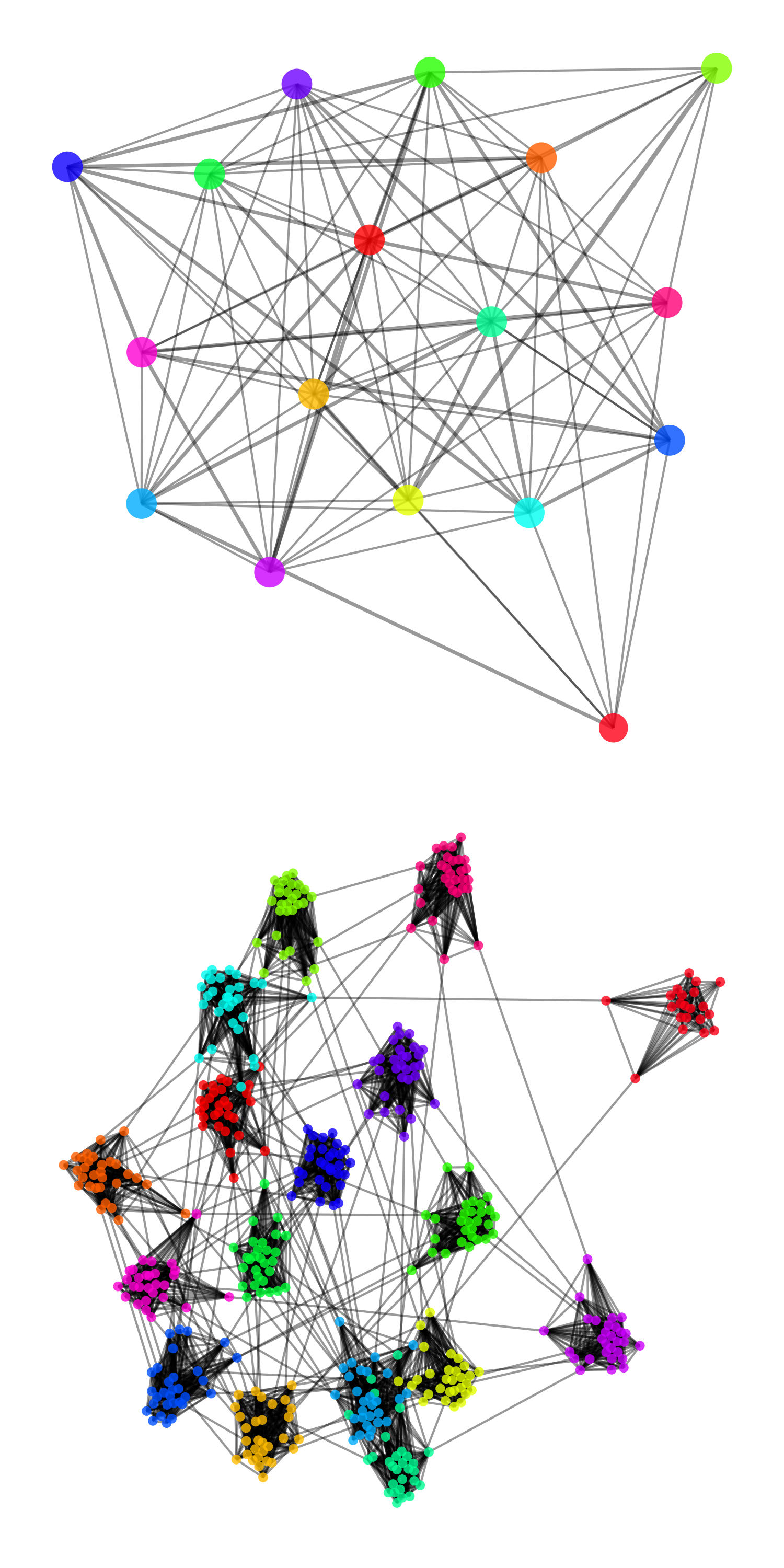}
	\adjincludegraphics[width=.32\textwidth,
		trim={0 {0\height} 0 {.5\height}},clip]{./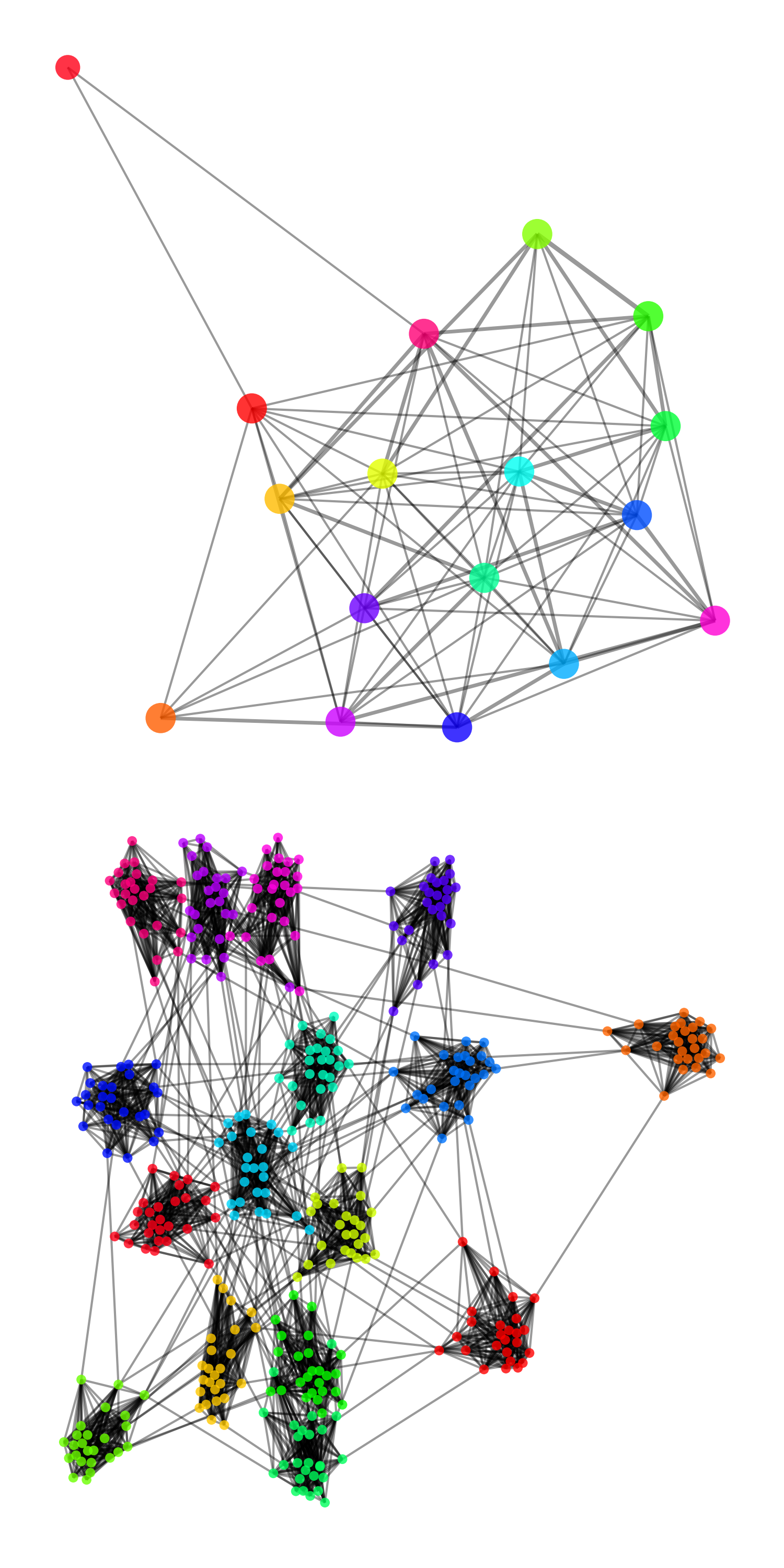}
	\adjincludegraphics[width=.32\textwidth,
		trim={0 {0\height} 0 {.50\height}},clip]{./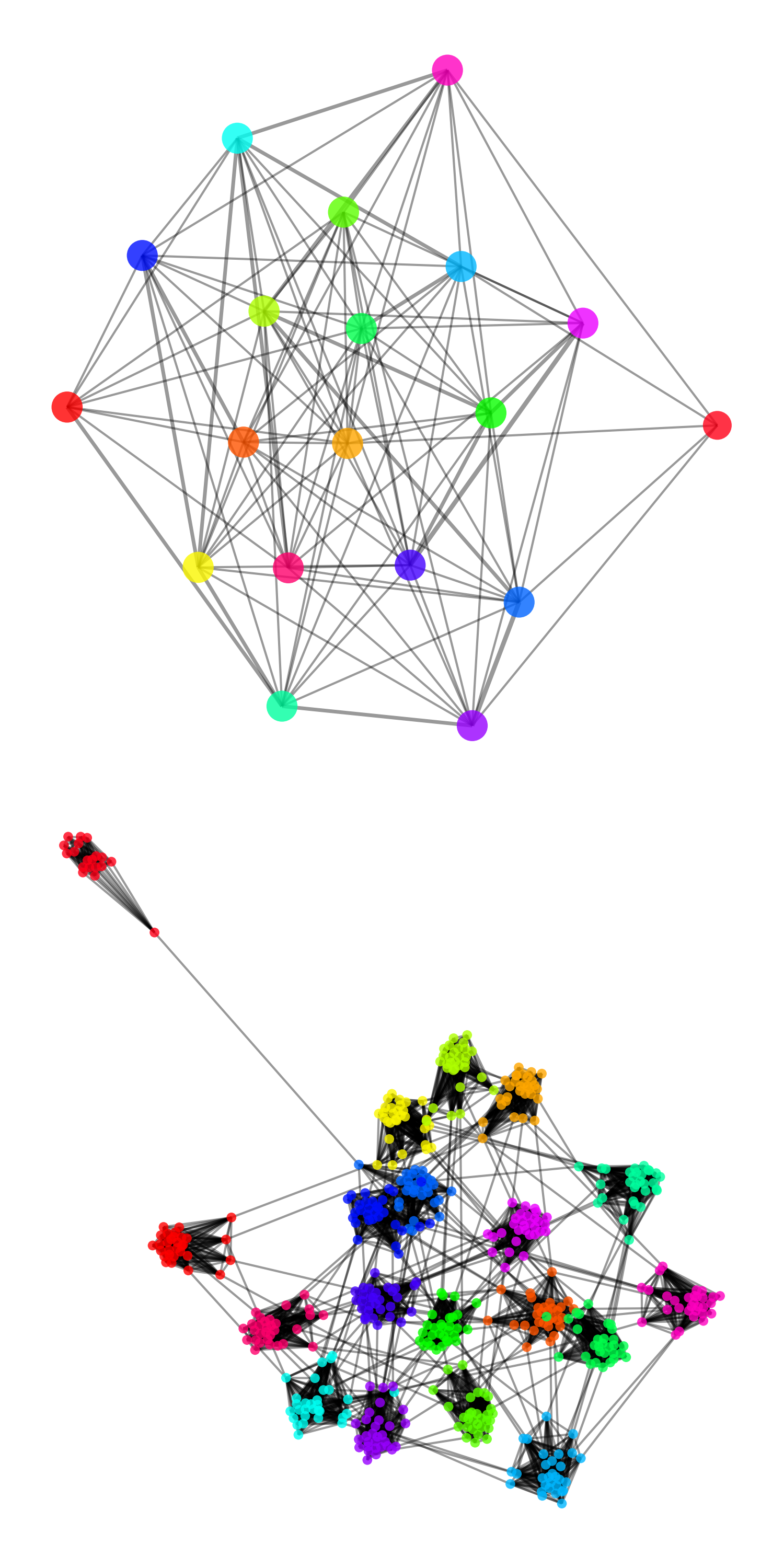}
    \end{minipage} \hfill
    \end{tabular}
\end{center}
\caption{Sample graphs generated by different models are compared to training samples at the top.
Communities are distinguished with different colors in training and MRG samples. 
\label{fig:sample_graphs}
}
\end{figure*}

\textbf{Experimental setup:}
To provide a fair comparison, we closely follow the experimental setup of \citet{you2018graphrnn} and \citet{liao2019GRAN}.
We compared the proposed model against the baseline methods including Erdos-Renyi \citep{erdos1960evolution}, GraphVAE \citep{simonovsky2018graphvae}, GraphRNN \& GraphRNN-S \citep{you2018graphrnn}, and GRAN \cite{liao2019GRAN}.
The results of  the baselines are extracted from \cite{liao2019GRAN} for the real-world graphs while we retrained GRAN for synthetic datasets.
The neural network based methods have the following structures.
GraphVAE model used a 3-layer GCN encoder and an MLP decoder with 2 hidden layers where all hidden dimensions are set to 128 for all experiments.
For GraphRNN and GraphRNN-S, the best settings reported in the original paper were used.
GRAN enjoyed 7 layers of GNNs with one round of message passing.
Hidden dimensions are set to 128 for [Ego, RCG], 256 for Point Cloud and 512 for [Protein, PPG] for GRAN,
while we used smaller hidden dimensions of 64 for [Ego, RCG, Point Cloud, Protein] and 128 for PPG.

We tested our proposed multi-resolution model (MRG) model with two variants:
1)  the model that uses mixture of multinomial distribution \eq{eq:mix_mn_pg} to describe the output distribution for all levels is simply denoted by \textbf{MRG},
2) the model that replace the output distribution of the leaf level with mixture of Bernoulli distribution is indicated by \textbf{MRG-B}.
To obtain node and edge representation, each level has its own GNN and output models, which are indexed by the level number of our model definition in section \ref{sec:HG_gen}.
We use the same GNN architecture as GRAN, with 7 layers of GNNs with one round of message passing, but we choose smaller hidden dimensions, setting it to 64 for [Ego, RCG, Point Cloud], and 128 for [Protein, PPG].
For both GRAN and MRG, the number of mixtures is set $K=20$ and block size and stride are both set to 1.
In general, MRG models uses less parameters compared to GRAN. The comparison of total number of parameters of MRG and GRAN are listed in appendix \ref{apdx:model_arch}.
MRG models are training by the Adam optimizer \cite{kingma2014adam} with learning rate of 5e-4.

For evaluation of the graph generative models, we follow the approach in \citep{liu2019graph, liao2019GRAN} which compare the following distributions of 4 different graph statistics between ground truth and generated graphs: (1) degree distributions, (2) clustering coefficient distributions, (3) the number of occurrence of all orbits with 4 nodes, and (4) the spectra of the graphs by computing the eigenvalues of the normalized graph Laplacian.
The first 3 metrics characterize local graph statistics while the spectra represents global structure.
After computing these statistics, the maximum mean discrepancy {MMD} score is computed over these statistics.
MMD score in \citep{liu2019graph} depends on Gaussian kernels with the first Wassertein distance, (the earth mover's distance (EMD)).
However, evaluating this kernel is computationally expensive for moderately large graphs, so we follow \citet{liao2019GRAN} in using total variation (TV) distance as an
alternative measure which is very faster while still consistent with EMD.
Most recently, \citet{o2021evaluationMetric} suggested using other efficient kernels such as an RBF kernel, or a Laplacian
kernel, or a linear kernel.
Also, \citet{thompson2022OnEvaluationMetric} proposed
new evaluation metrics for comparing graph sets by leveraging a random-GNN where GNNs are employed to extract meaningful graph representations.
Here we choose to comply with the experimental setup and evaluation metrics by GRAN.

The performance of the proposed graph generative models, evaluated using the maximum mean discrepancy (MMD) metric, are reported in Table \ref{table:res}.
Additionally, samples of the generated graphs are presented in Figure \ref{fig:sample_graphs}.
The results indicate that the proposed models outperform the existing graph generative models in most cases while it is on par with the best baseline in remaining cases.
This performance gap is particularly noticeable when the graph datasets has community structures.
These findings suggest that the proposed models are effective at generating graphs, particularly those with community structures, and demonstrate the potential of the proposed models in a variety of applications.
More graph samples, including their hierarchical structures, generated by the MRG models are presented in appendix \ref{apdx:modelANDsamples}.

\subsection{Ablation studies}\label{subsec:abl}
In this section, two ablation studies were conducted to evaluate more compact forms of the MRG model.

\begin{table}[t]
\begin{center}
\caption{ \captionsize
Comparison of models with different number of levels and shared model parameters across the levels (MRG-B shared).}
\label{table:res_abl}
\vskip -5pt
\begin{minipage}{1.\linewidth}
    \centering
    \begin{adjustbox}{width=.95\textwidth,}
        \begin{tabular}{
l |cccc}
                    & \multicolumn{4}{c}{\textbf{Ego}} \\
                     & Deg.     & Clus.    & Orbit  & Spec.   \\ \toprule
\textbf{MRG-B 3-level}       & 4.1$e^{-3}$  & 6.2$e^{-2}$ & 1.8$e^{-2}$  & 1.42$e^{-2}$ \\
\textbf{MRG-B 2-level}         & 4.73$e^{-3}$  & 5.43$e^{-2}$ & 1.41$e^{-2}$  & 1.9$e^{-2}$    \\
\textbf{MRG-B shared}         & 1.87$e^{-2}$  & 3.68$e^{-1}$ & 3.20$e^{-2}$  & 3.16$e^{-2}$    \\
\hline
\end{tabular}
    \end{adjustbox}
\end{minipage}
\end{center}
\end{table}

\begin{table}[t]
\begin{center}
\caption{ \captionsize
Ablation study on node ordering. 
Baseline MRG used the
BFS ordering and baseline GRAN used DFS ordering.
$\pi_1$ and $\pi_2$ are default and random node ordering, respectively. }
\label{table:res_abl_ordering}
\vskip -5pt
\begin{minipage}{1.\linewidth}
    \centering
    \begin{adjustbox}{width=.9\textwidth,}
        \begin{tabular}{
l |cccc}
                    & \multicolumn{4}{c}{\textbf{Protein}} \\
                     & Deg.     & Clus.    & Orbit  & Spec.   \\ \toprule
\textbf{GRAN}                 & {1.98$e^{-3}$} & {4.86$e^{-2}$ } & 1.3$e^{-1}$   & {5.13$e^{-3}$} \\
\textbf{GRAN ($\pi_1$)}         & 9.2$e^{-2}$  & 0.12 & 0.74 & 3.4$e^{-2}$   \\
\textbf{GRAN ($\pi_2$)}         & 0.70  & 1.04 & 1.40  & 0.64    \\
\hline \rule{0pt}{3pt}
\textbf{MRG-B}                 & 5.1$e^{-3}$  & 6.27$e^{-2}$ & 1.08$e^{-1}$ & 8.0$e^{-3}$  \\
\textbf{MRG-B ($\pi_1$)}         & 1.28$e^{-2}$  & 1.03$e^{-1}$ & 2.85$e^{-2}$  & 1.29$e^{-2}$    \\ %
\textbf{MRG-B ($\pi_2$)}         & 9.53$e^{-3}$  & 7.46$e^{-2}$ & 2.61$e^{-1}$  & 1.34$e^{-2}$    \\ %
\hline
\end{tabular}

    \end{adjustbox}
\end{minipage}
\end{center}
\end{table}
The first study evaluated the performance of MRG with fewer hierarchical levels by splicing out the middle level of the Ego dataset, resulting in hierarchical graphs (HGs) with only 2 levels after the root, \ie $L=2$. The results, presented in Table \ref{table:res_abl}, show that the generation quality of the models drops slightly when the number of levels is decreased, indicating that having more hierarchical levels improves the expressiveness of the model.

Moreover, we train the MRG with shared model parameters across levels such that all levels use similar GNN and output models.
The performance comparisons in Table \ref{table:res_abl} show that using individual models for each level offers better results. This can can be explained by the fact that graph at different levels exhibits different characteristics such as graph sparsity that may require tailored models for optimal performance.

\section{Conclusion}\label{sec:concl}

We proposed a novel data-drive generative model for generic hierarchical graphs.
This model does not rely on domain-specific priors and can be used widely.
Our method also supports maximally parallelized implementations
insofar as the graph is amenable to balanced recursive tree decomposition.
We demonstrated the effectiveness and efficiency of our method on 2 synthetic and 3 real datasets.
While the Louvain algorithm we depend on for community detection is rule-based, still the proposed method is proven to be effective.

For future work, developing a fully end-to-end algorithm for encoding and decoding with joint learning of community structures, instead of depending on an external algorithm for community detection, will be both challenging and desirable.
Moreover, both for the current method using various community-detection algorithms and for the future end-to-end solution, validation on datasets that are orders of magnitude bigger than what we used in this work to introduce the new method will be an informative and worthy undertaking.

\subsubsection*{Acknowledgments}
We would like to thank Fatemeh Fani Sani for preparing the schematic figures. %

\bibliography{iclr2023_conference}
\bibliographystyle{iclr2023_conference}

\appendix
\section{Appendix}

\begin{appendices}

\onecolumn 
\section{Proof of Theorem \ref{thm:mn2bnmn}} \label{apdx:proof_mn2bnmn}

For a random counting vector $\rvw \in \ZP^E$ with multinomial distribution $\text{Mu}(w, \vtheta)$, %
let's split it into $M$ disjoint groups $\rvw=[\rvu_1, ~ ..., \rvu_M ]$ where $\rvu_m \in \ZP^{E_m} ~,~ \sum_{m=1}^M {E_m} = E $, 
and also split the probability vector as $\vtheta=[\vtheta_1, ~ ..., \vtheta_M ]$.
Additionally, let's define sum of all weights in $m$-th group by a random variable $\rv_m := \sum_{e=1}^{E_m} \ru_{m,e}$. 

\begin{lemma} \label{lemma:groupsMN}
Sum of the weights in the groups, $\rvu_m \in \ZP^{E_m} ~,~ \sum_{m=1}^M {E_m} = E $ has multinomial distribution:
\begin{align} \label{eq:proof_lemma_groupsMN}
p(\{\rv_1, ..., \rv_M\}) &=  \text{Mu}(w, ~[\alpha_1, ..., \alpha_M]) \nonumber\\
\text{where: }\alpha_m &= \sum \vtheta_m[i].
\end{align}
In the other words, the multinomial distribution is preserved when its counting variables are combined \cite{siegrist2017probability}.
\end{lemma}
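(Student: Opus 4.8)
The plan is to compute the marginal law of $(\rv_1,\dots,\rv_M)$ directly from the joint multinomial pmf and show the nested sum collapses. Write, for any $(r_1,\dots,r_M)\in\ZP^M$ with $\sum_m r_m = w$,
\[
p(\rv_1=r_1,\dots,\rv_M=r_M) \;=\; \sum_{\substack{\rvu_m\in\ZP^{E_m}\,:\\ \1^{T}\rvu_m=r_m \ \forall m}} \frac{w!}{\prod_{m=1}^{M}\prod_{e=1}^{E_m}\ru_{m,e}!}\ \prod_{m=1}^{M}\prod_{e=1}^{E_m}\theta_{m,e}^{\,\ru_{m,e}},
\]
so the task is to identify this with $\text{Mu}(r_1,\dots,r_M\mid w,[\alpha_1,\dots,\alpha_M])$ where $\alpha_m=\1^{T}\vtheta_m$.

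The key algebraic step is to split the multinomial coefficient across groups: since $\sum_m r_m = w$ on the support of the sum, $\frac{w!}{\prod_{m,e}\ru_{m,e}!} = \binom{w}{r_1,\dots,r_M}\prod_{m=1}^{M}\frac{r_m!}{\prod_{e}\ru_{m,e}!}$. Pulling $\binom{w}{r_1,\dots,r_M}$ out front, the remaining sum \emph{factorizes over $m$}, because the constraints $\1^{T}\rvu_m=r_m$ are independent across groups. Each factor is then
\[
\sum_{\rvu_m\,:\,\1^{T}\rvu_m=r_m}\frac{r_m!}{\prod_e \ru_{m,e}!}\prod_e \theta_{m,e}^{\,\ru_{m,e}} \;=\; \Bigl(\textstyle\sum_{e}\theta_{m,e}\Bigr)^{r_m} \;=\; \alpha_m^{\,r_m}
\]
by the multinomial theorem. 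Hence $p(\rv_1=r_1,\dots,\rv_M=r_M)=\binom{w}{r_1,\dots,r_M}\prod_m \alpha_m^{r_m}$, which is exactly the claimed multinomial pmf; and $\sum_m\alpha_m = \1^{T}\vtheta = 1$, so the parameter vector is a valid probability vector.

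I also expect there is a cleaner, essentially bookkeeping-free alternative: realize $\rvw\sim\text{Mu}(w,\vtheta)$ as the category counts of $w$ i.i.d.\ draws $X_1,\dots,X_w$ from a categorical distribution on $\{1,\dots,E\}$ with weights $\vtheta$, let $Y_t\in\{1,\dots,M\}$ record which of the $M$ blocks the category $X_t$ falls into, observe that the $Y_t$ are i.i.d.\ categorical on $\{1,\dots,M\}$ with $\Pr(Y_t=m)=\alpha_m$, and note $\rv_m=\#\{t:Y_t=m\}$; the conclusion is then immediate. The only thing to be careful about in the direct proof is the nested-sum/constraint bookkeeping (and the edge cases where some $r_m=0$), but there is nothing deep here — the whole content is the coefficient-splitting identity plus the multinomial theorem, so I do not anticipate a real obstacle.
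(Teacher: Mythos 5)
Your proof is correct. The paper itself offers no argument for this lemma --- it simply cites an external reference for the fact that a multinomial is preserved under aggregation of its counting variables --- so there is no in-paper proof to compare against; you are supplying a derivation the authors chose to omit. Your direct computation is sound: the coefficient-splitting identity $\frac{w!}{\prod_{m,e}\ru_{m,e}!}=\binom{w}{r_1,\dots,r_M}\prod_m\frac{r_m!}{\prod_e \ru_{m,e}!}$ holds on the support, the constrained sum factorizes across groups because the constraints $\1^{T}\rvu_m=r_m$ decouple, and the multinomial theorem collapses each factor to $\alpha_m^{r_m}$. The $r_m=0$ edge case is harmless (the inner sum has the single term $\rvu_m=\zero$ and evaluates to $1=\alpha_m^0$). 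Your alternative via i.i.d.\ categorical draws and the block-indicator variables $Y_t$ is the standard textbook route (and almost certainly the one intended by the citation); it is cleaner and also immediately yields the companion fact used in the paper's Lemma~A.2, namely that conditionally on the group totals the within-group counts are independent multinomials. Either argument suffices here.
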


\begin{lemma} \label{lemma:ConditionalMN}
Given the sum of counting variables in the groups, the groups are independent and each of them has  multinomial distribution: 
\begin{align} \label{eq:lemma_mn2bnmn}
p(\rvw =[\rvu_1, ~ ..., \rvu_M ]| \{\rv_1, ..., \rv_M\}) &= 
\prod_{m=1}^{M} \text{Mu}( \rv_{m}, ~ \vlambda_{{m}}) \\
\text{where: }    \vlambda_{{m}} & = \frac{\vtheta_m}{\1^{T}~ \vtheta_m} 
\nonumber
\end{align}
Here, probability vector (parameter) $\vlambda_{{m}}$ is the normalized multinomial probabilities of the counting variables in the $m$-th group.   

\begin{proof}
\begin{align} \label{eq:proof_lemma_mn2bnmn}
p(\rvw | \{\rv_1, ..., \rv_M\}) 
&= \frac{p(\rvw)}{p(\{\rv_1, ..., \rv_M\})} I(\rv_1= \1^{T}~\rvu_1, ~ ..., \rv_M=\1^{T}~\rvu_M )  \nonumber\\
&= \frac{
\frac{w!}{\prod_{i=1}^{E} \rw_i ! } \prod_{i=1}^{E} {\vtheta_i}^{\rw_i }
}{
\frac{w!}{\prod_{i=1}^{M} \rv_i ! } \prod_{i=1}^{M} {\alpha_i}^{\rv_i }
} I(\rv_1= \1^{T}~\rvu_1, ~ ..., \rv_M=\1^{T}~\rvu_M )  \nonumber\\
&= \frac{
\frac{w!}{\prod_{i=1}^{E} \rw_i ! } \vtheta_1^{\rw_1 } ... \vtheta_E^{\rw_E }
}{
\frac{w!}{\prod_{i=1}^{M} \rv_i ! } {(\1^{T}~ \vtheta_1})^{\rv_1 } ... ({\1^{T}~ \vtheta_M})^{\rv_M }
} \nonumber\\
&= \frac{\rv_1!}{\prod_{i=1}^{E_1} \rvu_{1,i} ! } \prod_{i=1}^{E_1} {\vlambda_{1,i}}^{\rvu_{1,i}} \times ... \times
\frac{\rv_M!}{\prod_{i=1}^{E_M} \rvu_{M,i} ! } \prod_{i=1}^{E_1} {\vlambda_{M,i}}^{\rvu_{M,i}} 
 \nonumber \\
&= \text{Mu}( \rv_{1}, ~ \vlambda_{{1}}) \times ... \times \text{Mu}( \rv_{M}, ~ \vlambda_{{M}}) \nonumber
\end{align}
\end{proof} 
\end{lemma}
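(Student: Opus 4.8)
The plan is to compute the conditional law directly from the definition $p(\rvw \mid \{\rv_1,\dots,\rv_M\}) = p(\rvw,\{\rv_1,\dots,\rv_M\})/p(\{\rv_1,\dots,\rv_M\})$ and then show the resulting ratio factorizes over the $M$ groups. First I would note that each group sum $\rv_m = \sum_{e=1}^{E_m}\ru_{m,e}$ is a deterministic function of $\rvw$, so the joint event in the numerator is either empty (when the conditioning values are inconsistent with $\rvw$) or coincides with $\{\rvw\}$. Thus the numerator reduces, on its support, to the multinomial mass $\text{Mu}(\rvw \mid w,\vtheta)$, while the denominator is supplied by Lemma~\ref{lemma:groupsMN}, which gives the group sums the multinomial law $\text{Mu}(w,[\alpha_1,\dots,\alpha_M])$ with $\alpha_m = \1^{T}\vtheta_m$.

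Next I would substitute both multinomial pmfs and cancel the common factor $w!$ appearing in both coefficients. Regrouping the product $\prod_{i=1}^{E}\theta_i^{\rw_i}$ by group index gives $\prod_{m=1}^{M}\prod_{e=1}^{E_m}\theta_{m,e}^{\ru_{m,e}}$, and the factorials in the coefficient split analogously as $\prod_i \rw_i! = \prod_m\prod_e \ru_{m,e}!$. Together with the factor $\prod_m \rv_m!$ inherited from the denominator, these assemble, one group at a time, the multinomial coefficients $\rv_m!/\prod_e \ru_{m,e}!$.

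The key step, and essentially the only place a genuine identity is used, is the treatment of $\prod_m \alpha_m^{\rv_m} = \prod_m(\1^{T}\vtheta_m)^{\rv_m}$ from the denominator. Since $\rv_m = \sum_e \ru_{m,e}$, I would rewrite $(\1^{T}\vtheta_m)^{\rv_m} = \prod_e (\1^{T}\vtheta_m)^{\ru_{m,e}}$ and distribute it into the numerator's powers, so that each $\theta_{m,e}^{\ru_{m,e}}/(\1^{T}\vtheta_m)^{\ru_{m,e}}$ becomes $\lambda_{m,e}^{\ru_{m,e}}$ with $\vlambda_m = \vtheta_m/(\1^{T}\vtheta_m)$ exactly the normalized per-group probability vector. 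Collecting the $m$-th group's coefficient and powers then yields $\text{Mu}(\rv_m,\vlambda_m)$, and the whole expression becomes $\prod_{m=1}^{M}\text{Mu}(\rv_m,\vlambda_m)$; because this factored form has no cross-group terms, it simultaneously establishes the claimed conditional independence of the groups.

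I expect the only real obstacle to be bookkeeping rather than ideas: handling the consistency indicator cleanly so that the conditional density is supported precisely on vectors $\rvw$ whose group sums match the conditioning values, and confirming $\sum_e \lambda_{m,e} = 1$ so that each factor is a bona fide multinomial pmf. Both are immediate once the regrouping above is carried out, so the argument is purely algebraic and invokes no machinery beyond Lemma~\ref{lemma:groupsMN}.
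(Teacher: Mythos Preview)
Your proposal is correct and follows essentially the same approach as the paper: both compute the conditional law as the ratio $p(\rvw)/p(\{\rv_1,\dots,\rv_M\})$ on the consistency set, invoke Lemma~\ref{lemma:groupsMN} for the denominator, substitute the two multinomial pmfs, and cancel and regroup to obtain the product $\prod_m \text{Mu}(\rv_m,\vlambda_m)$. Your write-up is in fact more explicit than the paper's about the regrouping step and the handling of $(\1^{T}\vtheta_m)^{\rv_m}$, but the argument is the same.
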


\begin{theorem} 
Given the aforementioned grouping of counts variables, the multinomial distribution can be modeled as a chain of binomials and multinomials:   
\begin{align} %
    \text{Mu}(w, \vtheta=[\vtheta_1, ..., \vtheta_M ]) & = \prod_{m=1}^{M} \text{Bi}( w - \sum_{i<m}\rv_i, ~ {\eta}_{\rv_{m}}) ~
    \text{Mu}( \rv_{m}, ~ \vlambda_{{m}}),  \\
    \text{where: } \eta_{\rv_m } & = \frac{\1^{T}~ \vtheta_m}{1-\sum_{i<m} \1^{T}~ \vtheta_i}, \\   
    \vlambda_{{m}} & = \frac{\vtheta_m}{\1^{T}~ \vtheta_m} \nonumber
\end{align}
\end{theorem}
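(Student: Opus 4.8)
The plan is to combine the two lemmas that have already been established (Lemma \ref{lemma:groupsMN} on marginalizing a multinomial to group sums, and Lemma \ref{lemma:ConditionalMN} on the conditional law of the groups given the group sums) with an elementary telescoping factorization of the multinomial distribution over the group sums into a chain of binomials. The starting point is the chain rule
\[
\text{Mu}(\rvw\,|\,w,\vtheta) = p(\{\rv_1,\dots,\rv_M\})\cdot p(\rvw\,|\,\{\rv_1,\dots,\rv_M\}),
\]
which holds because the group sums $\rv_m=\1^T\rvu_m$ are deterministic functions of $\rvw$, so conditioning on them and then specifying $\rvw$ recovers $\rvw$ exactly. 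Lemma \ref{lemma:ConditionalMN} already tells us the second factor equals $\prod_{m=1}^M \text{Mu}(\rvu_m\,|\,\rv_m,\vlambda_m)$ with $\vlambda_m = \vtheta_m/(\1^T\vtheta_m)$, which is precisely the multinomial part of the claimed formula. So the entire remaining task is to show that the first factor, $p(\{\rv_1,\dots,\rv_M\})$, which by Lemma \ref{lemma:groupsMN} is $\text{Mu}(\{\rv_m\}\,|\,w,[\alpha_1,\dots,\alpha_M])$ with $\alpha_m=\1^T\vtheta_m$, factors as $\prod_{m=1}^M \text{Bi}(\rv_m\,|\,w-\sum_{i<m}\rv_i,\eta_{\rv_m})$.

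First I would prove this last piece: a multinomial random vector decomposes into a chain of binomials. This is essentially the $M=E$, scalar version of the stick-breaking identity already recorded in Lemma \ref{thm:mn2bn}, just applied to the count vector $(\rv_1,\dots,\rv_M)$ with probability vector $(\alpha_1,\dots,\alpha_M)$ instead of to $\rvw$ with $\vtheta$. Concretely, I would expand the product of binomial pmfs,
\[
\prod_{m=1}^{M} \binom{w-\sum_{i<m}\rv_i}{\rv_m}\, \eta_{\rv_m}^{\rv_m}\,(1-\eta_{\rv_m})^{\,w-\sum_{i\le m}\rv_i},
\]
and check two things. The factorial prefactors telescope: $\prod_{m=1}^M \binom{w-\sum_{i<m}\rv_i}{\rv_m} = \frac{w!}{\rv_1!\cdots\rv_M!}$, using $\sum_m \rv_m = w$. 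For the probability powers, substituting $\eta_{\rv_m}=\alpha_m/(1-\sum_{i<m}\alpha_i)$ gives $1-\eta_{\rv_m} = (1-\sum_{i\le m}\alpha_i)/(1-\sum_{i<m}\alpha_i)$, and the $(1-\eta_{\rv_m})^{\,w-\sum_{i\le m}\rv_i}$ factors also telescope (the denominator of term $m$ cancels the numerator of term $m-1$), collapsing to $\prod_{m=1}^M \alpha_m^{\rv_m}$ up to the base case $(1-\sum_{i<1}\alpha_i)^{w}=1$; here one uses that $\sum_m \alpha_m = \1^T\vtheta = 1$ so the final leftover factor is $1$. This reproduces $\frac{w!}{\prod_m \rv_m!}\prod_m \alpha_m^{\rv_m} = \text{Mu}(\{\rv_m\}\,|\,w,[\alpha_1,\dots,\alpha_M])$, as desired.

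Finally I would assemble the pieces: substitute the binomial-chain expression for $p(\{\rv_1,\dots,\rv_M\})$ and the Lemma \ref{lemma:ConditionalMN} expression for $p(\rvw\,|\,\{\rv_m\})$ into the chain rule, interleave the factors by index $m$, and identify $\eta_{\rv_m}=\alpha_m/(1-\sum_{i<m}\alpha_i)=\1^T\vtheta_m/(1-\sum_{i<m}\1^T\vtheta_i)$ and $\vlambda_m=\vtheta_m/(\1^T\vtheta_m)$ to match the statement exactly. I expect the main obstacle to be purely bookkeeping: making the two telescoping cancellations (the binomial coefficients and the $(1-\eta_{\rv_m})$ powers) clean and rigorous, including correctly handling the empty-sum boundary terms at $m=1$ and the terminal factor at $m=M$, and being careful that the indicator constraint $\rv_m=\1^T\rvu_m$ is carried consistently so that the support of the factored form matches that of the original multinomial. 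No step is conceptually deep; the content is entirely in the algebraic identity for the group-sum multinomial, which itself is just the scalar instance of the already-proven stick-breaking lemma.
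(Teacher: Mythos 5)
Your proposal is correct and follows essentially the same route as the paper's own proof: factor $p(\rvw)$ via the chain rule into $p(\{\rv_1,\dots,\rv_M\})\,p(\rvw\,|\,\{\rv_1,\dots,\rv_M\})$, identify the first factor as a multinomial over group sums (Lemma \ref{lemma:groupsMN}) decomposed into a binomial chain via the stick-breaking identity of Lemma \ref{thm:mn2bn}, and the second factor as a product of group-wise multinomials (Lemma \ref{lemma:ConditionalMN}). The only difference is that you additionally spell out the telescoping verification of the binomial chain, which the paper leaves implicit by citing Lemma \ref{thm:mn2bn}; your telescoping computation is correct.
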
  

\begin{proof} 
	Since sum of the weights of the groups, $\rv_m$, are functions of the weights in the group:
	\begin{align} \label{eq:proof_mn2bnmn1}
	p(\rvw) = p(\rvw, \{\rv_1, ..., \rv_M\}) = p(\rvw | \{\rv_1, ..., \rv_M\}) p(\{\rv_1, ..., \rv_M\}) \nonumber
	\end{align}
	According to lemma \ref{lemma:groupsMN}, sum of the weights of the groups is a multinomial and by lemma \ref{thm:mn2bn}, it can be decomposed to a sequence of binomials:
	\begin{align} %
	p(\{\rv_1, ..., \rv_M\}) 
	&=  \text{Mu}(w, ~[\alpha_1, ..., \alpha_M]) \nonumber\\
	&= \prod_{m=1}^{M} \text{Bi}( w - \sum\nolimits_{i<m}\rv_i, \hat{\eta}_m), \nonumber\\
	\text{where: }\alpha_m &= \1^{T}~ \vtheta_m, ~   \hat{\eta}_e = \frac{\alpha_{e}}{1-\sum_{i<e}\alpha_m} \nonumber
	\end{align}
	Also based on lemma \ref{lemma:ConditionalMN}, given the sum of the wights of all groups, the groups are independent and has  multinomial distribution: 
	\begin{align} %
	p(\rvw | \{\rv_1, ..., \rv_M\}) &= 
	\prod_{m=1}^{M} \text{Mu}( \rv_{m}, ~ \vlambda_{{m}}) \\
	\text{where: }    \vlambda_{{m}} & = \frac{\vtheta_m}{\1^{T}~ \vtheta_m} 
	\nonumber
	\end{align}
\end{proof}

\section{Generated samples} \label{apdx:modelANDsamples}

Generated hierarchical graphs sampled MRG models are presented in this section. 

\begin{figure}[h!] \label{fig:sample_HG2} 
\begin{center}
    \begin{tabular}{c c|c|c}
    \begin{minipage}{.02\linewidth}
    \begin{turn}{90} \hspace{1pt} MRG \hspace{50pt}  Train  \end{turn}
    \end{minipage} &\hfill
    \begin{minipage}{.31\linewidth}
    \begin{center} Ego \end{center}
    \adjincludegraphics[width=.32\linewidth,
		trim={0 {0\height} 0 {0.\height}},clip]{./FigureTable/Ego_train_1.png}
    \adjincludegraphics[width=.32\linewidth,
		trim={0 {0\height} 0 {0.\height}},clip]{./FigureTable/Ego_train_2.png}
    \adjincludegraphics[width=.32\linewidth,
		trim={0 {0\height} 0 {0.\height}},clip]{./FigureTable/Ego_train_3.png}
  \\ \hrule
    \adjincludegraphics[width=.32\textwidth,
		trim={0 {0\height} 0 {0.\height}},clip]{./FigureTable/Ego_model_1.png}
	\adjincludegraphics[width=.32\textwidth,
		trim={0 {0\height} 0 {0.\height}},clip]{./FigureTable/Ego_model_2.png}
	\adjincludegraphics[width=.32\textwidth,
		trim={0 {0\height} 0 {0.\height}},clip]{./FigureTable/Ego_model_3.png}
    \end{minipage} &\hfill
    \begin{minipage}{.31\linewidth}
    \begin{center} Protein \end{center}
    \adjincludegraphics[width=.32\textwidth,
		trim={0 {0\height} 0 {0.\height}},clip]{./FigureTable/DD_train_1.png}
    \adjincludegraphics[width=.32\textwidth,
		trim={0 {0\height} 0 {0.\height}},clip]{./FigureTable/DD_train_2.png}
    \adjincludegraphics[width=.32\textwidth,
		trim={0 {0\height} 0 {0.\height}},clip]{./FigureTable/DD_train_3.png}
  \\ \hrule
    \adjincludegraphics[width=.32\textwidth,
		trim={0 {0\height} 0 {0.\height}},clip]{./FigureTable/DD_model_1.png}
	\adjincludegraphics[width=.32\textwidth,
		trim={0 {0\height} 0 {0.\height}},clip]{./FigureTable/DD_model_2.png}
	\adjincludegraphics[width=.32\textwidth,
		trim={0 {0\height} 0 {0.\height}},clip]{./FigureTable/DD_model_3.png}
    \end{minipage}  &\hfill
    \begin{minipage}{.32\linewidth}
    \begin{center} Point Cloud \end{center}
    \adjincludegraphics[width=.32\textwidth,
		trim={0 {0\height} 0 {0.\height}},clip]{./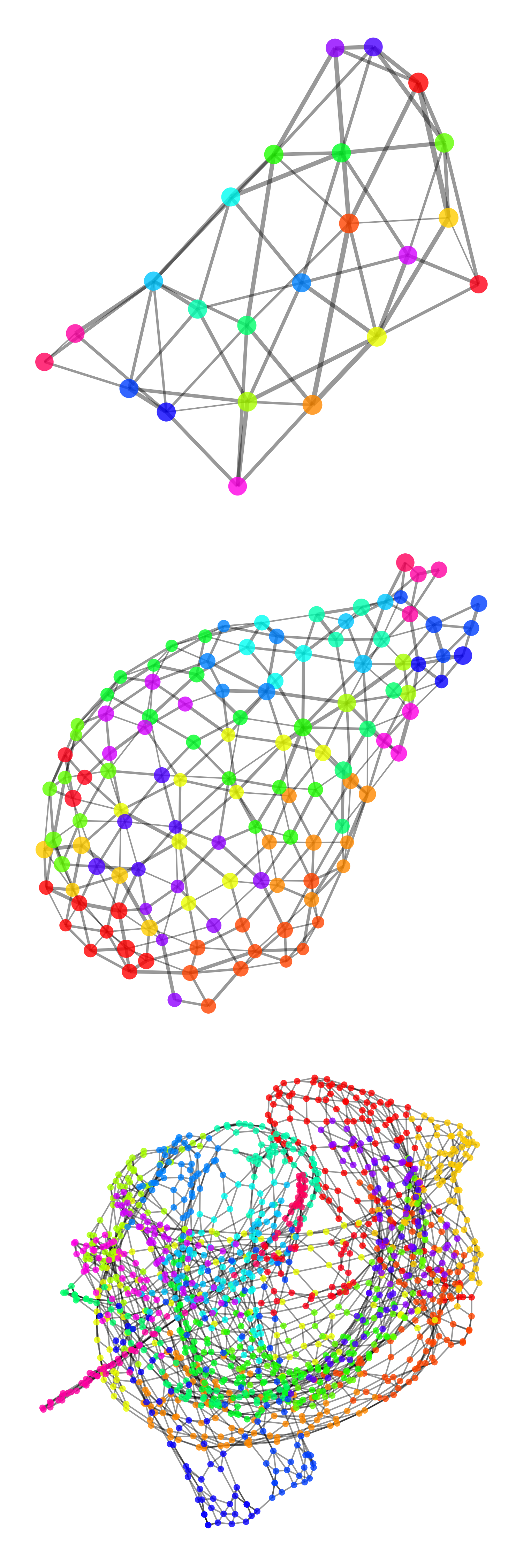}
    \adjincludegraphics[width=.32\textwidth,
		trim={0 {0\height} 0 {0.\height}},clip]{./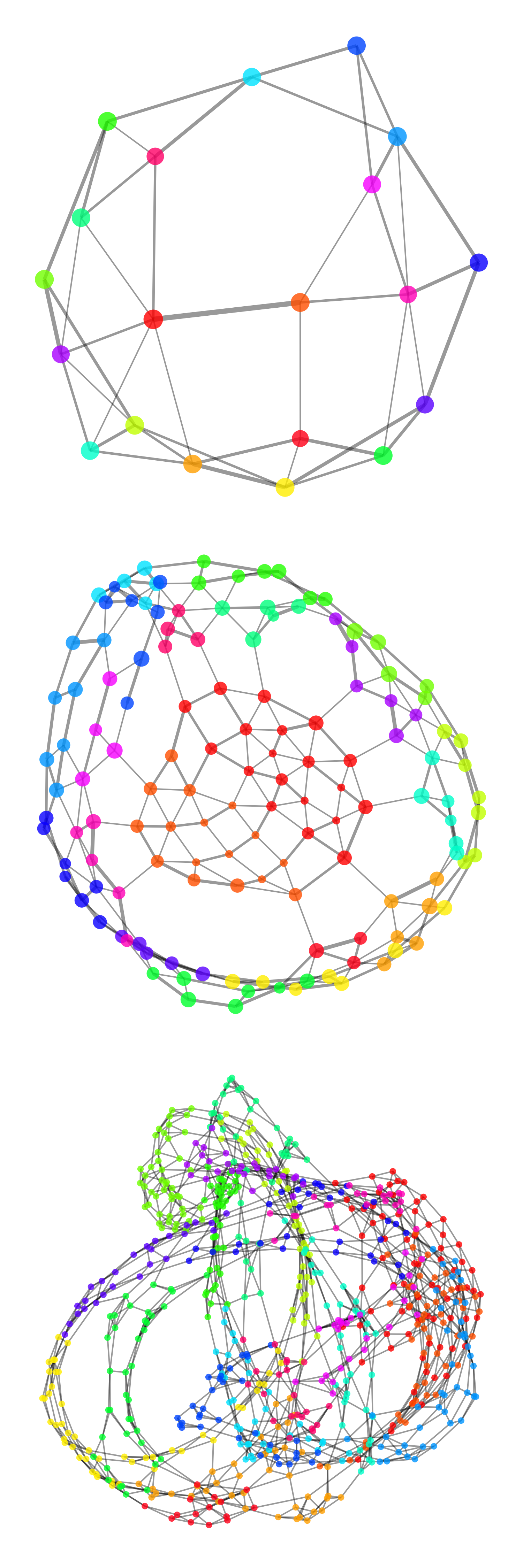}
    \adjincludegraphics[width=.32\textwidth,
		trim={0 {0\height} 0 {0.\height}},clip]{./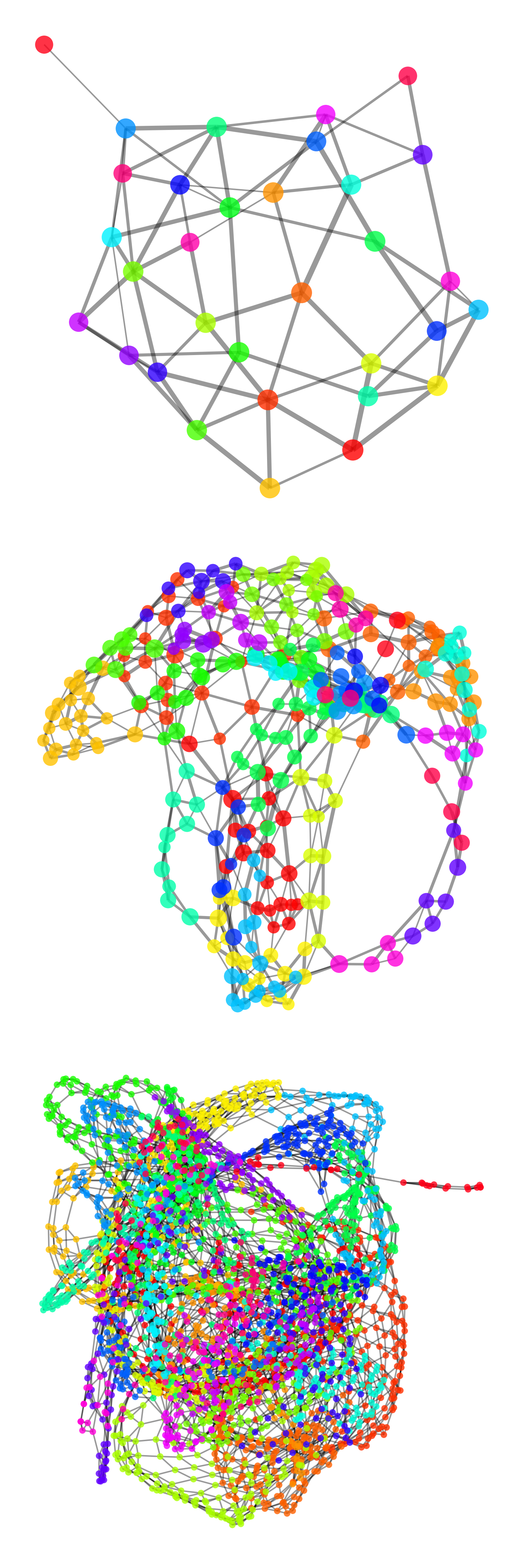}
  \\ \hrule
    \adjincludegraphics[width=.32\textwidth,
		trim={0 {0\height} 0 {0.\height}},clip]{./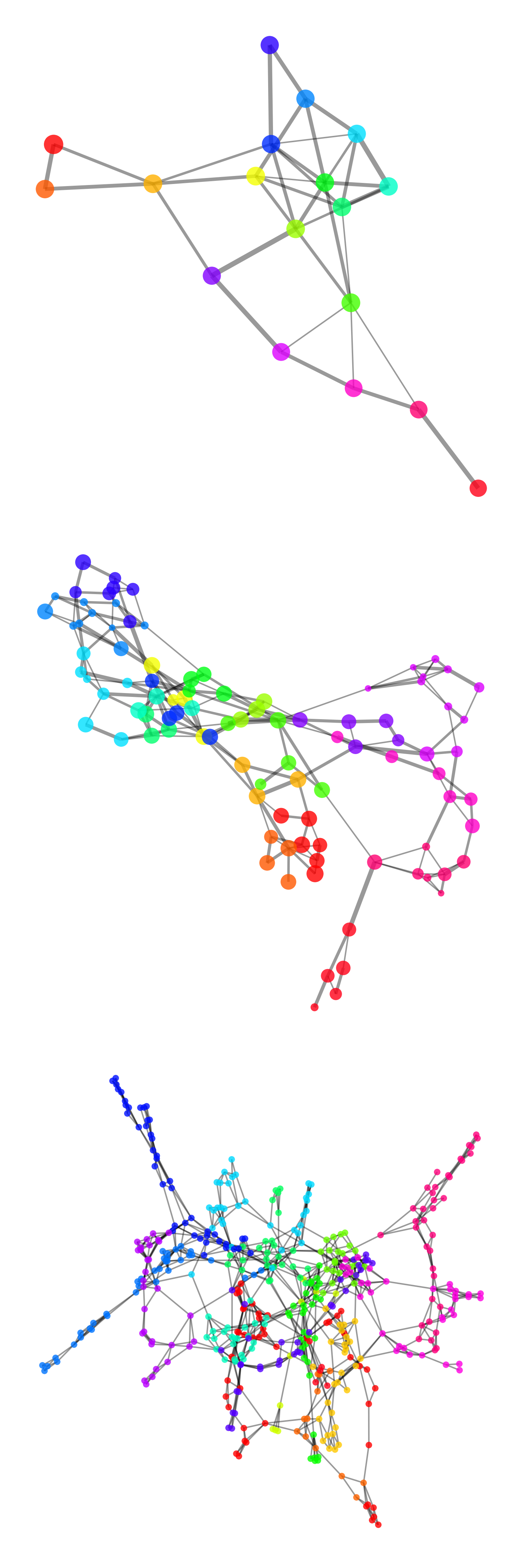}
	\adjincludegraphics[width=.32\textwidth,
		trim={0 {0\height} 0 {0.\height}},clip]{./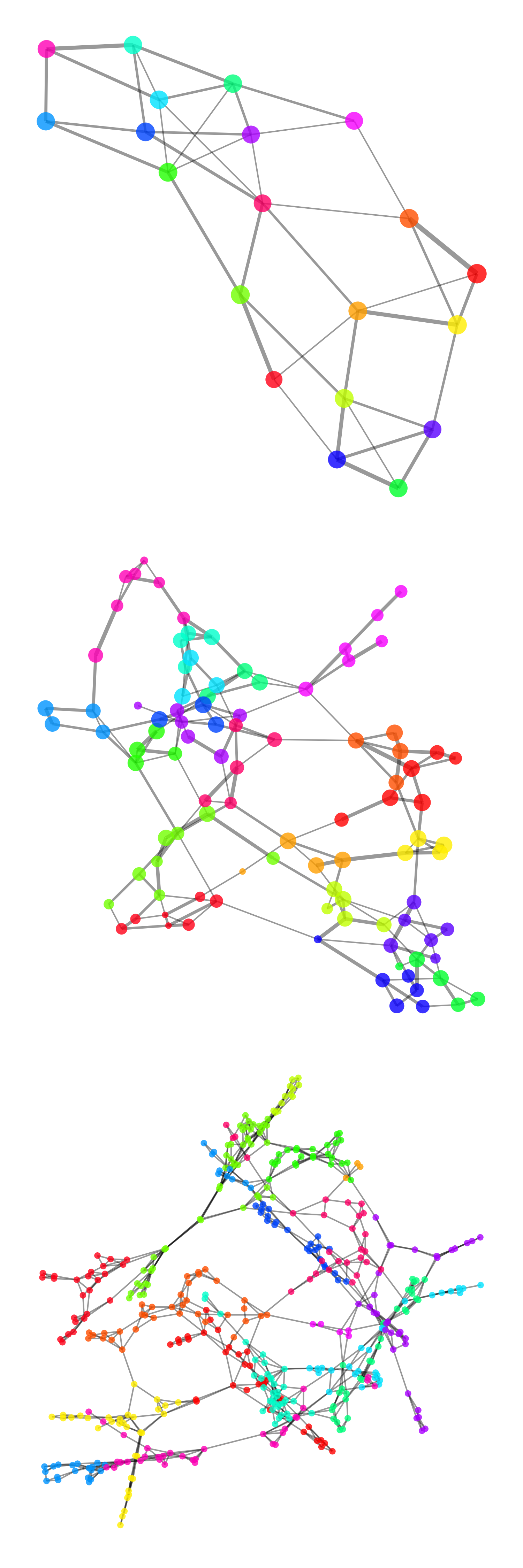}
	\adjincludegraphics[width=.32\textwidth,
		trim={0 {0\height} 0 {.0\height}},clip]{./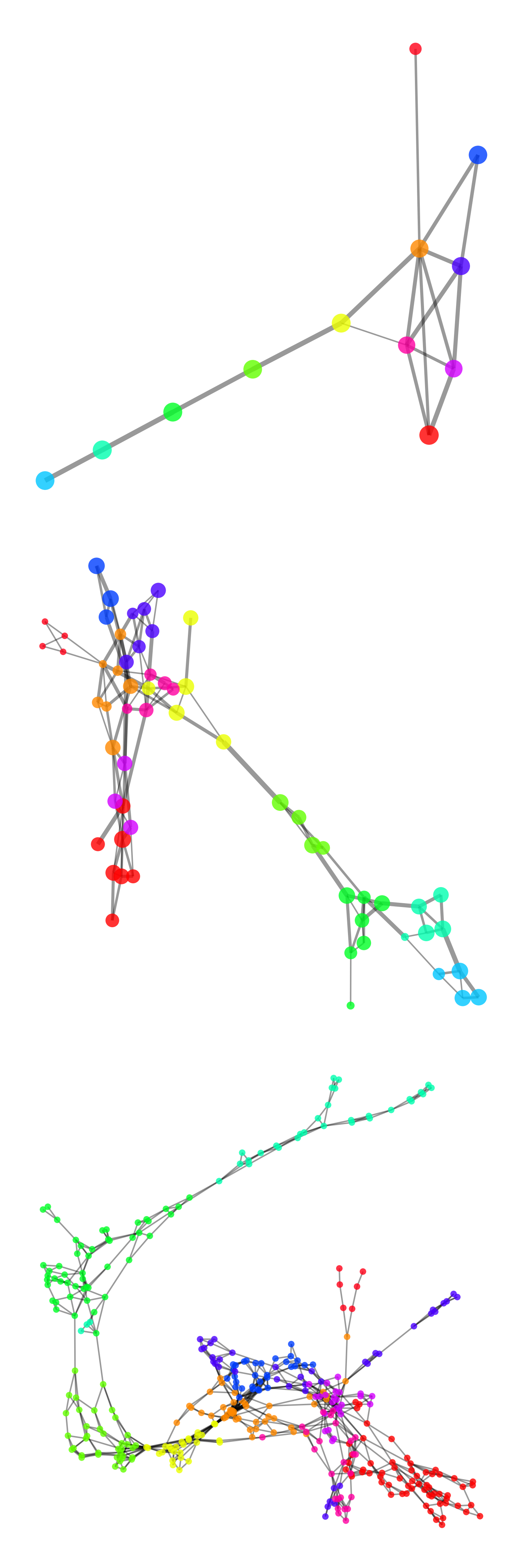}
    \end{minipage} \hfill
    \end{tabular}
\end{center}
\caption{Sample hyper-graphs at 3 levels generated by different models shown at the bottom with training samples at the top. }
\end{figure}

\begin{figure}[h!] \label{fig:sample_HG3}
\begin{center}
    \begin{minipage}{.32\linewidth}
    \begin{center} PPG \end{center}
    \adjincludegraphics[width=.32\textwidth,
		trim={0 {0\height} 0 {0.\height}},clip]{./FigureTable/PPGL_train_1.png}
    \adjincludegraphics[width=.32\textwidth,
		trim={0 {0\height} 0 {0.\height}},clip]{./FigureTable/PPGL_train_2.png}
    \adjincludegraphics[width=.32\textwidth,
		trim={0 {0\height} 0 {0.\height}},clip]{./FigureTable/PPGL_train_3.png} 
  \\ \hrule
    \adjincludegraphics[width=.32\textwidth,
		trim={0 {0\height} 0 {0.\height}},clip]{./FigureTable/PPGL_model_1.png}
	\adjincludegraphics[width=.32\textwidth,
		trim={0 {0\height} 0 {0.\height}},clip]{./FigureTable/PPGL_model_2.png}
	\adjincludegraphics[width=.32\textwidth,
		trim={0 {0\height} 0 {.0\height}},clip]{./FigureTable/PPGL_model_3.png}
    \end{minipage} \hfill
\end{center}
\end{figure}

\section{Experimental details} 
\label{apdx:model_arch}
GRAN enjoyed 7 layers of GNNs with one round of message passing.
Hidden dimensions are set to 128 for [Ego, RCG], 256 for Point Cloud and 512 for [Protein, PPG] for GRAN,
while we used smaller hidden dimensions of 64 for [Ego, RCG, Point Cloud, Protein] and 128 for PPG.

We use the same GNN architecture as GRAN, with 7 layers of GNNs with one round of message passing, but we choose smaller hidden dimensions, setting it to 64 for [Ego, RCG, Point Cloud], and 128 for [Protein, PPG].
For both GRAN and MRG, the number of mixtures is set $K=20$ and block size and stride are both set to 1.
In general, MRG models uses less parameters compared to GRAN. The comparison of total number of parameters of MRG and GRAN are listed in appendix \ref{apdx:model_arch}.
MRG models are training by the Adam optimizer \cite{kingma2014adam} with learning rate of 5e-4.

\begin{table}[h!]
\begin{center}
\caption{ \captionsize
Number of trainable parameters of GRAN vs MRG models.}
\label{table:model_sizes}
	\begin{minipage}{1.\linewidth}
		\centering
    
\begin{tabular}{
l |ccccc}
                    & {\textbf{Protein}} & {\textbf{3D Point Cloud}} & {\textbf{Ego}} & {\textbf{PPG}} & {\textbf{RCG}}    \\ \toprule
\textbf{GRAN}       & 1.75$e^{7}$  & 5.7$e^{6}$ & 1.5$e^{7}$  & 1.77$e^{7}$ & 1.??$e^{7}$ \\
\textbf{MRG-B}         & 7.36$e^{6}$  & 8.09$e^{6}$ & 7.31$e^{6}$  & 1.12$e^{7}$  & 4.17$e^{6}$  \\
\textbf{MRG}         & 9.06$e^{6}$  & 1.20$e^{7}$ & 8.96$e^{6}$  & 1.47$e^{7}$ & 5.94$e^{6}$   \\
\hline
\end{tabular}

	\end{minipage}
\end{center}
\end{table}

.

\end{appendices}

\end{document}